\documentclass[11pt]{article}
\usepackage[protrusion=false]{microtype} 

\usepackage{xcolor}
\definecolor{darkblue}{rgb}{0.0, 0.0, 0.4}
\usepackage[colorlinks=true,allcolors=darkblue]{hyperref}

\usepackage{graphicx}
\usepackage{booktabs} 

\topmargin 0.0cm
\oddsidemargin 0.2cm
\textwidth 16cm 
\textheight 21cm
\footskip 1.0cm

\usepackage{amsmath}
\usepackage{amssymb}
\usepackage{amsthm,thmtools}
\usepackage[shortlabels]{enumitem}
\usepackage{amsfonts}
\RequirePackage{algorithm}
\RequirePackage{algorithmic}
\usepackage{bm,upgreek}
\usepackage{mathrsfs}  

\usepackage[round]{natbib}

\bibliographystyle{abbrvnat}

\usepackage{calc}

\usepackage{subcaption}

\usepackage{authblk}

\newtheorem{lemma}{Lemma}
\newtheorem{theorem}{Theorem}

\newtheorem{definition}{Definition}

\DeclareMathOperator{\E}{E}
\DeclareMathOperator{\sym}{Sym}
\newcommand{\matr}[1]{\bm{#1}}

\DeclareFontFamily{U}{jkpmia}{}
\DeclareFontShape{U}{jkpmia}{m}{it}{<->s*jkpmia}{}
\DeclareFontShape{U}{jkpmia}{bx}{it}{<->s*jkpbmia}{}
\DeclareMathAlphabet{\mathfrak}{U}{jkpmia}{m}{it}
\SetMathAlphabet{\mathfrak}{bold}{U}{jkpmia}{bx}{it}

\usepackage[hang,flushmargin]{footmisc} 
\newcommand\blfootnote[1]{
  \begingroup
  \renewcommand\thefootnote{}\footnote{#1}%
  \addtocounter{footnote}{-1}%
  \endgroup
}

\title{Adversarial Monte Carlo Meta-Learning of Optimal Prediction Procedures}

\date{Version 2 (this version): September 25, 2020 \\ Version 1: February 27, 2020}

\author[1]{Alex  Luedtke}
\author[1]{Incheoul Chung}

\affil[1]{\footnotesize Department of Statistics, University of Washington, USA}

\author[2]{Oleg Sofrygin}
\affil[2]{\footnotesize   Kaiser Permanente Division of Research, Kaiser Permanente Northern California, USA} 

\begin{document}

\allowdisplaybreaks

\maketitle

\blfootnote{The authors thank Devin Didericksen for help in the early stages of this project. Generous support was provided by Amazon through an AWS Machine Learning Research Award and the NIH under award number DP2-LM013340. The content is solely the responsibility of the authors and does not necessarily represent the official views of Amazon or the NIH.}

\begin{abstract}
We frame the meta-learning of prediction procedures as a search for an optimal strategy in a two-player game. In this game, Nature selects a prior over distributions that generate labeled data consisting of features and an associated outcome, and the Predictor observes data sampled from a distribution drawn from this prior. The Predictor's objective is to learn a function that maps from a new feature to an estimate of the associated outcome. We establish that, under reasonable conditions, the Predictor has an optimal strategy that is equivariant to shifts and rescalings of the outcome and is invariant to permutations of the observations and to shifts, rescalings, and permutations of the features. We introduce a neural network architecture that satisfies these properties. The proposed strategy performs favorably compared to standard practice in both parametric and nonparametric experiments.
\end{abstract}

\section{Introduction}
\subsection{Problem Formulation}
Consider a data set consisting of observations $(X_1,Y_1),\ldots,(X_n,Y_n)$ drawn independently from a distribution $P$ belonging to some known 
model $\mathcal{P}$, where each $X_i$ is a continuously distributed feature with support contained in $\mathcal{X}:= \mathbb{R}^p$ and each $Y_i$ is an outcome with support contained in $\mathcal{Y}:= \mathbb{R}$. This data set can be written as $\matr{D}:=(\matr{X},\matr{Y})$, where $\matr{X}$ is the $n\times p$ matrix for which row $i$ contains $X_i$ and $\matr{Y}$ is the $n$-dimensional vector for which entry $i$ contains $Y_i$. The support of $\matr{D}$ is contained in $\mathcal{D}:=\mathcal{X}^n\times\mathcal{Y}^n$. The objective is to develop an estimator of the regression function $\mu_P$ that maps from $x_0$ to $\E_P[Y|X=x_0]$. An estimator $T$ belongs to the collection $\mathcal{T}$ of operators that take as input a data set $\matr{d}:=(\matr{x},\matr{y})$ and output a prediction function $T(\matr{d}) : \mathcal{X}\rightarrow\mathbb{R}$. Examples of estimators include the generalized linear models \citep{Nelder&Wedderburn1972}, random forests \citep{Breiman2001}, and gradient boosting machines \citep{Friedman2001}. We will also refer to estimators as prediction procedures. We focus on the case that the performance of an estimator is quantified via the standardized mean-squared error (MSE), namely
\begin{align*}
R(T,P) := \E_{P}\left[\int \tfrac{\left[T(\matr{D})(x_0)-\mu_P(x_0)\right]^2}{\sigma_P^{2}} dP_X(x_0)\right],
\end{align*}
where the expectation above is over the draw of $\matr{D}$ under sampling from $P$, $P_X$ denotes the marginal distribution of $X$ implied by $P$, and $\sigma_P^2$ denotes the variance of the error $\epsilon_P:=Y-\mu_P(X)$ when $(X,Y)\sim P$. Note that $\epsilon_P$ may be heteroscedastic. Throughout we assume that, for all $P\in\mathcal{P}$, $\E_P[Y^2]<\infty$ and $\epsilon_P$ is continuous. Note that the continuity of $\epsilon_P$ implies that $Y$ is continuous and that $\sigma_P^2>0$.

In practice, the distribution $P$ is not known, and therefore the risk $R(T,P)$ of a given estimator $T$ is also not known. We now describe three existing criteria for judging the performance of $T$ that do not rely on knowledge of $P$. The first criterion is the maximal risk $\sup_{P\in\mathcal{P}} R(T,P)$. If $T$ minimizes the maximal risk over $\mathcal{T}$, then $T$ is referred to as a minimax estimator. The second criterion is Bayesian in nature, namely the average of the risk $R(T,P)$ over draws of $P$ from a given prior $\Pi$ on $\mathcal{P}$. Specifically, this Bayes risk is defined as $r(T,\Pi):=\E_{\Pi}[R(T,P)]$. The estimator $T$ is referred to as a $\Pi$-Bayes estimator if it minimizes the Bayes risk over $\mathcal{T}$. A $\Pi$-Bayes estimator optimally incorporates the prior beliefs encoded in $\Pi$ with respect to the Bayes risk $r(\cdot,\Pi)$. Though this optimality property is useful in settings where $\Pi$ only encodes substantive prior knowledge, its utility is less clear otherwise. Indeed, as the function $r(\cdot,\Pi)$ generally depends on the choice of $\Pi$, it is possible that a $\Pi$-Bayes estimator $T$ is meaningfully suboptimal with respect to some other prior $\Pi'$, that is, that $r(T,\Pi)\gg \inf_{T'} r(T',\Pi')$; this phenomenon is especially prevalent when the sample size $n$ is small. Therefore, in settings where there is no substantive reason to favor a particular choice of $\Pi$, it is sensible to seek another approach for judging the performance of $T$. A natural criterion is the worst-case Bayes risk of $T$ over some user-specified collection $\Gamma$ of priors, namely $\sup_{\Pi\in \Gamma} r(T,\Pi)$. This criterion is referred to as the $\Gamma$-maximal Bayes risk of $T$. If $T$ is a minimizer of the $\Gamma$-maximal Bayes risk, then $T$ is referred to as a $\Gamma$-minimax estimator \citep{Berger1985}. Notably, in settings where $\Gamma$ contains all distributions with support in $\mathcal{P}$, 
the $\Gamma$-maximal Bayes risk is equivalent to the maximal risk. Consequently, in this special case, an estimator is $\Gamma$-minimax if and only if it is minimax. In settings where $\Gamma=\{\Pi\}$, an estimator is $\Gamma$-minimax if and only if it is $\Pi$-Bayes. Therefore, $\Gamma$-minimaxity provides a means of interpolating between the minimax and Bayesian criteria.

Though $\Gamma$-minimax estimators represent an appealing compromise between the Bayesian and minimax paradigms, they have seen limited use in practice because they are rarely available in closed form. In this work, we aim to overcome this challenge in the context of prediction by providing an iterative strategy for learning $\Gamma$-minimax prediction procedures. Due to the potentially high computational cost of this iterative scheme, a key focus of our work involves identifying conditions under which we can identify a small subclass of $\mathcal{T}$ that still contains a $\Gamma$-minimax estimator. This then makes it possible to optimize over this subclass, which we show in our experiments can dramatically improve the performance of our iterative scheme given a fixed computational budget.

Hereafter we refer to $\Gamma$-minimax estimators as `optimal', where it is to be understood that this notion of optimality relies on the choice of $\Gamma$.

\subsection{Overview of Our Strategy and Our Contributions}

Our strategy builds on two key results, each of which will be established later in this work. First, under conditions on $\mathcal{T}$ and $\Gamma$, there is a $\Gamma$-minimax estimator in the subclass $\mathcal{T}_e\subset\mathcal{T}$ of estimators that are 
equivariant to shifts and rescalings of the outcome and are invariant to permutations of the observations and to shifts, rescalings, and permutations of the features. Second, under further conditions, there is an equilibrium point $(T^\star,\Pi^\star)\in \mathcal{T}_e\times \Gamma$ such that 
\begin{align}
    \sup_{\Pi\in \Gamma} r(T^\star,\Pi)= r(T^\star,\Pi^\star) = \inf_{T\in \mathcal{T}_e} r(T,\Pi^\star). \label{eq:nash}
\end{align}
Upper bounding the right-hand side by $\sup_{\Pi\in\Gamma} \inf_{T\in \mathcal{T}_e} r(T,\Pi)$ and applying the max-min inequality shows that $T^\star$ is $\Gamma$-minimax. 
To find an equilibrium numerically, we propose to use adversarial Monte Carlo meta-learning (AMC)  \citep{Luedtkeetal2019}
to iteratively update an estimator in $\mathcal{T}_e$ and a prior in $\Gamma$. AMC is a form of stochastic gradient descent ascent \citep[e.g.,][]{Linetal2019} that can be used to learn optimal statistical procedures in general decision problems.

We make the following contributions:
\begin{itemize}[itemsep=0em,leftmargin=*]
    \item In Section~\ref{sec:characterization}, we characterize several equivariance properties of optimal estimators for a wide range of $(\mathcal{T},\Gamma)$.
    \item In Section~\ref{sec:amc}, we present a general framework for adversarially learning optimal prediction procedures.
    \item In Section~\ref{sec:architecture}, we present a novel neural network architecture for parameterizing estimators that satisfy the equivariance properties established in Section~\ref{sec:characterization}.
    \item In Section~\ref{sec:examples}, we apply our algorithm in two settings and learn estimators that outperform standard approaches in numerical experiments. In Section~\ref{sec:data}, we also evaluate the performance of these learned estimators in data experiments.
\end{itemize}
All proofs for the results in the above sections can be found in Section~\ref{sec:proofs}. Section~\ref{sec:extensions} describes possible extensions and provides concluding remarks.

To maximize the accessibility of our main theoretical results, we do not use group theoretic notation when presenting them in Sections \ref{sec:characterization} through \ref{sec:architecture}. However, when proving these results, we will heavily rely on tools from group theory; consequently, we adopt this notation in Section~\ref{sec:proofs}.

\subsection{Related Works}

The approach proposed in this work is a form of meta-learning \citep{Schmidhuber1987,Thrun&Pratt1998,Vilaltaetal2002}, where here each task is a regression problem. Our approach bears some similarity to existing approaches for learning supervised learning procedures against fixed priors, that is, in the special case that $\Gamma=\{\Pi\}$ for some fixed, user-specified prior $\Pi$. \citet{Hochreiteretal2001} advocated parameterizing $\mathcal{T}$ as a collection of long short-term (LSTM) networks \citep{Hochreiter&Schmidhuber1997}. 
A more recent work has advocated using memory-augmented neural networks rather than LSTMs in meta-learning tasks \citep{Santoroetal2016}. 
There have also been other works on the meta-learning of supervised learning procedures that are parameterized as neural networks \citep{Bosc2016,Vinyalsetal2016,Ravi&Larochelle2016}. 
Compared to these works, we \textit{adversarially} learn a prior $\Pi$ from a collection $\Gamma$ of priors, and we also formally characterize equivariance properties that will be satisfied by any optimal prediction procedure in a wide variety of problems. This characterization leads us to develop a neural network architecture designed for the prediction settings that we consider.

Model-agnostic meta-learning (MAML) is another popular meta-learning approach \citep{Finnetal2017}. In our setting, MAML aims to initialize the weights of a regression function estimate (parameterized as a neural network, for example) in such a way that, on any new task, only a limited number of gradient updates are needed. 
More recent approaches leverage the fact that, in certain settings, the initial estimate can instead be updated using a convex optimization algorithm \citep{Bertinettoetal2018,Leeetal2019}. 
To run any of these approaches, a prespecified prior over tasks is required. In our setting, these tasks take the form of data-generating distributions $P$. 
In contrast, our approach adversarially selects a prior from $\Gamma$. 

Two recent works \citep{Yinetal2018,Goldblumetal2019} developed meta-learning procedures that are trained under a different adversarial regime than that studied in the current work, namely under adversarial manipulation of one or both of the data set $\matr{D}$ and evaluation point $x_0$ \citep{Dalvietal2004}. 
This adversarial framework appears to be most useful when there truly is a malicious agent that aims to contaminate the data, which is not the case that we consider.
In contrast, in our setting, the adversarial nature of our framework allows us to ensure that our procedure will perform well regardless of the true value of $P$, while also taking into account prior knowledge that we may have.

Our approach is also related to existing works in the statistics and econometrics literatures on the numerical learning of minimax and $\Gamma$-minimax statistical decision rules. 
In finite-dimensional models, early works showed that it is possible to numerically learn minimax rules \citep{Nelson1966,Kempthorne1987} and, in settings where $\Gamma$ consists of all priors that satisfy a finite number of generalized moment conditions, $\Gamma$-minimax rules \citep{Noubiap&Seidel2001}. 
Other works have studied the $\Gamma$-minimax case where $\Gamma$ consists of priors that only place mass on a pre-specified finite set of distributions in $\mathcal{P}$, both for general decision problems \citep{Chamberlain2000} and for constructing confidence intervals \citep{Schafer2009}. Defining $\Gamma$ in this fashion modifies the statistical model $\mathcal{P}$ to only consist of finitely many distributions, which can be restrictive. A recent work introduced a new approach, termed AMC, for learning minimax procedures for general models $\mathcal{P}$ \citep{Luedtkeetal2019}. In contrast to earlier works, AMC does not require the explicit computation of a Bayes rule under any given prior, thereby improving the feasibility of this approach in moderate-to-high dimensional models. In their experiments, \citet{Luedtkeetal2019} used neural network classes to define the sets of allowable statistical procedures. 
Unlike the current work, none of the aforementioned studies identified or leveraged the equivariance properties that characterize optimal procedures. As we will see in our experiments, leveraging these properties can dramatically improve performance.

\subsection{Notation}
We now introduce the notation and conventions that we use. For a function $f : \mathcal{P}\rightarrow\mathcal{P}$, we let $\Pi\circ f^{-1}$ denote the pushforward measure that is defined as the distribution of $f(P)$ when $P\sim \Pi$. For any data set $\matr{d}=(\matr{x},\matr{y})$ and mapping $f$ with domain $\mathcal{D}$, we let $f(\matr{x},\matr{y}):= f(\matr{d})$. We take all vectors to be column vectors when they are involved in matrix operations. We write $\odot$ to mean the entrywise product and $a^{\odot 2}$ to mean $a\odot a$. For an $m_1\times m_2$ matrix $a$, we let $a_{i*}$ denote the $i^{\rm th}$ row, $a_{*j}$ denote the $j^{\rm th}$ column,  $\bar{a}:=\frac{1}{m_1}\sum_{i=1}^{m_1} a_{i*}$, and $s(a)^2:= \frac{1}{m_1}\sum_{i=1}^{m_1} (a_{i*}-\bar{a})^{\odot 2}$. When we standardize a vector $a$ as $[a-\bar{a}]/s(a)$, we always use the convention that $0/0=0$. 
We write $[a \,|\, b]$ to denote the column concatenation of two matrices. For an $m_1\times m_2\times m_3$ array $a$, we let $a_{i**}$ denote the $m_2\times m_3$ matrix with entry $(j,k)$ equal to $a_{ijk}$, $a_{i*k}$ denote the $m_2$-dimensional vector with entry $j$ equal to $a_{ijk}$, etc. For $a\in\mathbb{R}$ and $b\in\mathbb{R}^k$, we write $a+b$ to mean $a\matr{1}_k + b$.

\section{Characterization of Optimal Procedures}\label{sec:characterization}

\subsection{Optimality of Equivariant Estimators}\label{sec:optEquivar}

We start by presenting conditions that we impose on the collection of priors $\Gamma$. Let $\mathcal{A}$ denote the collection of all $n\times n$ permutation matrices, and let $\mathcal{B}$ denote the collection of all $p\times p$ permutation matrices. 
We suppose that $\Gamma$ is preserved under the following transformations:
\begin{enumerate}[series=model,label=P\arabic*.,ref=P\arabic*]
  \item \label{in:permpred} \textit{Permutations of features:} $\Pi\in\Gamma$ and $B\in\mathcal{B}$ implies that $\Pi\circ f_1^{-1}\in \Gamma$, where $f_1(P)$ is the distribution of $(B X,Y)$ when $(X,Y)\sim P$.
  \item \label{in:shiftrescalepred} \textit{Shifts and rescalings of features:} $\Pi\in\Gamma$, $a\in\mathbb{R}^p$, and $b\in(\mathbb{R}^+)^p$ implies that $\Pi\circ f_2^{-1}\in \Gamma$, where $f_2(P)$ is the distribution of $(a + b\odot X,Y)$ when $(X,Y)\sim P$.
  \item\label{in:shiftrescaleoutcome} \textit{Shift and rescaling of outcome:} $\Pi\in\Gamma$ and $\tilde{a}\in\mathbb{R}$ and $\tilde{b}>0$ implies that $\Pi\circ f_3^{-1}\in \Gamma$, where $f_3(P)$ is the distribution of $(X,\tilde{a}+\tilde{b}Y)$ when $(X,Y)\sim P$.
\end{enumerate}
The above conditions implicitly encode that $f_1(P)$, $f_2(P)$, and $f_3(P)$ all belong to $\mathcal{P}$ whenever $P\in\mathcal{P}$. 
Section~\ref{sec:groupHS} provides an alternative characterization of \ref{in:permpred}, \ref{in:shiftrescalepred}, and \ref{in:shiftrescaleoutcome} in terms of the preservation of $\Gamma$ under a certain group action.

We also assume that the signal-to-noise ratio (SNR) is finite --- 
this condition is important in light of the fact that the MSE risk that we consider standardizes by $\sigma_P^2$.
\begin{enumerate}[resume*=model]
  \item\label{as:finitesignal2} \textit{Finite SNR:} $\sup_{P\in\mathcal{P}} {\rm var}_P(\mu_P(X))/\sigma_P^2<\infty$.
\end{enumerate}

We now present conditions that we impose on the class of estimators $\mathcal{T}$. In what follows we let 
$\mathcal{D}_0:=\{(\matr{d},x_0)\in\mathcal{D}\times\mathcal{X} : s(\matr{y})\not=0,s(\matr{x})\not=\matr{0}_p\}$. For $(\matr{d},x_0)\in\mathcal{D}_0$, we let
\begin{align*}
    z(\matr{d},x_0):= \left(\frac{\matr{x}-\bar{\matr{x}}}{s(\matr{x})},\frac{\matr{y}-\bar{\matr{y}}}{s(\matr{y})},\frac{x_0-\bar{\matr{x}}}{s(\matr{x})},\bar{\matr{x}},\bar{\matr{y}},s(\matr{x}),s(\matr{y})\right),
\end{align*}
where we abuse notation and let $\frac{\matr{x}-\bar{\matr{x}}}{s(\matr{x})}$ represent the $n\times p$ matrix for which row $i$ is equal to $[x_i-\bar{\matr{x}}]/s(\matr{x})$. We let $\mathcal{Z}:=\{z(\matr{d},x_0) : (\matr{d},x_0)\in\mathcal{D}_0\}$. When it will not cause confusion, we will write $\matr{z}:=z(\matr{d},x_0)$. Fix $T\in\mathcal{T}$. Let $S_T : \mathcal{Z}\rightarrow\mathbb{R}$ denote the unique function that satisfies
\begin{align}
    T(\matr{d})(x_0) =\bar{\matr{y}} + s(\matr{y}) S_T\left(\matr{z}\right) \ \textnormal{ for all $(\matr{d},x_0)\in\mathcal{D}_0$.} \label{eq:STdef}
\end{align}
The uniqueness arises because $s(\matr{y})\not=0$ on $\mathcal{D}_0$. Because we have assumed that $X$ and $Y$ are continuous for all $P\in\mathcal{P}$, it follows that, for all $P\in\mathcal{P}$, the class $\mathcal{S}:=\{S_T : T\in\mathcal{T}\}$ uniquely characterizes the functions in $\mathcal{T}$ up to their behavior on subsets of $\mathcal{D}\times\mathcal{X}$ of $P$-probability zero. In what follows, we will impose smoothness constraints on $\mathcal{S}$, which in turn imposes constraints on $\mathcal{T}$. 
The first three conditions suffice to show that $\mathcal{S}$ is compact in the space $C(\mathcal{Z},\mathbb{R})$ of continuous $\mathcal{Z}\rightarrow \mathbb{R}$ functions equipped with the compact-open topology.
\begin{enumerate}[series=est,label=T\arabic*.,ref=T\arabic*]
  \item\label{as:ptwiseBdd} \textit{$\mathcal{S}$ is pointwise bounded:} For all $\matr{z}\in\mathcal{Z}$, $\sup_{S\in\mathcal{S}}|S(\matr{z})|<\infty$.
  \item\label{as:holder2} \textit{$\mathcal{S}$ is locally H\"{o}lder:} For all compact sets $\mathcal{K}\subset\mathcal{Z}$, there exists an $\alpha\in(0,1)$ such that
  \begin{align*}
  \sup_{S\in\mathcal{S},\matr{z}\not=\matr{z}'\in\mathcal{K}}\frac{|S(\matr{z})-S(\matr{z}')|}{\|\matr{z}-\matr{z}'\|_2^\alpha}<\infty, 
  \end{align*}
  where $\|\cdot\|_2$ denotes the Euclidean norm. We take the supremum to be zero if $\mathcal{K}$ is a singleton or is empty.
  \item\label{as:Sclosed2} \textit{$\mathcal{S}$ is sequentially closed in the topology of compact convergence:} If $\{S_j\}_{j=1}^\infty$ is a sequence in $\mathcal{S}$ and $S_j\rightarrow S$ compactly in the sense that, for all compact $\mathcal{K}\subset \mathcal{Z}$, $\sup_{\matr{z}\in\mathcal{K}}|S_j(\matr{z})-S(\matr{z})|\rightarrow 0$, then $S\in\mathcal{S}$.
\end{enumerate}
The following conditions ensure that $\mathcal{S}$ is invariant to certain preprocessings of the data, in the sense that, for any function $S\in \mathcal{S}$, the function that first preprocesses the data in an appropriate fashion and then applies $S$ to this data is itself in $\mathcal{S}$. 
When formulating these conditions, we write
$z(\matr{d},x_0)$ to mean an element of $\mathcal{Z}$. 
Because $z$ is a bijection between $\mathcal{D}_0$ and $\mathcal{Z}$,
 it is possible to recover $(\matr{d},x_0)$ from $z(\matr{d},x_0)$.  
\begin{enumerate}[resume*=est]
  \item\label{as:SpreservedPerms} \textit{Permutations:} For all $S\in\mathcal{S}$, $A\in\mathcal{A}$, and $B\in\mathcal{B}$, $z(\matr{d},x_0)\mapsto S(z((A\matr{x} B,A\matr{y}),B^\top x_0))$ is in $\mathcal{S}$.
  \item\label{as:SpreservedShifts} \textit{Shifts and rescalings:} For all $S\in\mathcal{S}$, $a\in\mathbb{R}^p$, $b\in (\mathbb{R}^+)^p$, $\tilde{a}\in\mathbb{R}$, and $\tilde{b}>0$, the function $z(\matr{d},x_0)\mapsto S(z((\matr{x}^{a,b},\tilde{a}+\tilde{b}\matr{y}),a + b\odot x_0))$ is in $\mathcal{S}$, where $\matr{x}^{a,b}$ is the $n\times p$ matrix with row $i$ equal to $a + b\odot \matr{x}_{i*}$.
\end{enumerate}
Conditions \ref{as:ptwiseBdd}-\ref{as:Sclosed2} are satisfied if, for some $c,\alpha>0$ and $F : \mathcal{Z}\rightarrow \mathbb{R}^+$, $\mathcal{S}$ is the collection of all $S : \mathcal{Z}\rightarrow \mathbb{R}$ such that $ |S(\matr{z})|\le F(\matr{z})$ and $|S(\matr{z})-S(\matr{z}')|\le c \|\matr{z}-\matr{z}'\|_2^\alpha$  for all $\matr{z},\matr{z}'\in\mathcal{Z}$.
If $F\circ z$ is also invariant to permutations, shifts, and rescalings, then \ref{as:SpreservedPerms} and \ref{as:SpreservedShifts} also hold. Conditions \ref{as:ptwiseBdd}-\ref{as:SpreservedShifts} are also satisfied by many other classes $\mathcal{S}$.

Let $\mathcal{T}_e\subseteq \mathcal{T}$ denote the class of estimators that are 
equivariant to shifts and rescalings of the outcome and are invariant to permutations of the observations and to shifts, rescalings, and permutations of the features. 
Specifically, $\mathcal{T}_e$ consists of functions in $\mathcal{T}$ satisfying the following properties for all pairs $(\matr{d},x_0)$ of data sets and features in $\mathcal{D}_0$, permutation matrices $A\in\mathcal{A}$ and $B\in\mathcal{B}$, shifts $a\in\mathbb{R}^p$ and $\tilde{a}\in\mathbb{R}$, and rescalings $b\in(\mathbb{R}^+)^p$ and $\tilde{b}>0$:
\begin{align}
    T(A\matr{x}B, A \matr{y})(B^\top x_0) &= T(\matr{d})(x_0), \label{eq:Tequivar1} \\
T(\matr{x}^{a,b}, \tilde{a} + \tilde{b} \matr{y})(a + b\odot x_0) &= \tilde{a} + \tilde{b} T(\matr{d})(x_0), \label{eq:Tequivar2}
\end{align}
The following result shows that the $\Gamma$-maximal risk is the same over $\mathcal{T}$ and $\mathcal{T}_e\subseteq\mathcal{T}$.
\begin{theorem}\label{thm:HS} Under \ref{in:permpred}-\ref{as:finitesignal2} and \ref{as:ptwiseBdd}-\ref{as:SpreservedShifts},
\begin{align*}
\inf_{T\in\mathcal{T}} \sup_{\Pi\in\Gamma} r(T,\Pi) = \inf_{T\in\mathcal{T}_e} \sup_{\Pi\in\Gamma} r(T,\Pi).
\end{align*}
\end{theorem}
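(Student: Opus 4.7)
The plan is to execute a Hunt--Stein argument. Let $G$ be the group generated by the observation permutations $A\in\mathcal{A}$ and feature permutations $B\in\mathcal{B}$ together with the coordinatewise positive affine transformations of features and outcome appearing in \ref{in:permpred}--\ref{in:shiftrescaleoutcome}. This group acts compatibly on $\mathcal{D}_0$, on $\mathcal{P}$ via pushforward, and on $\mathcal{T}$ by the conjugation rule dictated by \eqref{eq:Tequivar1}--\eqref{eq:Tequivar2}. A direct change-of-variables computation using the definitions of $\mu_P$ and $\sigma_P^2$ yields the key invariance $R(g\cdot T,\,g\cdot P)=R(T,P)$ for every $g\in G$. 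Combining this with the pushforward-closure of $\Gamma$ from \ref{in:permpred}--\ref{in:shiftrescaleoutcome} gives $\sup_{\Pi\in\Gamma}r(g\cdot T,\Pi)=\sup_{\Pi\in\Gamma}r(T,\Pi)$.

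Next, $G$ is amenable: $\mathcal{A}$ and $\mathcal{B}$ are finite, the coordinatewise affine factor $\mathrm{Aff}(\mathbb{R})^{p+1}$ is solvable, and amenability is closed under semidirect products. Fix $T\in\mathcal{T}$ and identify it with $S_T\in\mathcal{S}$ via \eqref{eq:STdef}. By \ref{as:SpreservedPerms}--\ref{as:SpreservedShifts}, the orbit $\{g\cdot S_T:g\in G\}$ lies inside $\mathcal{S}$, and by the observation made in the paper just after \ref{as:Sclosed2} (that \ref{as:ptwiseBdd}--\ref{as:Sclosed2} combined with Arzel\`a--Ascoli make $\mathcal{S}$ compact in the compact-open topology on $C(\mathcal{Z},\mathbb{R})$), its closed convex hull $K$ is itself compact by Mazur's theorem. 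The set $K$ is $G$-invariant and $G$ acts on it continuously and affinely, so by Day's fixed-point theorem for amenable group actions on compact convex subsets of locally convex spaces there exists a common fixed point $S^\star\in K$; the corresponding $T^\star$ satisfies \eqref{eq:Tequivar1}--\eqref{eq:Tequivar2} and hence lies in $\mathcal{T}_e$.

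To compare risks, approximate $S^\star$ by finite convex combinations $\sum_i\lambda_i\,(g_i\cdot S_T)$. Convexity of squared-error loss in the predicted value, together with the pointwise invariance $R(g\cdot T,P)=R(T,g^{-1}\cdot P)$, yields $R(T^\star,P)\le R(T,P)$ after passing to the limit, where the interchange of limit with the outer integral defining $R$ is justified by the local H\"older bound \ref{as:holder2} and the uniform SNR control from \ref{as:finitesignal2}. Integrating against any $\Pi\in\Gamma$ and taking the supremum gives $\sup_{\Pi\in\Gamma}r(T^\star,\Pi)\le\sup_{\Pi\in\Gamma}r(T,\Pi)$. Since $T\in\mathcal{T}$ was arbitrary, $\inf_{T\in\mathcal{T}_e}\sup_{\Pi\in\Gamma}r(T,\Pi)\le\inf_{T\in\mathcal{T}}\sup_{\Pi\in\Gamma}r(T,\Pi)$, and the reverse inequality is immediate from $\mathcal{T}_e\subseteq\mathcal{T}$.

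The principal obstacle is handling the non-compact factor $\mathrm{Aff}(\mathbb{R})^{p+1}$, since naive uniform averaging over it is ill-defined; this is exactly why the regularity conditions \ref{as:ptwiseBdd}--\ref{as:Sclosed2} are imposed, providing the compactness that lets the amenable-group fixed-point theorem replace a direct symmetrization. A secondary technical subtlety is verifying that the fixed point $S^\star$ in the closed convex hull actually corresponds to a genuine member of $\mathcal{T}$; this holds automatically when $\mathcal{S}$ is convex (as in the explicit Lipschitz-ball example described after \ref{as:SpreservedShifts}) and in general requires arguing that the closed convex hull is absorbed back into $\mathcal{S}$ via \ref{as:Sclosed2} together with a suitable interpretation of the preservation properties \ref{as:SpreservedPerms}--\ref{as:SpreservedShifts}.
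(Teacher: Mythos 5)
Your overall strategy is the same as the paper's: realize the shift/rescale/permutation symmetries as the action of an amenable group $\mathcal{G}_0$ on the standardized prediction rules $\mathcal{S}$, obtain a $\mathcal{G}_0$-invariant fixed point via Day's fixed-point theorem, and translate back to an estimator in $\mathcal{T}_e$. However, you apply Day to a different set than the paper does, and there is a genuine flaw in the step that compares risks.

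You take $K$ to be the closed convex hull of the orbit $\{g\cdot S_T : g\in\mathcal{G}_0\}$. The paper instead applies Day's theorem to the $\Gamma$-maximal-risk sublevel set
$\mathcal{K}=\{S\in\mathcal{S} : \sup_{\Pi\in\Gamma} r_0(S,\Pi)\le \sup_{\Pi\in\Gamma} r_0(S_0,\Pi)\}$,
shown to be compact (via lower semicontinuity of $r_0(\cdot,\Pi)$, itself established with \ref{as:holder2} and \ref{as:finitesignal2}), convex, and $\mathcal{G}_0$-invariant. This choice is not cosmetic: with the paper's $\mathcal{K}$, the fixed point automatically satisfies the desired risk inequality because it belongs to the sublevel set, and $\mathcal{K}\subseteq\mathcal{S}$ by construction so the fixed point is a bona fide element of $\mathcal{S}$. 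With your $K$, neither property is automatic, and you must still argue that $S^\star$ lies in $\mathcal{S}$ (as you acknowledge) \emph{and} separately bound its risk.

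The risk-comparison step as you wrote it does not go through. You claim that convexity of the loss together with $R(g\cdot T,P)=R(T,g^{-1}\cdot P)$ gives the \emph{pointwise} bound $R(T^\star,P)\le R(T,P)$. This is false. Convexity applied to the finite combination $T^\lambda := \sum_i\lambda_i\,(g_i\cdot T)$ gives
$R(T^\lambda,P)\le \sum_i\lambda_i R(g_i\cdot T,P)=\sum_i\lambda_i R(T,g_i^{-1}\cdot P)$,
which is an average of the risk of $T$ over the transported laws $g_i^{-1}\cdot P$, not a bound by $R(T,P)$; nothing forces $R(T,g_i^{-1}\cdot P)\le R(T,P)$. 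The correct argument must pass to the $\Gamma$-maximal Bayes risk first: integrating against $\Pi$ gives $r(T^\lambda,\Pi)\le \sum_i\lambda_i r(T,g_i^{-1}\cdot\Pi)$, then taking $\sup_{\Pi\in\Gamma}$ and using the $\Gamma$-preservation in \ref{in:permpred}--\ref{in:shiftrescaleoutcome} gives $\sup_\Pi r(T^\lambda,\Pi)\le \sup_\Pi r(T,\Pi)$, and only then can one pass to the limit in $S^\star$ using the lower semicontinuity of $S\mapsto r_0(S,\Pi)$ on $\mathcal{S}$. The ``uniform SNR control from \ref{as:finitesignal2}'' is used inside that semicontinuity lemma, not to salvage a pointwise inequality. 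Because of this, the proposal as stated needs repair, though the intended conclusion is provable once the comparison is made at the level of $\Gamma$-maximal Bayes risk.
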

The above is a variant of the Hunt-Stein theorem \citep{Hunt&Stein1946}. Our proof, which draws inspiration from \citet{LeCam2012}, consists in showing that our prediction problem is invariant to the action of an amenable group, and subsequently applying Day's fixed-point theorem \citep{Day1961} to show that, for all $T\in\mathcal{T}$, the collection of $T'$ for which $\sup_{\Pi\in\Gamma}r(T',\Pi)\le \sup_{\Pi\in\Gamma} r(T,\Pi)$ has nonempty intersection with $\mathcal{T}_e$.

\subsection{Focusing Only on Distributions with Standardized Predictors and Outcome}\label{sec:Gamma1}

Theorem~\ref{thm:HS} suggests restricting attention to estimators in $\mathcal{T}_e$ when trying to learn a $\Gamma$-minimax estimator. We now show that, once this restriction has been made, it also suffices to restrict attention to a smaller collection of priors $\Gamma_1$ when identifying a least favorable prior. In fact, we show something slightly stronger, namely that the restriction to $\Gamma_1$ can be made even if optimal estimators are sought over the richer class $\widetilde{\mathcal{T}}_e\supseteq\mathcal{T}_e$ of estimators that satisfy the equivariance property \eqref{eq:Tequivar2} but do not necessarily satisfy \eqref{eq:Tequivar1}.

We now define $\Gamma_1$. Let $h(P)$ denote the distribution of
\begin{align*}
    \left(\left(\frac{X_j-\E_P[X_j]}{{\rm var}_P(X_j)^{1/2}}\right)_{j=1}^p,\frac{Y-\E_P[Y]}{\sigma_P}\right)
\end{align*}
when $(X,Y)\sim P$. Note that here, and here only, we have written $X_j$ to denote the $j^{\rm th}$ feature rather than the $j^{\rm th}$ observation. Also let $\Gamma_1:=\{\Pi\circ h^{-1} : \Pi\in\Gamma\}$, which is a collection of priors on $\mathcal{P}_1:=\{h(P) : P\in\mathcal{P}\}$. 
\begin{theorem}\label{thm:Gamma1restriction}
If \ref{in:shiftrescalepred} and \ref{in:shiftrescaleoutcome} hold and all $T\in\mathcal{T}$ satisfy \eqref{eq:Tequivar2}, then $T^\star$ is $\Gamma$-minimax if and only if it is $\Gamma_1$-minimax.
\end{theorem}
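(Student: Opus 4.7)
The plan is to show that for every $T \in \mathcal{T}$, the $\Gamma$-maximal and $\Gamma_1$-maximal Bayes risks of $T$ coincide; the theorem then follows at once, since $T^\star$ minimizes one of these over $\mathcal{T}$ iff it minimizes the other. The driving observation is that, because every $T \in \mathcal{T}$ satisfies \eqref{eq:Tequivar2} and $R(T, P)$ divides the squared error by $\sigma_P^2$, the value $R(T, P)$ depends on $P$ only through its standardization $h(P)$.

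First I would establish the pointwise identity $R(T, P) = R(T, h(P))$ for every $P \in \mathcal{P}$ and $T \in \mathcal{T}$. Fix $P$ and set $m_j = \E_P[X_j]$, $v_j = {\rm var}_P(X_j)^{1/2}$, $\bar m = \E_P[Y]$, and $\bar v = \sigma_P$; continuity of $X$ and of $\epsilon_P$ ensures that all $v_j$ and $\bar v$ are strictly positive. The map $h$ is the composition of the feature transform in \ref{in:shiftrescalepred} with parameters $a_j = -m_j/v_j$ and $b_j = 1/v_j$ and the outcome transform in \ref{in:shiftrescaleoutcome} with parameters $\tilde a = -\bar m/\bar v$ and $\tilde b = 1/\bar v$, so \ref{in:shiftrescalepred}-\ref{in:shiftrescaleoutcome} give $h(P) \in \mathcal{P}$. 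A routine calculation then shows $\sigma_{h(P)}^2 = 1$ and $\mu_{h(P)}(\tilde x_0) = (\mu_P(x_0) - \bar m)/\bar v$ for $x_0 = m + v \odot \tilde x_0$. Applying \eqref{eq:Tequivar2} with the same $(a, b, \tilde a, \tilde b)$ yields $T(\tilde{\matr d})(\tilde x_0) = (T(\matr d)(x_0) - \bar m)/\bar v$, where $\tilde{\matr d}$ denotes the standardization of $\matr d$. Subtracting $\mu_{h(P)}(\tilde x_0)$, squaring, and dividing by $\sigma_{h(P)}^2 = 1$ produces exactly $[T(\matr d)(x_0) - \mu_P(x_0)]^2/\sigma_P^2$, and a change of variables from $(\tilde{\matr D}, \tilde x_0) \sim h(P)^{\otimes(n+1)}$ back to $(\matr D, x_0) \sim P^{\otimes(n+1)}$ yields $R(T, h(P)) = R(T, P)$.

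With this identity in hand, for every $\Pi \in \Gamma$ the tower property and one further change of variables give
\begin{align*}
r(T, \Pi) = \E_{P \sim \Pi}[R(T, P)] = \E_{P \sim \Pi}[R(T, h(P))] = \E_{P' \sim \Pi \circ h^{-1}}[R(T, P')] = r(T, \Pi \circ h^{-1}).
\end{align*}
Since $\Gamma_1 = \{\Pi \circ h^{-1} : \Pi \in \Gamma\}$, taking suprema gives $\sup_{\Pi \in \Gamma} r(T, \Pi) = \sup_{\Pi_1 \in \Gamma_1} r(T, \Pi_1)$ for every $T \in \mathcal{T}$, so the $\Gamma$- and $\Gamma_1$-minimax problems over $\mathcal{T}$ are identical.

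The argument is essentially bookkeeping and there is no deep obstacle. The step requiring the most care is verifying that the chosen $(a, b, \tilde a, \tilde b)$ really induce $h$ at the level of joint distributions, so that \ref{in:shiftrescalepred}-\ref{in:shiftrescaleoutcome} genuinely place $h(P)$ in $\mathcal{P}$, together with the observation that all marginal standard deviations are strictly positive, which is needed for $(a, b)$ to be a valid argument to the equivariance \eqref{eq:Tequivar2}.
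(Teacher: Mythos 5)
Your proof is correct and follows essentially the same route as the paper: both establish the pointwise invariance $R(T,h(P)) = R(T,P)$ using \eqref{eq:Tequivar2} together with a change of variables, then integrate over $\Pi$ to get $r(T,\Pi)=r(T,\Pi\circ h^{-1})$ and take suprema. The paper phrases this via its group-action machinery (Lemma~\ref{lem:Priskinvar} and the element $g_P\in\mathcal{G}_0$ implementing the standardization), whereas you carry out the same computation directly; the content is the same.
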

We conclude by noting that, under \ref{in:shiftrescalepred} and \ref{in:shiftrescaleoutcome}, $\mathcal{P}_1$ consists precisely of those $P\in \mathcal{P}$ that satisfy:
\begin{align}
    &\E_P[X]=\matr{0}_p, \hspace{0.5em} \E_P[X^{\odot 2}]=\matr{1}_p, \hspace{0.5em} \E_P[Y]=0,\hspace{0.5em} \sigma_P^2=1. \label{eq:P1}
\end{align}

\subsection{Existence of an Equilibrium Point}
We also make the following additional assumption on $\mathcal{S}$.
\begin{enumerate}[resume*=est]
  \item\label{as:Sconv2} \textit{$\mathcal{S}$ is convex:} $S_1,S_2\in\mathcal{S}$ and $\alpha\in(0,1)$ implies that $\matr{z}\mapsto \alpha S_1(\matr{z}) + (1-\alpha) S_2(\matr{z})$ is in $\mathcal{S}$.
\end{enumerate}
We also impose the following condition on the size of the collection of distributions $\mathcal{P}_1$ and the collection of priors $\Gamma_1$, which in turn imposes restrictions on $\mathcal{P}$ and $\Gamma$.
\begin{enumerate}[resume*=model]
  \item\label{as:Mcompact} There exists a metric $\rho$ on $\mathcal{P}_1$ such that (i) $(\mathcal{P}_1,\rho)$ is a complete separable metric space, (ii) $\Gamma_1$ is tight in the sense that, for all $\varepsilon>0$, there exists a compact set $\mathcal{K}$ in $(\mathcal{P}_1,\rho)$ such that $\Pi(\mathcal{K})\ge 1-\varepsilon$ for all $\Pi\in\Gamma_1$, and (iii) for all $T\in\mathcal{T}_e$, $P\mapsto R(T,P)$ is upper semi-continuous and bounded from above on $(\mathcal{P}_1,\rho)$.
\end{enumerate}
In Appendix~\ref{app:metricDiscussion}, we give examples of parametric and nonparametric settings where \ref{as:Mcompact} is applicable.

So far, the only conditions that we have required on the $\sigma$-algebra $\mathscr{A}$ of $\mathcal{P}$ are that $h$ and $R(T,\cdot)$, $T\in\mathcal{T}$, are measurable. 
In this subsection, and in this subsection only, we add the assumptions that \ref{as:Mcompact} holds and that $\mathscr{A}$ is such that $\{A\cap \mathcal{P}_1 : A\in\mathscr{A}\}$ equals $\mathscr{B}_1$, where $\mathscr{B}_1$ is the collection of Borel sets on $(\mathcal{P}_1,\rho)$.

We will also assume the following two conditions on $\Gamma_1$.
\begin{enumerate}[resume*=model]
    \item \label{as:Gamma1closed} \textit{$\Gamma_1$ is closed in the topology of weak convergence:} if $\{\Pi_j\}_{j=1}^\infty$ is a sequence in $\Gamma_1$ that converges weakly to $\Pi$, then $\Pi\in\Gamma_1$.
    \item \label{as:Gamma1conv} \textit{$\Gamma_1$ is convex:} for all $\Pi_1,\Pi_2\in\Gamma$ and $\alpha\in(0,1)$, the mixture distribution $\alpha\Pi_1+(1-\alpha)\Pi_2$ is in $\Gamma$.
\end{enumerate}
Under Conditions~\ref{as:Mcompact} and \ref{as:Gamma1closed}, Prokhorov's theorem \citep{Billingsley1999} can be used to establish that $\Pi_1$ is compact in the topology of weak convergence. This compactness will be useful for proving the following result, which shows that there is an equilibrium point under our conditions.
\begin{theorem}\label{thm:equilibrium}
If \ref{as:ptwiseBdd}-\ref{as:Sclosed2}, \ref{as:Sconv2}, and \ref{in:shiftrescalepred}-\ref{as:Gamma1conv} hold, then there exists $T^\star\in \mathcal{T}_e$ and $\Pi^\star\in\Gamma_1$ such that, for all $T\in\mathcal{T}_e$ and $\Pi\in\Gamma_1$, it is true that $r(T^\star,\Pi)\le r(T^\star,\Pi^\star)\le r(T,\Pi^\star)$.
\end{theorem}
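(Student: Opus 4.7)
The plan is to cast the problem as a two-player zero-sum game and to construct a saddle point on $\mathcal{T}_e \times \Gamma_1$ by combining Sion's minimax theorem with attainment arguments on compact strategy sets. I identify each $T \in \mathcal{T}_e$ with $S_T \in \mathcal{S}$ through \eqref{eq:STdef}, and let $\mathcal{S}_e \subseteq \mathcal{S}$ denote the image of $\mathcal{T}_e$ under this bijection; all topological statements about $\mathcal{T}_e$ should be read via this identification.

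The first step is to verify the structural properties of the two strategy sets. The paper has already indicated that \ref{as:ptwiseBdd}--\ref{as:Sclosed2} imply that $\mathcal{S}$ is compact in the compact-open topology on $C(\mathcal{Z},\mathbb{R})$ (Arzelà--Ascoli plus sequential closedness), and \ref{as:Sconv2} makes it convex. The equivariance conditions \eqref{eq:Tequivar1}--\eqref{eq:Tequivar2}, pushed through the definition of $S_T$, translate into invariance constraints of the form $S \circ \phi = S$ for a family of continuous maps $\phi : \mathcal{Z} \to \mathcal{Z}$. These are closed linear conditions in the compact-open topology, so $\mathcal{S}_e$ is a closed, hence compact, convex subset of $\mathcal{S}$. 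On the prior side, tightness from \ref{as:Mcompact}(ii) combined with closedness \ref{as:Gamma1closed} yields weak-topology compactness of $\Gamma_1$ via Prokhorov's theorem, while \ref{as:Gamma1conv} supplies convexity. Because $\mathcal{P}_1$ is Polish, the weak topology on $\Gamma_1$ is metrizable, and the compact-open topology on $\mathcal{S}_e$ is metrizable since $\mathcal{Z}$ is second countable.

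Next I verify the convex-concave and semicontinuity structure of $r$ on $\mathcal{T}_e \times \Gamma_1$. Since $r(T,\Pi) = \int R(T,P)\,d\Pi(P)$ is affine in $\Pi$, and \ref{as:Mcompact}(iii) provides a bounded, USC integrand, the portmanteau theorem gives upper semicontinuity of $\Pi \mapsto r(T,\Pi)$ in the weak topology. Convexity of $T \mapsto R(T,P)$, and hence of $T \mapsto r(T,\Pi)$, follows from the quadratic form of the MSE and the affine dependence $T(\matr{D})(x_0) = \bar{\matr{Y}} + s(\matr{Y}) S_T(\matr{z})$. For lower semicontinuity, if $S_j \to S$ compactly, then $T_{S_j}(\matr{d})(x_0) \to T_S(\matr{d})(x_0)$ pointwise, so two applications of Fatou's lemma (first inside $R(\cdot,P)$ for the nonnegative squared integrand, and then for the outer integral against $\Pi$) yield $r(T_S,\Pi) \le \liminf_j r(T_{S_j},\Pi)$; metrizability of $\mathcal{S}_e$ upgrades this sequential lower semicontinuity to the topological version.

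With these ingredients, Sion's minimax theorem applied to the compact convex sets $\mathcal{T}_e$ and $\Gamma_1$ gives
\begin{align*}
v \;:=\; \inf_{T\in\mathcal{T}_e}\sup_{\Pi\in\Gamma_1} r(T,\Pi) \;=\; \sup_{\Pi\in\Gamma_1}\inf_{T\in\mathcal{T}_e} r(T,\Pi).
\end{align*}
Since $T \mapsto \sup_\Pi r(T,\Pi)$ is LSC (a supremum of LSC functions) on the compact set $\mathcal{T}_e$, it attains its infimum at some $T^\star$; likewise, $\Pi \mapsto \inf_T r(T,\Pi)$ is USC on the compact $\Gamma_1$ and attains its supremum at some $\Pi^\star$. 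The chain
\begin{align*}
v = \inf_{T} r(T,\Pi^\star) \le r(T^\star,\Pi^\star) \le \sup_{\Pi} r(T^\star,\Pi) = v
\end{align*}
forces $r(T^\star,\Pi^\star) = v$ and both saddle-point inequalities of the statement. The main obstacle I anticipate is keeping the identification between $\mathcal{T}_e$ and $\mathcal{S}_e$ clean, particularly verifying that the equivariance constraints genuinely become closed continuous-composition conditions and that the Fatou-based LSC argument survives the integration over $P$ at every $S \in \mathcal{S}_e$; the rest is a fairly standard application of the minimax machinery.
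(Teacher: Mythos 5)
Your proposal is correct and its overall architecture---compactness of both strategy sets, convexity, two-sided semicontinuity, a minimax equality, and then separate attainment of both $\inf$ and $\sup$ to extract a saddle point---matches the paper's proof structure. You differ in two of the ingredients, and both differences are legitimate. First, the paper invokes Fan's 1953 minimax theorem, which operates with ``concavelike/convexlike'' functions on compact Hausdorff spaces and thus avoids needing linear-space structure on $\Gamma_1$; you invoke Sion's minimax theorem, which requires convex subsets of topological vector spaces but uses the better-known quasi-convex/quasi-concave hypotheses. Since $\Gamma_1$ sits inside the locally convex space of finite signed measures with the weak topology and $\mathcal{S}_e \subset C(\mathcal{Z},\mathbb{R})$, Sion applies and gives the same equality. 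Second, for lower semicontinuity of $S \mapsto r_1(S,\Pi)$, the paper works harder: it proves the Bayes risk is a supremum over compact sets $\mathcal{K}$ of functionals $f_{\mathcal{K}}$, then shows each $f_{\mathcal{K}}$ is Lipschitz in $\|S_1-S_2\|_{\infty,\mathcal{K}}$ via explicit bounds (using the finite-SNR condition \ref{as:finitesignal2} to control moments), and concludes LSC since a supremum of continuous maps is LSC. Your two-stage Fatou argument is cleaner: compact convergence implies pointwise convergence of the squared loss, which is nonnegative, so Fatou gives LSC of $R(\cdot,P)$ for each fixed $P$, and a second Fatou pass gives LSC after integration against $\Pi$; this route does not even need the finite-SNR bound to obtain LSC, though that condition remains part of the theorem's hypotheses for other reasons. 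Everything else (Prokhorov via \ref{as:Mcompact} and \ref{as:Gamma1closed} for compactness of $\Gamma_1$, Arzelà--Ascoli plus closedness under the group action for compactness of $\mathcal{S}_e$, Portmanteau for USC of $\Pi \mapsto r_1(S,\Pi)$, and the final attainment chain) is as in the paper.
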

Combining the above with Lemma~\ref{lem:Gamma1} in Section~\ref{sec:Gamma1proofs} establishes \eqref{eq:nash}, that is, that the conclusion of Theorem~\ref{thm:equilibrium} remains valid if $\Pi$ varies over $\Gamma$ rather than over $\Gamma_1$.

\section{AMC Meta-Learning Algorithm}\label{sec:amc}
We now present an AMC meta-learning strategy for obtaining a $\Gamma$-minimax estimator within some class $\mathcal{T}$. Here we suppose that $\mathcal{T}=\{T_t : t\in\tau\}$, where each $T_t$ is an estimator indexed by a finite-dimensional parameter $t$ that belongs to some set $\tau$. We note that this framework encapsulates: model-based approaches \citep[e.g.,][]{Hochreiteretal2001}, where $T_t$ can be evaluated by a single pass of $(\matr{d},x_0)$ through a neural network with weights $t$; optimization-based approaches, where $t$ are the initial weights of some estimate that are subsequently optimized based on $\matr{d}$ \citep[e.g.,][]{Finnetal2017}; and metric-based approaches, where $t$ indexes a measure of similarity $\alpha_t$ that is used to obtain an estimate of the form $\sum_{i=1}^n \alpha_t(x_i,x_0) y_i$ \citep[e.g.,][]{Vinyalsetal2016}.

We suppose that all estimators in $\mathcal{T}$ satisfy the equivariance property \eqref{eq:Tequivar2}, which can be arranged by prestandardizing the outcome and features and then poststandardizing the final prediction --- see Algorithm~\ref{alg:architecture} for an example. 
Since all $T\in\mathcal{T}$ satisfy \eqref{eq:Tequivar2}, Theorem~\ref{thm:Gamma1restriction} shows that it suffices to consider a collection $\Gamma_1$ of priors with support on $\mathcal{P}_1$, that is, so that, for all $\Pi\in \Gamma_1$, $P\sim \Pi$ satisfies \eqref{eq:P1} almost surely. 
To ensure that the priors 
are easy to sample from, we parameterize them via generator functions $G_g$ \citep{Goodfellowetal2014} that are indexed by a finite-dimensional $g$ that belongs to some set $\gamma$. Each $G_g$ takes as input a source of noise $U$ drawn from a distribution $\nu_u$ and outputs the parameters indexing a distribution in $\mathcal{P}$ \citep{Luedtkeetal2019}. 
Though this form of sampling limits to parametric families $\mathcal{P}$, the number of parameters indexing this family may be much larger than the sample size $n$, which can, for all practical purposes, lead to a nonparametric estimation problem. For each $g$, we let $\Pi_g$ denote the distribution of $G_g(U)$ when $U\sim \nu_u$. We then let $\Gamma_1=\{\Pi_g : g\in\gamma\}$. It is worth noting that classes $\Gamma_1$ that are defined in this way will not generally satisfy the conditions \ref{as:Mcompact}-\ref{as:Gamma1conv} used in Theorem~\ref{thm:equilibrium}. To iteratively improve the performance of the prior, we require the ability to differentiate realized data sets through the parameters indexing the prior. To do this, we assume that, for each $P\in\mathcal{P}$, the user has access to a generator function $H_P : \mathcal{V}\rightarrow\mathbb{R}$ such that $H_P(V)$ has the same distribution as $(X,Y)\sim P$ when $V\sim \nu_v$. 

\begin{algorithm}
   \caption{Adversarially learn an estimator.}
   \label{alg:updateBoth}
\begin{algorithmic}[1]
   \STATE \textbf{Initialize} estimator $T_t$, generator $G_g$, step sizes $\eta_1,\eta_2$.
    \FOR{$K$ iterations}
      \FOR{$j=1,2$}
          \STATE Independently draw $U\sim \nu_u$ and $V_0,\ldots,V_p\overset{iid}{\sim} \nu_v$.
          \STATE\label{ln:P} Let $P=G_g(U)$.
          \STATE\label{ln:XY} Let $(X_i,Y_i)= H_P(V_i)$, $i=0,1,\ldots,n$.
          \STATE\label{ln:D} Let $\matr{D}$ be the data set containing $(X_i,Y_i)_{i=1}^n$.
          \STATE\label{ln:loss} Let ${\rm Loss}= [T_t(\matr{D})(X_0)-\mu_{P}(X_0)]^2$
          \IF{j=1} 
            \STATE Update estimator:\hspace{.15em} $t = t - \eta_1 \nabla_t {\rm Loss}$.
            \STATE $\triangleright$ ${\rm Loss}$ depends on $t$ through $T_t$.
          \ELSE 
            \STATE\label{ln:priorUpdate} Update prior:\hspace{.15em} 
     $g=g + \eta_2 \nabla_{g} {\rm Loss}$.
            \STATE $\triangleright$ ${\rm Loss}$ depends on $g$ through the definitions of $P$, $(X_i,Y_i)$, and $\matr{D}$.
        \ENDIF
        \ENDFOR
    \ENDFOR
\end{algorithmic}
\end{algorithm}

The AMC learning strategy is presented in Algorithm~\ref{alg:updateBoth}. The algorithm takes stochastic gradient steps on the parameters indexing an estimator and prior generator to iteratively reduce and increase the Bayes risk, respectively. We caution that, when the outcome or some of the features are discrete, $\nabla_g {\rm Loss}$ will not generally represent an unbiased estimate of the gradient of $g\mapsto r(T_t,\Pi_g)$, which can cause Algorithm~\ref{alg:updateBoth} to perform poorly. To handle these cases, the algorithm can be modified to instead obtain an unbiased gradient estimate using the likelihood ratio method \citep{Glynn1987}.

Though studying the convergence properties of the minimax optimization in Algorithm~\ref{alg:updateBoth} is not the main focus of this work, we now provide an overview of how results from \cite{Linetal2019} can be used to provide some guarantees for this algorithm. When doing so, we focus on the special case where there exists some $\ell<\infty$ such that, for all $g$, $t\mapsto r(T_t,G_g)$ is differentiable with $\ell$-Lipschitz gradient and, for some finite (but potentially large) collection $\mathcal{P}_D:=\{P_1,\ldots,P_D\}\subset \mathcal{P}$, $\Gamma$ is the collection of all mixtures of distributions in $\mathcal{P}_D$. We also suppose that the parameter $g$ indexing the generator $G_g$ takes values on the $D-1$ simplex and that this generator is parameterized in such a way that $\nu_u\circ G_g^{-1}$ has the same distribution as the mixture of distributions in $\mathcal{P}_D$ that places mass $g_j$ on distribution $P_j$, $j=1,\ldots,D$. In this case, provided the learning rates $\eta_1$ and $\eta_2$ are chosen appropriately,  Theorem~4.5 in \cite{Linetal2019} gives guarantees on the number of iterations required to return an $\epsilon$-stationary point $T_{t_K}$ (idem, Definition~3.7) within a specified number of iterations --- this stationary point is such that there exists a $t'$ near $t_K$ at which the function $t\mapsto \sup_{\pi\in\Gamma} r(T_{t},\Pi)$ has at least one small subgradient (idem, Lemma~3.8, for details). If, also, $t\mapsto T_t(\matr{d})$ is convex for all $\matr{d}$, then this also implies that $T_{t_K}$ is nearly $\Gamma$-minimax. If, alternatively, the prior update step in Algorithm~\ref{alg:updateBoth} (line~\ref{ln:priorUpdate}) is replaced by an oracle optimizer such that, at each iteration, $g$ is defined as a true maximizer of the Bayes risk $g\mapsto r(T,\Pi_g)$, then Theorem~E.4 of \cite{Linetal2019} similarly guarantees that an $\epsilon$-stationary point will be reached within a specified number of iterations.

\section{Proposed Class of Estimators}\label{sec:architecture}

\subsection{Equivariant Estimator Architecture}

Algorithm~\ref{alg:architecture} presents our proposed estimator architecture, 
which relies on four modules. Each module $k$ can be represented as a function $m_k$ belonging to a collection $\mathcal{M}_k$ of functions mapping from $\mathbb{R}^{a_k}$ to $\mathbb{R}^{b_k}$, where the values of $a_k$ and $b_k$ can be deduced from Algorithm~\ref{alg:architecture}. For given data $\matr{d}$, a prediction at a feature $x_0$ can be obtained by sequentially calling the modules and, between calls, either mean pooling across one of the dimensions of the output or concatenating the evaluation point as a new column in the output matrix.

\begin{algorithm}
   \caption{Use data $\matr{d}$ to obtain prediction at $x_0$.}
   \label{alg:architecture}
\begin{algorithmic}[1]
   \STATE \textbf{Preprocess:} Let $x_0^0 := \frac{x_0-\bar{\matr{x}}}{s(\matr{x})}$ and define $\matr{d}^0\in\mathbb{R}^{n\times p\times 2}$ so that $\matr{d}_{i*1}^0=\frac{x_i-\bar{\matr{x}}}{s(\matr{x})}$ for all $i=1,\ldots,n$ and $\matr{d}_{*j2}^0=\frac{\matr{y}-\bar{\matr{y}}}{s(\matr{y})}$ for all $j=1,\ldots,p$.\vspace{0.5em}\label{ln:preprocess}
            \STATE \textbf{Module 1:} $\matr{d}^1:= m_1(\matr{d}^0)$. \hfill $\matr{d}^1\in\mathbb{R}^{n\times p\times o_1}$
            \STATE \textbf{Mean Pool:} $\bar{\matr{d}}^1:=n^{-1}\sum_{i=1}^n \matr{d}_{i**}^1$.\vspace{0.5em} \hfill $\bar{\matr{d}}^1\in\mathbb{R}^{p\times o_1}$\vspace{0.25em}
            \STATE \textbf{Module 2:} $\matr{d}^2:=m_2(\bar{\matr{d}}^1)$. \hfill $\matr{d}^2\in\mathbb{R}^{p\times o_2}$\label{ln:module2}
            \STATE \textbf{Augment:} $\tilde{\matr{d}}^2:=[\matr{d}^2\ |\ x_0^0]$. \hfill $\tilde{\matr{d}}^2\in\mathbb{R}^{p\times (o_2+1)}$\vspace{0.5em}
            \STATE \textbf{Module 3:} $\matr{d}^3:=m_3(\tilde{\matr{d}}^2)$. \hfill $\matr{d}^3\in\mathbb{R}^{p\times o_3}$
            \STATE \textbf{Mean Pool:} $\bar{\matr{d}}^3:=p^{-1}\sum_{j=1}^p \matr{d}_{j*}^3$.\vspace{0.5em} \hfill $\bar{\matr{d}}^3\in\mathbb{R}^{o_3}$
            \STATE \textbf{Module 4:} $\matr{d}^4:=m_4(\bar{\matr{d}}^3)$. \hfill $\matr{d}^4\in\mathbb{R}$
            \STATE \textbf{return} $\bar{\matr{y}} + s(\matr{y}) \matr{d}^4$.
\end{algorithmic}
\end{algorithm}

We let $\mathcal{T}_{\mathcal{M}}$ represent the collection of all prediction procedures described by Algorithm~\ref{alg:architecture}, where here $(m_k)_{k=1}^4$ varies over $\prod_{k=1}^4 \mathcal{M}_k$. We now give conditions under which the proposed architecture yields an equivariant estimator. 

\begin{enumerate}[noitemsep,series=modules,label=M\arabic*),ref=M\arabic*]
    \item\label{it:M1equivar} $m_1(A v B)_{**\ell}= A[m_1(v)_{**\ell}] B$ for all $m_1\in\mathcal{M}_1$, $A\in \mathcal{A}$, $B\in \mathcal{B}$, $v\in\mathbb{R}^{n\times p\times 2}$, and $\ell\in\{1,\ldots,o_1\}$.
    \item\label{it:M2equivar} $m_2(B v) = B m_2(v)$ for all $m_2\in\mathcal{M}_2$, $B\in \mathcal{B}$, and $v\in\mathbb{R}^{p\times o_1}$.
    \item\label{it:M3equivar} $m_3(B v) = B m_3(v)$ for all $m_3\in\mathcal{M}_3$, $B\in \mathcal{B}$, and $v\in\mathbb{R}^{p\times o_2}$.
\end{enumerate}

\begin{theorem}\label{thm:TMequivar}
If \ref{it:M1equivar}-\ref{it:M3equivar}, then all $T\in \mathcal{T}_{\mathcal{M}}$ satisfy \eqref{eq:Tequivar1} and \eqref{eq:Tequivar2}.
\end{theorem}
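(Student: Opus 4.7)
The plan is to propagate the two equivariance properties through each stage of Algorithm~\ref{alg:architecture}. The proof splits naturally into the two statements \eqref{eq:Tequivar1} and \eqref{eq:Tequivar2}, which I would treat separately.

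\textbf{Shift and rescale equivariance \eqref{eq:Tequivar2}.} The idea is that the Preprocess step absorbs any shift and rescale of the raw data. First, I would verify by direct computation that under $(\matr{x},\matr{y},x_0)\mapsto(\matr{x}^{a,b},\tilde a+\tilde b\matr{y},a+b\odot x_0)$, the summaries transform as $\bar{\matr{x}}\mapsto a+b\odot\bar{\matr{x}}$, $s(\matr{x})\mapsto b\odot s(\matr{x})$, $\bar{\matr{y}}\mapsto \tilde a+\tilde b\bar{\matr{y}}$, and $s(\matr{y})\mapsto \tilde b\,s(\matr{y})$. Substituting these into the Preprocess formulas shows that both $\matr{d}^0$ and $x_0^0$ are unchanged, and hence so are $\matr{d}^1,\ldots,\matr{d}^4$. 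The return value $\bar{\matr{y}}+s(\matr{y})\matr{d}^4$ therefore becomes $(\tilde a+\tilde b\bar{\matr{y}})+\tilde b\,s(\matr{y})\matr{d}^4=\tilde a+\tilde b\,T(\matr{d})(x_0)$, which is \eqref{eq:Tequivar2}.

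\textbf{Permutation invariance \eqref{eq:Tequivar1}.} The strategy here is to track how $A$ and $B$ act on the intermediate tensors. A short index calculation shows that the preprocessed tensors satisfy $\matr{d}'^0_{**\ell}=A\matr{d}^0_{**\ell}B$ for $\ell\in\{1,2\}$ and $x_0^{0\prime}=B^\top x_0^0$, using that $\bar{\matr{y}}$ and $s(\matr{y})$ are unaffected by permuting $\matr{y}$ while $\bar{\matr{x}}$ and $s(\matr{x})$ are permuted by $B$. Condition \ref{it:M1equivar} then propagates this through Module~1: $\matr{d}'^1_{**\ell}=A\matr{d}^1_{**\ell}B$. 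The first mean pool averages over the first (the $n$-) dimension, absorbing the left action of $A$, and a direct entry-wise calculation shows the output transforms as $\bar{\matr{d}}'^1=B^\top\bar{\matr{d}}^1$. Since $\mathcal{B}$ is closed under transposition, conditions \ref{it:M2equivar} and \ref{it:M3equivar} apply with $B$ replaced by $B^\top$, giving $\matr{d}'^2=B^\top\matr{d}^2$, then $\tilde{\matr{d}}'^2=B^\top\tilde{\matr{d}}^2$ (augmentation commutes with left-multiplication by $B^\top$, since $x_0^{0\prime}=B^\top x_0^0$), and then $\matr{d}'^3=B^\top\matr{d}^3$. The second mean pool averages over the $p$-dimension, collapsing the remaining permutation so that $\bar{\matr{d}}'^3=\bar{\matr{d}}^3$ and hence $\matr{d}'^4=\matr{d}^4$. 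Combined with the invariance of $\bar{\matr{y}}$ and $s(\matr{y})$ under the $A$-permutation of $\matr{y}$, the return value equals $T(\matr{d})(x_0)$, yielding \eqref{eq:Tequivar1}.

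\textbf{Main obstacle.} There is no deep conceptual difficulty; the care required is in bookkeeping the actions of $A$ and $B$ through each tensor operation. The one mild subtlety is that the first mean pool converts the two-sided action $A\cdot B$ on an $n\times p\times o_1$ tensor into a one-sided action by $B^\top$ (not $B$) on the $p\times o_1$ matrix that feeds Module~2. This is handled by the closure of $\mathcal{B}$ under transposition, which allows the equivariance conditions on Modules~2 and~3 to be invoked with $B^\top$ in place of $B$; absent that closure, those conditions would need to be stated with $B^\top$ explicitly.
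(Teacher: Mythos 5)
Your proof is correct and takes essentially the same route as the paper: both propagate the transformations stage-by-stage through the preprocessing, modules, mean pools, augmentation, and postprocessing of Algorithm~\ref{alg:architecture}, invoking \ref{it:M1equivar}--\ref{it:M3equivar} at the module calls and using the fact that $\mathcal{B}$ is closed under transposition when applying \ref{it:M2equivar} and \ref{it:M3equivar} with $B^\top$. The paper handles \eqref{eq:Tequivar2} in one sentence (that $S_T$ does not depend on the last four coordinates of $z(\matr{d},x_0)$) and writes out the permutation chain in full equality displays, whereas you spell out the shift-and-rescale bookkeeping more explicitly and summarize the permutation chain at a higher level; this is a difference of exposition, not of argument.
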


\subsection{Neural Network Parameterization}\label{sec:neuralnetparam}
In our experiments, we choose the four module classes $\mathcal{M}_k$, $k=1,2,3,4$, indexing our estimator architecture to be collections of neural networks. For each $k$, we let $\mathcal{M}_k$ contain the neural networks consisting of $h_k$ hidden layers of widths $w_k^1,w_k^2,\ldots,w_k^{h_k}$, where the types of layers used depends on the module $k$. When $k=1$, multi-input-output channel equivariant layers as defined in \citet{Hartford2018} are used. 
In particular, for $j=1,\ldots,h_1+1$, we let $\mathcal{L}_1^{j}$ denote the collection of all such layers that map from $\mathbb{R}^{n\times p\times w_1^{j-1}}$ to $\mathbb{R}^{n\times p\times w_1^j}$, where we let $w_1^0=2$ and $w_1^{h_1+1}=o_1$. For each $j$, each member $L_1^j$ of $\mathcal{L}_1^j$ is equivariant in the sense that, for all $A\in\mathcal{A}$, $B\in\mathcal{B}$, and $v\in\mathbb{R}^{n\times p\times w_1^{j-1}}$, $L_1^j(A v B)_{**\ell} = A L_1^j(v)_{**\ell} B$ for all $\ell=1,\ldots,o_1$. When $k=2,3$, multi-input-output channel equivariant layers as described in Eq.~22 of \citet{Zaheer2017} are used, except that we replace the sum-pool term in that equation by a mean-pool term (see the next subsection for the rationale). In particular, for $j=1,\ldots,h_k+1$, we let $\mathcal{L}_k^j$ denote the collection of all such equivariant layers that map from $\mathbb{R}^{p\times w_k^{j-1}}$ to $\mathbb{R}^{p\times w_k^{j}}$. For each $j$, each member $L_k^j$ of $\mathcal{L}_k^j$ is equivariant in the sense that, for all $B\in\mathcal{B}$ and $v\in\mathbb{R}^{p\times w_k^{j-1}}$, $L_k^j(Bv) = B L_k^j(v)$. When $k=4$, standard linear layers mapping from $\mathbb{R}^{w_4^{j-1}}$ to $\mathbb{R}^{w_4^{j}}$ are used for each $j=1,\ldots,h_4+1$, where $w_4^0=o_3$ and $w_4^{h_4+1}=1$. For each $j$, we let $\mathcal{L}_4^j$ denote the collection of all such layers. For a user-specified activation function $q$, we then define the module classes as follows for $k=1,2,3,4$:
\begin{align*}
    &\mathcal{M}_k:=\{v\mapsto q\circ L_k^{h_k+1}\circ q\circ L_k^{h_k}\circ \ldots \circ q\circ L_k^{1}(v) : L_k^j\in \mathcal{L}_k^{j}, j=1,2,\ldots,h_k+1\}.
\end{align*}
Notably, $\mathcal{M}_1$ satisfies \ref{it:M1equivar} \citep{Ravanbakhsh2017,Hartford2018}, and $\mathcal{M}_2$ and $\mathcal{M}_3$ satisfy \ref{it:M2equivar} and \ref{it:M3equivar}, respectively \citep{Ravanbakhsh2016,Zaheer2017}. Each element of $\mathcal{M}_4$ is a multilayer perceptron.

\subsection{Pros and Cons of Proposed Architecture}\label{sec:proscons}

A benefit of using the proposed architecture in Algorithm~\ref{alg:architecture} is that Modules 1 and 2 can be evaluated without knowing the feature $x_0$ at which a prediction is desired. As a consequence, these modules can be precomputed before making predictions at new feature values, which can lead to substantial computational savings when the number of values at which predictions will be made is large. 
Another advantage of the proposed architecture is that it can be evaluated on a data set that has a different sample size $n$ than did the data sets used during meta-training. In the notation of Eq.~4 from \citeauthor{Hartford2018}, this corresponds to noting that the weights from an $\mathbb{R}^{N\times M\times k}\rightarrow \mathbb{R}^{N\times M\times o}$ multi-input-output channel layer can be used to define an $\mathbb{R}^{N'\times M\times k}\rightarrow \mathbb{R}^{N'\times M\times o}$ layer for which the output $Y_{n,m}^{\langle o \rangle}$ is given by the same symbolic expression as that displayed in Eq.~4 from that work, but now with $n$ ranging over $1,\ldots,N'$. We will show in our upcoming experiments that procedures trained using 500 observations can perform well even when evaluated on data sets containing only 100 observations. It is similarly possible to evaluate the proposed architecture on data sets containing a different number of features than did the data sets used during meta-training --- again see Eq.~4 in \citet{Hartford2018}, and also see Eq.~22 in \citet{Zaheer2017}, but with the sum-pool term replaced by a mean-pool term. The rationale for replacing the sum-pool term by a mean-pool term is that this will ensure that the scale of the hidden layers will remain fairly stable when the number of testing features differs somewhat from the number of training features.

A disadvantage of the proposed architecture is that it currently has no established universality guarantees. 
Such guarantees have been long available for standard multilayer perceptrons \citep[e.g.,][]{Cybenko1989,Hornik1991}, and have recently also become available for certain invariant architectures \citep{Maron2019}. In future work, it would be interesting to see if the arguments in \citet{Maron2019} can be modified to provide universality guarantees for our architecture. Establishing such results may also help us to overcome a second disadvantage of our architecture, namely that the resulting neural network classes will not generally satisfy the convexity condition \ref{as:Sconv2} used in Theorem~\ref{thm:equilibrium}. If a network class $\mathcal{T}_{\mathcal{M}}$ that we have proposed can be shown to satisfy a universality result for some appropriate convex class $\mathcal{T}_c$, and if $\mathcal{T}_{\mathcal{M}}$ is itself a subset of $\mathcal{T}_c$, then perhaps it will be possible to invoke Theorem~\ref{thm:equilibrium} to establish an equilibrium result over the class of estimators $\mathcal{T}_c$, and then to use this result to establish an (approximate) equilibrium result for $\mathcal{T}_{\mathcal{M}}$. To ensure that conditions \ref{as:ptwiseBdd}-\ref{as:Sclosed2} are satisfied, such an argument will likely require that the weights of the networks in $\mathcal{T}_{\mathcal{M}}$ be restricted to belong to some compact set.

\section{Numerical Examples}\label{sec:examples}

\subsection{Preliminaries}
In this section, we present the results from two sets of numerical experiments. 
In each example, the collection of estimators $\mathcal{T}$ is parameterized as the network architecture introduced in Section~\ref{sec:neuralnetparam} with $o_1=o_2=50$, $o_3=10$, $h_1=h_3=10$, $h_2=h_4=3$, and, for $k=1,2,3,4$, $w_k=100$. For each module, we use the leaky ReLU activation $q(z):=\max\{z,0\}+0.01\min\{z,0\}$.

The examples differ in the definitions of the model $\mathcal{P}$ 
and the collection $\Gamma$ of priors on $\mathcal{P}$. In each case, $\Gamma$ satisfies the invariance properties \ref{in:permpred}, \ref{in:shiftrescalepred}, and \ref{in:shiftrescaleoutcome}. By the equivariance of the estimators in $\mathcal{T}$, Theorem~\ref{thm:Gamma1restriction} shows that it suffices to consider a collection of priors $\Gamma_1$ with support on $\mathcal{P}_1$. Hence, it suffices to define the collection $\mathcal{P}_1\subset \mathcal{P}$ of distributions $P$ satisfying \eqref{eq:P1}. By \ref{in:shiftrescalepred} and \ref{in:shiftrescaleoutcome}, we see that $\mathcal{P}=\cup_{P\in\mathcal{P}_1} \mathcal{P}(P)$, where $ \mathcal{P}(P)$ consists of the distributions of $(a+b\odot X,\tilde{a} + \tilde{b} Y)$ when $(X,Y)\sim P$; here, $a$, $b$, $\tilde{a}$, and $\tilde{b}$ vary over $\mathbb{R}^p$, $(\mathbb{R}^+)^p$, $\mathbb{R}$, and $\mathbb{R}^+$, respectively. In each setting, the submodel $\mathcal{P}_1$ takes the form
\begin{align*}
    \mathcal{P}_1&:=\left\{P : \mu_P\in \mathcal{R}, P_X\in\mathcal{P}_X, \epsilon_P|X\overset{P}{\sim} N(0,1)\right\},
\end{align*}
and the $p=10$ dimensional features $X$ are known to be drawn from a distribution in the set $\mathcal{P}_X$ of $N(\matr{0}_p,\Sigma)$ distributions, where $\Sigma$ varies over all positive-definite $p\times p$ covariance matrices with diagonal equal to $\matr{1}_p$. The collections $\mathcal{R}$ of regression functions differ in the examples and are detailed in the coming subsections. These collections are indexed by a sparsity parameter $\mathfrak{s}$ that specifies the number of features that may contribute to the regression function $\mu_P$. In each setting, we considered all four combinations of $\mathfrak{s}\in\{1,5\}$ and $n\in\{100,500\}$, where $n$ denotes the number of observations in the data sets $\matr{d}$ used to evaluate the performance of the final learned estimators. For each $n$, we evaluated the performance of AMC meta-trained with data sets of size $n_{mt}=100$ observations (AMC100) and $n_{mt}=500$ observations (AMC500). 

All experiments were run in Pytorch 1.0.1 on Tesla V100 GPUs using Amazon Web Services.  The code used to conduct the experiments can be found at \url{https://github.com/alexluedtke12/amc-meta-learning-of-optimal-prediction-procedures}. Further experimental details can be found in Appendix~\ref{app:numexp}. 

\subsection{Sparse Linear Regression}\label{sec:linreg}
We first considered the setting where $\mu_P$ belongs to a sparse linear model. In particular,
\begin{align}
    \mathcal{R}:=\{x\mapsto \beta^\top x : \|\beta\|_0\le \mathfrak{s},\|\beta\|_1\le 5\}, \label{eq:RSparseLinReg}
\end{align}
where $\|a\|_0:=\#\{j : a_j\not=0\}$ and $\|a\|_1:=\sum_{j=1}^p |a_j|$. The collection $\Gamma$ is described in Appendix~\ref{app:numexp}. 

For each sparsity level $\mathfrak{s}\in\{1,5\}$, we evaluated the performance of the prediction procedure trained at sparsity level $\mathfrak{s}$ using two priors. Both priors sample the covariance matrix of the feature distribution $P_X$ from the Wishart prior $\Pi_X$ described in Appendix~\ref{app:numericalPreliminaries} and let $\beta=(\alpha,0)$ for a random $\alpha$ satisfying $\|\alpha\|_1\le 5$. They differ in how $\alpha$ is drawn. Both make use of a uniform draw $Z$ from $\ell_1$ ball $\{a\in\mathbb{R}^{\mathfrak{s}} : \|a\|_1=5\}$. The first sets $\alpha=Z$, whereas the second sets $\alpha=U Z$ for $U\sim {\rm Unif}(0,1)$ drawn independently of $Z$. We will refer to the two settings as `boundary' and `interior', respectively. We refer to the $\mathfrak{s}=1$ and $\mathfrak{s}=5$ cases as the `sparse' and `dense' settings, respectively. Further details can be found in Appendix~\ref{app:linearSettings}.

In this example, AMC leverages knowledge of the underlying sparse linear regression model by generating synthetic training data from distributions $P$ for which $E_P[Y|X=\cdot\,]$ belongs to the class $\mathcal{R}$ defined in Eq.~\ref{eq:RSparseLinReg} (see line~\ref{ln:P} of Algorithm~\ref{alg:updateBoth}). Therefore, we aimed to compare AMC's performance to that of estimators that also take advantage of this linearity. Ideally, we would compare AMC's performance to that of the true $\Gamma$-minimax estimator. Unfortunately, as is the case in most problems, the form of this estimator is not known in this sparse linear regression setting. 
Therefore, we instead compared AMC's performance to ordinary least squares (OLS) and the lasso \citep{Tibshirani1996} with tuning parameter selected by 10-fold cross-validation, as implemented in \texttt{scikit-learn} \citep{scikit-learn}.

Table~\ref{tab:linear}a displays performance for the sparse setting. We see that AMC outperformed OLS and lasso for the boundary priors, and was outperformed for the interior priors. Surprisingly, AMC500 outperformed AMC100 for the interior prior when $n=100$ observations were used to evaluate performance. The fact that AMC100 was trained specifically for the $n=100$ case suggests that a suboptimal equilibrium may have been reached in this setting. 
Table~\ref{tab:linear}b displays performance for the dense setting. Here AMC always performed at least as well as OLS and lasso when $n_{mt}=n$, and performed comparably even when $n_{mt}\not=n$.

\begin{table}
    \centering
    \begin{subtable}{\linewidth}\centering
    \caption{Sparse signal}
    \begin{tabular}{lllll}
    & \multicolumn{2}{c}{Boundary} & \multicolumn{2}{c}{Interior} \\
     & $n$=100 & 500 & 100 & 500 \\\hline
        OLS & 0.12 & \phantom{$<$}0.02 & 0.12 & 0.02  \\
        Lasso & 0.06 & \phantom{$<$}0.01 & 0.06 & 0.01  \\
        AMC100 (ours) & 0.02 & $<$0.01 & 0.11 & 0.09  \\
        AMC500 (ours) & 0.02 & $<$0.01   & 0.07 & 0.04 \\[0.55em]
    \end{tabular}
    \end{subtable}
    \begin{subtable}{\linewidth}\centering
    \caption{Dense signal}
    \begin{tabular}{lllll}
    & \multicolumn{2}{c}{Boundary} & \multicolumn{2}{c}{Interior} \\
     & $n$=100 & 500 & 100 & 500 \\\hline
        OLS & 0.13 & 0.02 & 0.13 & 0.02  \\
        Lasso & 0.11 & 0.02 & 0.09 & 0.02  \\
        AMC100 (ours) & 0.10 & 0.04 & 0.08 & 0.02  \\
        AMC500 (ours) & 0.09 & 0.02 & 0.09 & 0.02
    \end{tabular}
    \end{subtable}
    \caption{MSEs based on data sets of size $n$ in the linear regression settings. All standard errors are less than 0.001.}
    \label{tab:linear}
\end{table}

\subsection{Fused Lasso Additive Model}
We next considered the setting where $P$ belongs to a variant of the fused lasso additive model (FLAM) \citep{Petersenetal2016}. This model enforces that $\mu_P$ belong to a generalized additive model, that only a certain number of the components can be different from the zero function, and that the sum of the total variations of the remaining components is not too large. We recall that the total variation $V(f)$ of $f : \mathbb{R}\rightarrow\mathbb{R}$ is equal to the supremum of $\sum_{\ell=1}^k |f(a_{\ell+1})-f(a_\ell)|$ over all $(a_\ell)_{\ell=1}^{k+1}$ such that $k\in\mathbb{N}$ and $a_1< a_2 < \ldots < a_{k+1}$ \citep{Cohn2013}. Let $v(\mu):= (V(\mu_j))_{j=1}^p$. Writing $x_j$ to denote feature $j$, the model we considered imposes that $\mu_P$ falls in
\begin{align*}
    \mathcal{R}&:= \left\{x\mapsto \sum_{j=1}^p \mu_j(x_j) : \|v(\mu)\|_1\le 10, \left\|v(\mu)\right\|_0\le \mathfrak{s}\right\}.
\end{align*}
The collection $\Gamma$ is described in Appendix~\ref{app:numexp}. 

In this example, we preprocessed the features before supplying them to the estimator. In particular, we replaced each entry with its rank statistic among the $n$ observations so that, for each $i\in\{1,\ldots,n\}$ and $j\in\{1,\ldots,p\}$, we replaced $\matr{x}_{ij}$ by $\sum_{k=1}^n I\{\matr{x}_{ij}\ge \matr{x}_{kj}\}$ and $x_{0j}$ by $\sum_{k=1}^n I\{x_{0j}\ge \matr{x}_{kj}\}$. This preprocessing step is natural given that the FLAM estimator \citep{Petersenetal2016} also only depends on the features through their ranks. 
An advantage of making this restriction is that, by the homoscedasticity of the errors and the invariance of the rank statistics and total variation to strictly increasing transformations, the learned estimators should perform well even if the feature distributions do not belong to a Gaussian model, but instead belong to a much richer Gaussian copula model.

\begin{figure}
    \centering
    \includegraphics[width=0.6\textwidth]{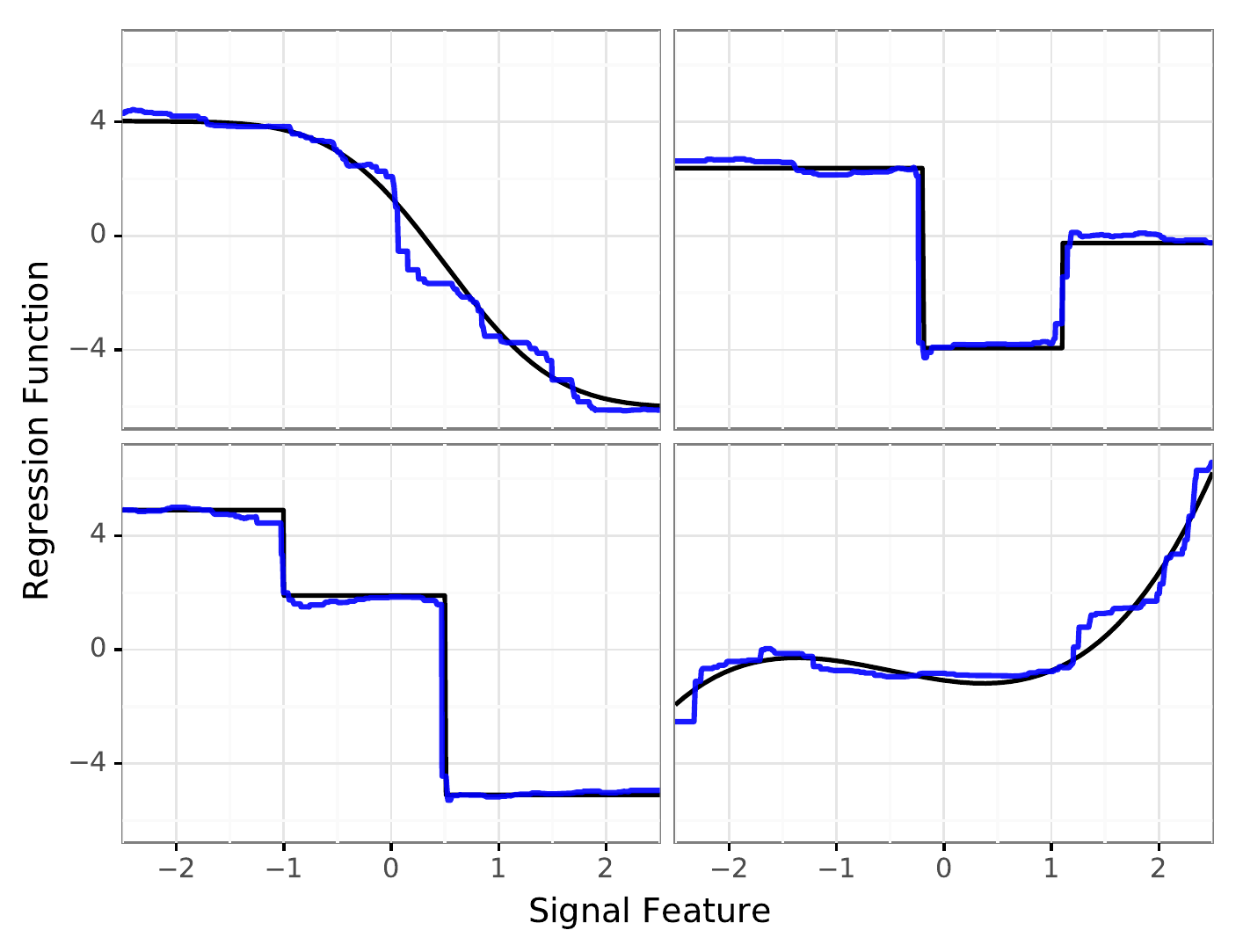}
    \caption{Examples of AMC500 fits (blue) based on $n=500$ observations drawn from distributions at sparsity level $\mathfrak{s}=1$ with four possible signal components (black). Predictions obtained at different signal feature values with all 9 other features set to zero.}
    \label{fig:flam}
\end{figure}

We evaluated the performance of the learned estimators using variants of simulation scenarios 1-4 from \citet{Petersenetal2016}. The level of smoothness varies across the settings (see Fig.~2 in that work). In the variants we considered, the true regression function either contains $\mathfrak{s}_0=1$ (`sparse') or $\mathfrak{s}_0=4$ (`dense') nonzero components. In the sparse setting, we evaluated the performance of the estimators that were meta-trained at sparsity level $\mathfrak{s}=1$, and, in the dense setting, we evaluated the performance of the estimators that were meta-trained at $\mathfrak{s}=5$. Further details can be found in Appendix~\ref{app:flamSettings}.

Similarly to as in the previous example, AMC leverages knowledge of the possible forms of the regression function that is imposed by $\mathcal{R}$ --- in this case, the model for the regression function is nonparametric, but does impose that this function belong to a particular sparse generalized additive model. Though there does not exist a competing estimator that is designed to optimize over $\mathcal{R}$, the FLAM estimator \citep{Petersenetal2016} optimizes over the somewhat larger, non-sparse model where $\mathfrak{s}= p$. We, therefore, compared the performance of AMC to this estimator as a benchmark, with the understanding that AMC is slightly advantaged in that it has knowledge of the underlying sparsity pattern. Nevertheless, we view this experiment as an important proof-of-concept, as it is the first, to our knowledge, to evaluate whether it is feasible to adversarially meta-learn a prediction procedure within a nonparametric regression model.

\begin{table*}
    \centering
    \begin{subtable}{\linewidth}\centering
    \caption{Sparse signal}
    \begin{tabular}{lllllllll}
    & \multicolumn{2}{c}{Scenario 1} & \multicolumn{2}{c}{Scenario 2} & \multicolumn{2}{c}{Scenario 3} & \multicolumn{2}{c}{Scenario 4} \\
     & $n$=100 & 500 & 100 & 500 & 100 & 500 & 100 & 500 \\\hline
        FLAM & 0.44 & 0.12 & 0.47 & 0.17 & 0.38 & 0.11 & 0.51 & 0.19  \\
        AMC100 (ours) & 0.34 & 0.20 & 0.18 & 0.08 & 0.27 & 0.14 & 0.17 & 0.08  \\
        AMC500 (ours) & 0.48 & 0.12 & 0.19 & 0.06 & 0.35 & 0.10 & 0.23 & 0.08 \\[0.5em]
    \end{tabular}
    \end{subtable}
    \begin{subtable}{\linewidth}\centering
    \caption{Dense signal}
    \begin{tabular}{lllllllll}
    & \multicolumn{2}{c}{Scenario 1} & \multicolumn{2}{c}{Scenario 2} & \multicolumn{2}{c}{Scenario 3} & \multicolumn{2}{c}{Scenario 4} \\
     & $n$=100 & 500 & 100 & 500 & 100 & 500 & 100 & 500 \\\hline
        FLAM & 0.59 & 0.17 & 0.65 & 0.24  & 0.53 & 0.16 & 0.76 & 0.36 \\
        AMC100 (ours) & 1.20 & 0.91 & 0.47 & 0.39  & 0.87 & 0.57 & 0.30 & 0.30 \\
        AMC500 (ours) & 0.58 & 0.15 & 0.37 & 0.08  & 0.46 & 0.12 & 0.36 & 0.09
    \end{tabular}
    \end{subtable}
    \caption{MSEs based on data sets of size $n$ in the FLAM settings. Standard errors for FLAM all $<\,$0.04 and for AMC all $<\,$0.01.}
    \label{tab:flamPetersens5}
\end{table*}

To illustrate the kinds of functions that AMC can approximate, Fig.~\ref{fig:flam} displays examples of AMC500 fits from scenario 3 when $(n,\mathfrak{s})=(500,1)$. 
Table~\ref{tab:flamPetersens5} provides a more comprehensive view of the performance of AMC and compares it to that of FLAM. Table~\ref{tab:flamPetersens5}a displays performance for the sparse setting. The AMC procedures meta-trained with $n_{mt}=n$ observations outperformed FLAM for all of these settings. Interestingly, AMC procedures meta-trained with $n_{mt}\not=n$ also outperformed FLAM in a majority of these settings, suggesting that learned procedures can perform well even at different sample sizes from those at which they were meta-trained. In the dense setting (Table~\ref{tab:flamPetersens5}b), AMC500 outperformed both AMC100 and FLAM in all but one setting (scenario 4, $n=100$), and in this setting both AMC100 and AMC500 dramatically outperformed FLAM. The fact that AMC500 also sometimes outperformed AMC100 when $n=100$ in the linear regression setting suggests that there may be some benefit to training a procedure at a larger sample size than that at which it will be evaluated. We leave an investigation of the generality of this phenomenon to future work.

\subsection{Importance of Permutation Invariance}
We numerically evaluated the performance of the architecture in Algorithm~\ref{alg:architecture} when Module 1 is not required to be equivariant to permutations of the observations, that is, when \ref{it:M1equivar} is replaced by the condition that $m_1(v B)_{**\ell}= [m_1(v)_{**\ell}] B$ for all $m_1\in\mathcal{M}_1$, $B\in \mathcal{B}$, $v\in\mathbb{R}^{n\times p\times 2}$, and $\ell = 1,\ldots,o_1$. We repeated the $n=100$ FLAM sparse setting with $n_{mt}=100$ and the same architecture as was used in our earlier experiment, except that each layer in Module 1 was replaced by a multi-input-output channel layer that is equivariant to permutations of the $p$ features \citep{Zaheer2017}, and the output of the final layer was of dimension $\mathbb{R}^{p\times o_1}$ so that the subsequent mean pooling layer could be removed. We saw the following multiplicative increases in the MSE across scenarios 1-4 relative to the performance of AMC100 reported in Table~\ref{tab:flamPetersens5}a: 7-fold, 6-fold, 5-fold, and 4-fold, respectively. These results suggest that \textit{a priori} enforcing that the estimator be invariant to permutations of the observations can dramatically improve performance.

\section{Data Experiments}\label{sec:data}

We further evaluated the performance of the AMC100 estimators learned in our numerical experiments using real datasets. Because there is no reason to assume \textit{a priori} that the true data-generating distribution will belong to any given one of the models that we have used to train our AMC estimators, we also evaluated the performance of stacked ensembles that combine the predictions of these base estimators using 10-fold cross-validation. Following the recommendation of \cite{breiman1996stacked}, we employed a non-negative least squares estimator for this combination step.  We compared the performance of our estimators to the estimators from our numerical experiments that were implemented in Python, namely the OLS and lasso estimators. We also compared to random forest as implemented in \texttt{scikit-learn} \citep{scikit-learn}, with 1,000 trees and otherwise using the default settings.

We built our experiments using three datasets, all available through the University of California, Irvine (UCI) Machine Learning Repository \citep{Dua2019}. The first dataset was originally used to develop quantitative structure-activity relationship (QSAR) models to predict acute aquatic toxicity towards the fathead minnow. This dataset contains 908 total observations, each of which corresponds to a distinct chemical. The outcome is the LC$_{50}$ for that chemical, which represents the concentration of the chemical that is lethal for 50\% of test fish over 96 hours. Six features that describe the molecular characteristics of the chemical are available --- see the UCI Machine Learning Repository and \cite{cassotti2015similarity} for details. The second dataset is from the National Aeronautics and Space Administration (NASA) that contains information on 1,503 airfoils at various wind tunnel speeds and angles of attack \citep{brooks1989airfoil}. The objective is to estimate the scaled sound level in decibels. Five features are available, namely frequency, angle of attack, chord length, free-stream velocity, and suction side displacement thickness. The third dataset contains information on 308 sailing yachts. The objective is to learn to predict a ship's performance in terms of residuary resistance. Six features describing a ship's dimensions and velocity are available, namely: the longitudinal position of the center of buoyancy, the prismatic coefficient, the length-displacement ratio, the beam-draught ratio, the length-beam ratio, and the Froude number. See \cite{gerritsma1981geometry} for more information on these features.

\begin{figure}[tb]
    \centering
    \includegraphics[width=\textwidth]{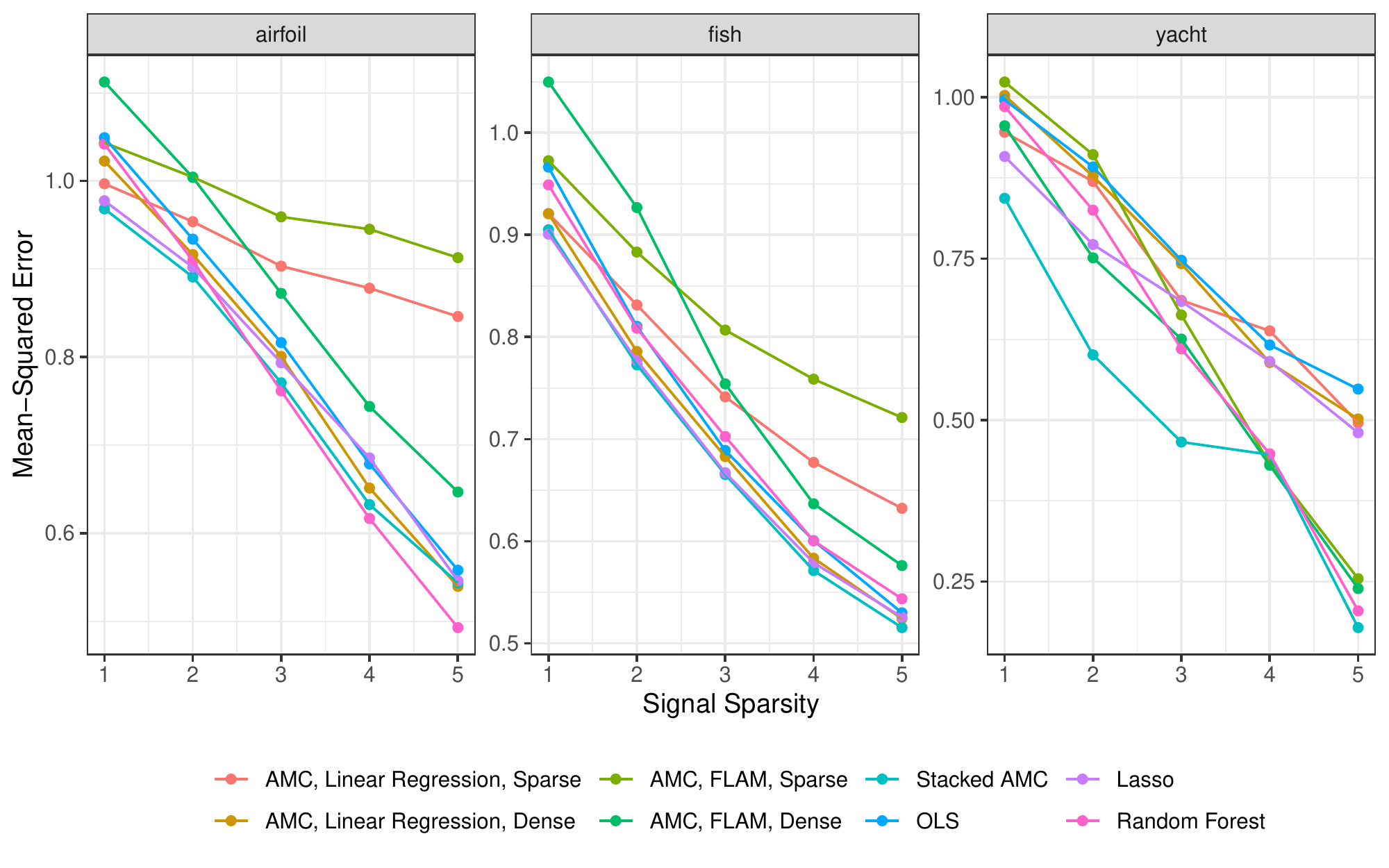}
    \caption{Cross-validated MSEs of the various estimators based on data from three different datasets. For each training-validation split of the data, between 1 and 5 features are selected at random from the original dataset (x-axis), and then all other features are replaced by Gaussian noise. Therefore, the signal is expected to become denser and stronger as the x-axis value increases. Stacked AMC performs well compared to all other approaches across all settings.}
    \label{fig:mses}
\end{figure}

When evaluating the performance of each estimator on these three datasets, we trained them using 100 randomly selected observations and evaluate mean-squared error on the held out observations. We considered varying levels of signal sparsity on these hundred observations. Specifically, for each training-test split of the data, we selected $\mathfrak{s}$ total features from the dataset, remove the remaining features, and then include $(10-\mathfrak{s})$ Gaussian noise features so that the dimension of the feature is always $p=10$. 
For each signal sparsity level $\mathfrak{s}\in\{1,2,3,4,5\}$, we repeat this process 200 times, and report the mean-squared error averaged across the 200 replications.

The performances of the various estimators considered are reported in Fig.~\ref{fig:mses}. The AMC estimator trained within the dense linear regression model tended to outperform OLS. Though there was no strict ordering between OLS and the AMC estimator trained within the sparse linear model, OLS tended to perform better when the signal was denser, as would be expected. Both of these AMC estimators tended to be outperformed by lasso. The AMC estimators trained within the fused lasso additive model tended to perform poorly in the airfoil and fish datasets, but they performed well relative to most other algorithms in the yacht dataset. For dense signals (4-5 non-noise features), random forest performed about as well or better than the best individual AMC algorithm, whereas for sparser signals it tended to perform comparably. The stacked ensemble that combined the predictions of the four AMC estimators performed well across all datasets and sparsity settings considered, and, in particular, always had the lowest or second lowest MSE among all eight estimators. Moreover, it dramatically outperformed all competitors in the yacht dataset when the signal was sparse.

\section{Proofs}\label{sec:proofs}
\subsection{A Study of Group Actions that are Useful for Our Setting}\label{sec:groupHS}

To prove Theorem~\ref{thm:HS}, it will be convenient to use tools from group theory to describe and study the behavior of our estimation problem under the shifts, rescalings, and permutations that we consider. For $k \in \mathbb{N}$, let $\sym(k)$ be the symmetric group on $k$ symbols. Let $\mathbb{R} \rtimes \mathbb{R}^{+}$ be the semidirect product of the real numbers with the positive real numbers with the group multiplication
$$(a_1,b_1) (a_2,b_2) = (a_1 + b_1 a_2, b_1 b_2).$$
Define $\mathcal{G}_0 := (\mathbb{R} \rtimes \mathbb{R}^{+}) \times [(\mathbb{R} \rtimes \mathbb{R}^{+})^{p} \rtimes \sym(p)] \times \sym(n)$. Let $\mathcal{O}_n := \{a \in \mathbb{R}^n : \bar{a} = 0, s(a) = 1\}$. Throughout we equip $\mathcal{G}_0$ with the product topology.

We note that the quantity $\mathcal{Z}$ defined in Section~\ref{sec:optEquivar} writes as
\begin{align}
   \mathcal{Z} = \mathcal{O}_n^p \times \mathcal{O}_n \times \mathbb{R}^p \times \mathbb{R}^p \times \mathbb{R} \times (\mathbb{R}^{+})^p \times \mathbb{R}^{+}. \label{eq:Zequivdef}
\end{align}
Denote the generic group element $g = ((g^{j+}, g^{j\times})_{j=0}^{p},\tau_g,\eta_g)$ where $(g^{j+},g^{j\times}) \in \mathbb{R} \rtimes \mathbb{R}^{+}$, $\tau_g \in \sym(p)$, and $\eta_g \in \sym(n)$. Denote the generic element $\matr{z} \in \mathcal{Z}$ by 
$$\matr{z} = ((z^{x,1,j},\cdots,z^{x,n,j})_{j=1}^p,(z^{y,1},\cdots,z^{y,n}),  (z^{x,0,j})_{j=1}^p, (z^{\bar{x},j})_{j=1}^p, z^{\bar{y}}, (z^{s(x),j})_{j=1}^p, z^{s(y)}) .$$
For $g_1 = ((g_1^{j+},g_1^{j\times})_{j=0}^p,\tau_1,\eta_1)$, $g_2 = ((g_2^{j+},g_2^{j\times})_{j=0}^p,\tau_2,\eta_2)$, two arbitrary elements in $\mathcal{G}_0$, define the group multiplication as
$$g_1 g_2 = \left(g_1^{0+}+g_1^{0\times}g_2^{0+},g_1^{0\times}g_2^{0\times},\big(g_1^{j +}+g_1^{j \times}g_2^{\tau_1^{-1}(j)+},g_1^{j \times}g_2^{\tau_1^{-1}(j)\times}\big)_{j=1}^p,\tau_1\tau_2,\eta_1\eta_2\right).$$
Define the group action $\mathcal{G}_0 \times \mathcal{Z} \to \mathcal{Z}$ by
\begin{align*}
    (g \cdot \matr{z})^{x,i,j} &= z^{x,\eta_g^{-1}(i),\tau_g^{-1}(j)} \\
    (g \cdot \matr{z})^{y,i} &= z^{y,\eta_g^{-1}(i)} \\
    (g \cdot \matr{z})^{x,0,j} &= g^{j+} + g^{j\times} z^{x,0,\tau_g^{-1}(j)} \\
    (g \cdot \matr{z})^{\bar{x},j} &= g^{j+} + g^{j\times} z^{\bar{x},\tau_g^{-1}(j)} \\
    (g \cdot \matr{z})^{\bar{y}} &= g^{0+} + g^{0\times}z^{\bar{y}} \\
    (g \cdot \matr{z})^{s(x),j} &= g^{j\times} z^{s(x),\tau_g^{-1}(j)} \\
    (g \cdot \matr{z})^{s(y)} &= g^{0 \times} z^{s(y)},
\end{align*}
where $i \in \{1,2,\cdots,n\}$ and $j \in \{1,2,\cdots,p\}$.

We make use of the below result without statement in the remainder of this section.
\begin{lemma}\label{lem:leftGroupAction} The map defined above is a left group action.
\end{lemma}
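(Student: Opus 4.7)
The plan is to verify the two defining axioms of a left group action. First I would identify the identity element of $\mathcal{G}_0$, which by inspection of the group multiplication is $e=((0,1)_{j=0}^p,\mathrm{id}_p,\mathrm{id}_n)$, where $(0,1)$ is the identity of each copy of $\mathbb{R}\rtimes\mathbb{R}^+$. Indeed, for any $g\in\mathcal{G}_0$, the formula $eg=g$ follows immediately since $0+1\cdot g^{j+}=g^{j+}$, $1\cdot g^{j\times}=g^{j\times}$, and the identity permutations act trivially. Then I would check that $e\cdot\matr{z}=\matr{z}$ componentwise: every $(e\cdot\matr{z})^{\bullet}$ component reduces either to $z^{\bullet}$ directly (for the components not involving $g^{j+}$ or $g^{j\times}$) or to $0+1\cdot z^{\bullet}=z^{\bullet}$ (for the affine components), and the permutation pieces collapse because $\mathrm{id}^{-1}=\mathrm{id}$.

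For the compatibility axiom $(g_1g_2)\cdot\matr{z}=g_1\cdot(g_2\cdot\matr{z})$, I would set $g_3:=g_1g_2$ so that, from the definition of the group multiplication,
\begin{align*}
g_3^{0+}&=g_1^{0+}+g_1^{0\times}g_2^{0+},\qquad g_3^{0\times}=g_1^{0\times}g_2^{0\times},\\
g_3^{j+}&=g_1^{j+}+g_1^{j\times}g_2^{\tau_1^{-1}(j)+},\qquad g_3^{j\times}=g_1^{j\times}g_2^{\tau_1^{-1}(j)\times}\ (j\ge 1),\\
\tau_3&=\tau_1\tau_2,\qquad \eta_3=\eta_1\eta_2.
\end{align*}
I would then verify the seven component equations in turn. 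The two purely permutational components, namely $(g\cdot\matr{z})^{x,i,j}$ and $(g\cdot\matr{z})^{y,i}$, reduce to the elementary identities $(\tau_1\tau_2)^{-1}=\tau_2^{-1}\tau_1^{-1}$ and $(\eta_1\eta_2)^{-1}=\eta_2^{-1}\eta_1^{-1}$, so that both sides equal $z^{x,\eta_2^{-1}\eta_1^{-1}(i),\tau_2^{-1}\tau_1^{-1}(j)}$ and $z^{y,\eta_2^{-1}\eta_1^{-1}(i)}$, respectively.

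The scalar/affine components $\bar{y}$ and $s(y)$ are then immediate from the ordinary multiplication rule on $\mathbb{R}\rtimes\mathbb{R}^+$, while the three remaining components $x,0,j$ and $\bar{x},j$ and $s(x),j$ are the substantive ones: expanding $g_1\cdot(g_2\cdot\matr{z})$ at coordinate $j$ yields $g_1^{j+}+g_1^{j\times}[g_2^{\tau_1^{-1}(j)+}+g_2^{\tau_1^{-1}(j)\times}z^{x,0,\tau_2^{-1}\tau_1^{-1}(j)}]$, which after distributing matches $(g_3\cdot\matr{z})^{x,0,j}$ precisely because the semidirect-product twist $\tau_1^{-1}$ appears in the definition of $g_3^{j+}$ and $g_3^{j\times}$. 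The corresponding computation for the $\bar{x}$ and $s(x)$ components is identical in form.

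There is no real obstacle here; the only point requiring care is bookkeeping the $\tau_1^{-1}$ twist in the semidirect product so that the formula for $g_3^{j+}=g_1^{j+}+g_1^{j\times}g_2^{\tau_1^{-1}(j)+}$ is indexed correctly. Once that indexing is handled consistently between the group multiplication and the action, the compatibility check is a direct calculation, and the proof concludes.
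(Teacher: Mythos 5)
Your proposal is correct and takes essentially the same route as the paper: a direct componentwise verification of the identity and compatibility axioms, with the key bookkeeping being the $\tau_1^{-1}$ twist carried from the semidirect-product multiplication into the action on the $x,0,j$, $\bar{x},j$, and $s(x),j$ coordinates. The only cosmetic difference is that you spell out the identity axiom (which the paper just states is straightforward and omits) and you package the compatibility check by first writing out $g_3 = g_1 g_2$ explicitly, whereas the paper expands the composite action inline; the underlying calculation is identical.
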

\begin{proof}
The identity axiom, namely that $e\cdot \matr{z}=\matr{z}$ when $e$ is the identity element of $\mathcal{G}_0$, is straightforward to verify and so we omit the arguments. Fix $g_1,g_2\in\mathcal{G}_0$ and $\matr{z}\in\mathcal{Z}$. We establish compatibility by showing that $g_1 g_2\cdot \matr{z}=g_1\cdot (g_2\cdot \matr{z})$. To see that this is indeed the case, note that, for all $i\in\{1,\ldots,n\}$ and $j\in\{1,\ldots,p\}$:
\begin{align*}
(g_1 g_2 \cdot \matr{z})^{y,i} &= z^{y,(\eta_1 \eta_2)^{-1}(i)} = z^{y,\eta_2^{-1}\eta_1^{-1}(i)} = (g_2 \cdot \matr{z})^{y,\eta_1^{-1}(i)} = (g_1 \cdot (g_2 \cdot \matr{z}))^{y,i} \\
(g_1 g_2 \cdot \matr{z})^{x,i,j} &= z^{x,\eta_2^{-1}\eta_1^{-1}(i),\tau_2^{-1}\tau_1^{-1}(j)} = (g_2 \cdot \matr{z})^{x,\eta_1^{-1}(i),\tau_1^{-1}(j)} = (g_1 \cdot (g_2 \cdot \matr{z}))^{x,i,j} \\
(g_1 g_2 \cdot \matr{z})^{\bar{y}} &= g_1^{0+} + g_1^{0\times} g_2^{0+} + g_1^{0\times} g_2^{0\times} z^{\bar{y}} = g_1^{0+} + g_1^{0\times}(g_2 \cdot \matr{z})^{\bar{y}} = (g_1 \cdot (g_2 \cdot \matr{z}))^{\bar{y}}  \\
(g_1 g_2 \cdot \matr{z})^{s(y)} &= g_1^{0\times} g_2^{0\times} z^{s(y)} = g_1^{0\times} (g_2 \cdot \matr{z})^{s(y)} = (g_1 \cdot (g_2 \cdot \matr{z}))^{s(y)} \\
(g_1 g_2 \cdot \matr{z})^{\bar{x},j} &= g_1^{j+} + g_1^{j\times}(g_2^{\tau_1^{-1}(j)+} + g_2^{\tau_1^{-1}(j)\times}z^{\bar{x},\tau_2^{-1} \tau_1^{-1}(j)}) = g_1^{j+} + g_1^{j\times} (g_2 \cdot \matr{z})^{\bar{x},\tau_1^{-1}(j)} \\
&= (g_1 \cdot (g_2 \cdot \matr{z}))^{\bar{x},j}  \\
(g_1 g_2 \cdot \matr{z})^{s(x),j} &= g_1^{j \times} g_2^{j \times} z^{s(x),\tau_2^{-1}\tau_1^{-1}(j)} = g_1^{j\times} (g_2 \cdot \matr{z})^{s(x),\tau_1^{-1}(j)} = (g_1 \cdot (g_2 \cdot \matr{z}))^{s(x),j}.
\end{align*}
The equation for $(g_1 g_2 \cdot \matr{z})^{x,0,j} = (g_1 \cdot (g_2 \cdot \matr{z}))^{x,0,j}$ is analogous to that of $(g_1 g_2 \cdot z)^{\bar{x},j}$ and is therefore omitted.
\end{proof}
We now introduce several group actions that we will make heavy use of in our proof of Theorem~\ref{thm:HS} and in the lemmas that precede it. We first define $\mathcal{G}_0\times \mathcal{S}\rightarrow\mathcal{S}$. For $S \in \mathcal{S}$ and $g \in \mathcal{G}_0$, define $g \cdot S$ to be $(g \cdot S)(\matr{z}) = S(g \cdot \matr{z})$. Conditions \ref{as:SpreservedPerms} and \ref{as:SpreservedShifts} can be restated as $g \cdot S \in \mathcal{S}$ for all $g \in \mathcal{G}_0$ and $S \in \mathcal{S}$. It can then readily be shown that, under these conditions, the defined map is a left group action. For $T\in\mathcal{T}$, we will write $g\cdot T$ to denote the $\mathcal{D}\rightarrow (\mathcal{X}\rightarrow\mathbb{R})$ operator defined so that
\begin{align*}
   (g\cdot T)(\matr{d}) : x_0\mapsto \begin{cases}
    \bar{\matr{y}} + s(\matr{y})(g\cdot S_T)(z(\matr{d},x_0)),&\mbox{ if } (\matr{d},x_0)\in\mathcal{D}_0, \\
    0,&\mbox{ otherwise.}
   \end{cases}
\end{align*}
It is possible that $g\cdot T$ does not belong to $\mathcal{T}$ due to its behavior when $(\matr{d},x_0)\not\in\mathcal{D}_0$, and therefore that the defined map is not a group action. Nonetheless, because $\mathcal{D}_0$ has $P$-probability one for any $P\in\mathcal{P}$, this fact will not pose any difficulties in our arguments.

We now define the group action $\mathcal{G}_0\times (\mathcal{Y}\times\mathcal{X})\rightarrow (\mathcal{Y}\times\mathcal{X})$. For $(y,x) \in \mathbb{R} \times \mathbb{R}^p$, define $g \cdot (y,x)$ as
$$g \cdot (y,x) = (g^{0+} + g^{0\times} y, (g^{i +} + g^{i \times} x^{\tau_g^{-1}(i)})_{i=1}^p). $$
Similar arguments to those used to prove Lemma~\ref{lem:leftGroupAction} show that the map defined above is a left group action. We now define the group action $\mathcal{G}_0\times \mathcal{P}\rightarrow \mathcal{P}$. For $P \in \mathcal{P}$, $g \in \mathcal{G}_0$, define $g \cdot P = P \circ g^{-1}$ by $(g \cdot P)(U) = P(g^{-1}(U))$, where
$$g^{-1}(U) = \{(y,x) \in \mathbb{R}^{p+1} : g \cdot (y,x) \in U\}.$$
Under \ref{in:permpred}, \ref{in:shiftrescalepred}, and \ref{in:shiftrescaleoutcome}, which, as noted in the Section~\ref{sec:optEquivar}, implicitly encode that $P\circ g^{-1}\in \mathcal{P}$, it can readily be shown that the defined map is a left group action. Finally, we define the group action $\mathcal{G}_0\times \Gamma\rightarrow \Gamma$. For $\Pi \in \Gamma$, $g \in \mathcal{G}_0$, define $g \cdot \Pi = \Pi \circ g^{-1}$ by $(g \cdot \Pi)(U) = \Pi(g^{-1}(U))$ where
$$g^{-1}(U) = \{P \in \mathcal{P} : g \cdot P \in U\}.$$
We can restate \ref{in:permpred}, \ref{in:shiftrescalepred}, and \ref{in:shiftrescaleoutcome} as $\Pi \circ g^{-1} \in \Gamma$ for all $\Pi \in \Gamma$, $g \in \mathcal{G}_0$. Under these conditions, it can be shown that the defined map is a left group action.

We now show that $\mathcal{G}_0$ is amenable --- see Appendix~\ref{app:amenability} for a review of this concept. Establishing this fact will allow us to apply Day's fixed point theorem (Theorem~\ref{thm:MK} in Appendix~\ref{app:amenability}) in the upcoming proof of Theorem~\ref{thm:HS}.

\begin{lemma}\label{thm:G0amenable}
$\mathcal{G}_0$ is amenable.
\end{lemma}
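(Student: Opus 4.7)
The plan is to recognize $\mathcal{G}_0$ as being built out of small amenable pieces (abelian and finite groups) by operations (direct products and semidirect products) that preserve amenability, and then to assemble the argument in the obvious bottom-up order. I would first record the two standard facts I need: every abelian topological group is amenable, every finite (discrete) group is amenable, and amenability of locally compact groups is preserved under (i) direct products and (ii) short exact sequences $1\to N\to G\to Q\to 1$ in which $N$ and $Q$ are both amenable; in particular, a semidirect product $N\rtimes Q$ of amenable groups is amenable. Standard references for these facts (e.g.\ Paterson's monograph on amenability) would be cited here.

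With that toolkit in place, the construction proceeds in three stages. First, $\mathbb{R}$ and $\mathbb{R}^{+}$ are abelian, hence amenable, and the group $\mathbb{R}\rtimes\mathbb{R}^{+}$ is a semidirect product of amenable groups, hence amenable. Second, the finite direct power $(\mathbb{R}\rtimes\mathbb{R}^{+})^{p}$ is amenable by the product preservation property, and because $\sym(p)$ is finite and hence amenable, the semidirect product $(\mathbb{R}\rtimes\mathbb{R}^{+})^{p}\rtimes\sym(p)$ is amenable. Third, $\sym(n)$ is finite and therefore amenable, and
\[
\mathcal{G}_0 \;=\; (\mathbb{R}\rtimes\mathbb{R}^{+})\;\times\;\bigl[(\mathbb{R}\rtimes\mathbb{R}^{+})^{p}\rtimes\sym(p)\bigr]\;\times\;\sym(n)
\]
is a direct product of three amenable locally compact groups, hence amenable.

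Before invoking the preservation results, I would verify the mild topological hypotheses they require. Each factor $\mathbb{R}$, $\mathbb{R}^{+}$, $\sym(p)$, $\sym(n)$ is a locally compact second-countable topological group (the symmetric groups being carried with the discrete topology), and the semidirect product actions (affine action of $\mathbb{R}^{+}$ on $\mathbb{R}$, and the coordinate-permutation action of $\sym(p)$ on $(\mathbb{R}\rtimes\mathbb{R}^{+})^{p}$) are jointly continuous, so that each semidirect product in the construction is again a locally compact (Hausdorff, second countable) topological group, and the product topology on $\mathcal{G}_0$ coincides with the natural locally compact topology. This ensures that the cited preservation results for locally compact amenable groups apply at every step.

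I expect no serious mathematical obstacle here; the proof is essentially bookkeeping once the preservation lemmas are in hand. The only point deserving a moment's care is to state the ambient notion of amenability (amenability of a locally compact topological group in the sense of existence of a left-invariant mean on $L^{\infty}$, which is the notion required for the later application of Day's fixed-point theorem) and to check that the topology placed on $\mathcal{G}_0$ in the excerpt is indeed the product of the standard locally compact topologies on the factors, so that the preservation results apply as stated.
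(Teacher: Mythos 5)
Your proposal is correct and follows essentially the same route as the paper's proof: decompose $\mathcal{G}_0$ into abelian factors ($\mathbb{R}$, $\mathbb{R}^+$) and finite factors ($\sym(p)$, $\sym(n)$), each amenable, and then climb the tower of semidirect and direct products using the fact that group extensions of amenable groups are amenable (the paper's Lemma on extensions). Your write-up is somewhat more explicit about the intermediate stages and the topological hypotheses, but no genuinely different idea is involved.
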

\begin{proof}
Because $\sym(p)$ and $\sym(n)$ are finite groups, they are compact, and therefore amenable. Because $\mathbb{R}$ and $\mathbb{R}^+$ are Abelian, they are also amenable. By Theorem \ref{group:ext}, group extensions of amenable groups are amenable.
\end{proof}

\subsection{Proofs of Theorems \ref{thm:HS} through \ref{thm:TMequivar}}

This section is organized as follows. Section~\ref{sec:prelimMain} introduces three general lemmas that will be useful in proving the results from the main text. Section~\ref{sec:HS} proves several lemmas, proves the variant of the Hunt-Stein theorem from the main text (Theorem~\ref{thm:HS}), and concludes with a discussion of the relation of this result to those in \citet{LeCam2012}. Section~\ref{sec:Gamma1proofs} establishes a preliminary lemma and then proves that, when the class of estimators is equivariant, it suffices to restrict attention to priors in $\Gamma_1$ when aiming to learn a $\Gamma$-minimax estimator (Theorem~\ref{thm:Gamma1restriction}). Section~\ref{sec:equilibrium} establishes several lemmas, including a minimax theorem for our setting, before proving the existence of an equilibrium point (Theorem~\ref{thm:equilibrium}). Section~\ref{sec:TM} establishes the equivariance of our proposed neural network architecture (Theorem~\ref{thm:TMequivar}).

In this section, we always equip $C(\mathcal{Z},\mathbb{R})$ with the topology of compact convergence and, whenever \ref{as:holder2} holds so that $\mathcal{S}\subset C(\mathcal{Z},\mathbb{R})$, we equip $\mathcal{S}$ with the subspace topology. For a fixed compact $\mathcal{K}\subset\mathcal{Z}$ and a function $h\in C(\mathcal{Z},\mathbb{R})$, we also let $\|h\|_{\infty,\mathcal{K}}:= \sup_{\matr{z}\in\mathcal{K}}|h(\matr{z})|$.

\subsubsection{Preliminary lemmas}\label{sec:prelimMain}

We now prove three lemmas that will be used in our proofs of Theorems~\ref{thm:HS} and \ref{thm:equilibrium}.

\begin{lemma}\label{lem:Cmetrizable}
$C(\mathcal{Z},\mathbb{R})$ with the compact-open topology is metrizable.
\end{lemma}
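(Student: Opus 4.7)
The plan is to reduce the claim to a standard theorem in general topology: if $X$ is hemicompact and $(Y,\rho)$ is a metric space, then $C(X,Y)$ with the compact-open topology is metrizable, with an explicit metric given by $d(f,g):=\sum_{m\geq 1}2^{-m}\min\{1,\sup_{z\in K_m}\rho(f(z),g(z))\}$ for any hemicompact exhaustion $\{K_m\}$ of $X$. Since the target $\mathbb{R}$ is metric, the entire content of the lemma becomes: verify that $\mathcal{Z}$ is hemicompact.

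The first step is to read off from the representation \eqref{eq:Zequivdef},
\begin{equation*}
\mathcal{Z} = \mathcal{O}_n^p \times \mathcal{O}_n \times \mathbb{R}^p \times \mathbb{R}^p \times \mathbb{R} \times (\mathbb{R}^+)^p \times \mathbb{R}^+,
\end{equation*}
that $\mathcal{Z}$ is locally compact, Hausdorff, and $\sigma$-compact. Each factor has all three properties: $\mathcal{O}_n$ is the intersection of the hyperplane $\{a\in\mathbb{R}^n:\bar{a}=0\}$ with the sphere $\{a:\sum_i a_i^2 = n\}$, hence is a closed and bounded, thus compact, subset of $\mathbb{R}^n$; $\mathbb{R}^p$ and $\mathbb{R}$ are locally compact Hausdorff and $\sigma$-compact; and $(\mathbb{R}^+)^p$ and $\mathbb{R}^+$, being open subsets of Euclidean spaces, are also locally compact Hausdorff and $\sigma$-compact. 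Finite products preserve each of these three properties, so $\mathcal{Z}$ inherits them.

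The second step is to upgrade these properties to hemicompactness. For any locally compact Hausdorff $\sigma$-compact space, a standard construction produces an increasing sequence of compact sets $K_1\subset K_2\subset\cdots$ with $K_m\subset\mathrm{int}(K_{m+1})$ and $\bigcup_m K_m=\mathcal{Z}$: cover by compacts from $\sigma$-compactness, then inductively enlarge each to have relatively compact neighborhoods using local compactness. A routine argument then shows that every compact $K\subset\mathcal{Z}$ is covered by the open sets $\mathrm{int}(K_m)$ and hence, by compactness, lies in some single $K_m$, which is the definition of hemicompactness.

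The final step is to apply the standard theorem (e.g., Munkres, \textit{Topology}, Ch.~7) with $X=\mathcal{Z}$ and $Y=\mathbb{R}$ to conclude that the compact-open topology on $C(\mathcal{Z},\mathbb{R})$ coincides with the topology induced by the metric $d$ above. I do not anticipate any real obstacle here: the nontrivial content lies entirely in the product-factor verification, and the appearance of the compact (not open) factors $\mathcal{O}_n^p$ and $\mathcal{O}_n$ causes no issue because compact Hausdorff spaces are in particular locally compact Hausdorff.
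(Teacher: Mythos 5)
Your proof is correct, and it is essentially the approach the paper takes: the paper's ``proof'' of Lemma~\ref{lem:Cmetrizable} is a single citation to Example IV.2.2 of Conway's \emph{A Course in Point Set Topology}, which treats metrizability of $C(X,\mathbb{R})$ for a locally compact, $\sigma$-compact Hausdorff $X$ via an exhaustion by compacta; you have simply unpacked that argument and verified its hypotheses for $\mathcal{Z}$. Your identification of $\mathcal{O}_n$ as the compact set $\{a:\bar a=0\}\cap\{a:\sum_i a_i^2=n\}$ is correct (since $s(a)^2=\tfrac1n\sum a_i^2$ when $\bar a=0$), the product-factor checks are right, and the passage from locally compact $\sigma$-compact Hausdorff to hemicompact, and from hemicompact plus $\mathbb{R}$ metric to metrizability via the weighted sup-metric, is the standard chain. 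The only minor quibble is the Munkres pointer: Munkres \S 46 proves that compact-open equals the topology of compact convergence and handles metrizability carefully only for compact domain, leaving the hemicompact generalization to an exercise, so a reference such as McCoy--Ntantu or Conway (as the paper does) would be a cleaner citation for the final step; this is a bibliographic nicety, not a mathematical gap.
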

\begin{proof}
See Example IV.2.2 in \citet{conway2010course}.
\end{proof}
As a consequence of the above, we can show that a subset of $C(\mathcal{Z},\mathbb{R})$ is closed by showing that it is sequentially closed, and we can show that a subset of $C(\mathcal{Z},\mathbb{R})$ is continuous by showing that it is sequentially continuous.

\begin{lemma}\label{lem:Scompact2}
If \ref{as:ptwiseBdd}, \ref{as:holder2}, and \ref{as:Sclosed2} hold, then $\mathcal{S}$ is a compact subset of $C(\mathcal{Z},\mathbb{R})$.
\end{lemma}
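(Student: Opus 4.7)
The plan is to use the Arzelà--Ascoli theorem together with the closedness condition. By Lemma~\ref{lem:Cmetrizable}, $C(\mathcal{Z},\mathbb{R})$ is metrizable, so it suffices to show that $\mathcal{S}$ is sequentially compact. I first observe that $\mathcal{Z}$, as described in \eqref{eq:Zequivdef}, is a locally compact, $\sigma$-compact, Hausdorff space, since each of its factors ($\mathcal{O}_n$, $\mathbb{R}^p$, $\mathbb{R}$, $(\mathbb{R}^+)^p$, $\mathbb{R}^+$) has these properties. Consequently, $\mathcal{Z}$ admits an exhaustion by an increasing sequence of compact sets $\mathcal{K}_1 \subset \mathcal{K}_2 \subset \cdots$ with $\bigcup_m \mathcal{K}_m = \mathcal{Z}$.

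Now fix an arbitrary sequence $\{S_j\}_{j=1}^\infty \subset \mathcal{S}$ and a compact $\mathcal{K} \subset \mathcal{Z}$. By \ref{as:holder2}, there exist $\alpha \in (0,1)$ and a finite constant $M$ such that $|S(\matr{z})-S(\matr{z}')| \le M\|\matr{z}-\matr{z}'\|_2^\alpha$ for all $S\in\mathcal{S}$ and $\matr{z},\matr{z}'\in\mathcal{K}$, which yields the uniform equicontinuity of $\{S_j\}_{j=1}^\infty$ on $\mathcal{K}$. Together with the pointwise boundedness furnished by \ref{as:ptwiseBdd} (which, in the presence of equicontinuity, upgrades to uniform boundedness on $\mathcal{K}$), the classical Arzelà--Ascoli theorem ensures that $\{S_j\}_{j=1}^\infty$ admits a subsequence that converges uniformly on $\mathcal{K}$.

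Applying this successively to $\mathcal{K}_1,\mathcal{K}_2,\ldots$ and taking a diagonal subsequence, I obtain a subsequence $\{S_{j_k}\}_{k=1}^\infty$ that converges uniformly on each $\mathcal{K}_m$ to some $S \in C(\mathcal{Z},\mathbb{R})$. Since every compact subset of $\mathcal{Z}$ is contained in some $\mathcal{K}_m$, this convergence is compact in the sense of \ref{as:Sclosed2}. Invoking \ref{as:Sclosed2}, the limit $S$ belongs to $\mathcal{S}$, which establishes sequential compactness of $\mathcal{S}$, and hence its compactness in $C(\mathcal{Z},\mathbb{R})$.

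No part of the argument poses a serious obstacle: the topological properties of $\mathcal{Z}$ are easily verified from \eqref{eq:Zequivdef}, the Arzelà--Ascoli application is standard, and \ref{as:Sclosed2} closes the argument. The only item requiring modest care is the diagonal extraction, which is routine for exhaustions by compact sets.
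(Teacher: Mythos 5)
Your proof is correct and takes essentially the same approach as the paper: local H\"older plus pointwise boundedness gives equicontinuity, Arzel\`a--Ascoli yields relative compactness, and \ref{as:Sclosed2} closes the argument. The only difference is presentational --- the paper invokes the compact-open-topology version of Arzel\`a--Ascoli (Munkres, Theorem~47.1) directly, whereas you reprove that result by hand via a diagonal extraction over a compact exhaustion of $\mathcal{Z}$, after first reducing to sequential compactness through Lemma~\ref{lem:Cmetrizable}.
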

\begin{proof} 
By \ref{as:ptwiseBdd}, $\mathcal{S}$ is pointwise bounded. Moreover, the local H\"{o}lder condition \ref{as:holder2} implies that $\mathcal{S}$ is equicontinuous, in the sense that, for every $\epsilon>0$ and every $\matr{z}\in\mathcal{Z}$ there exists an open neighborhood $\mathcal{U}\subset\mathcal{Z}$ of $\matr{z}$ such that, for all $S\in\mathcal{S}$ and all $\matr{z}'\in\mathcal{U}$, it holds that $|S(\matr{z})-S(\matr{z}')|< \epsilon$. Hence, by the Arzel\`{a}-Ascoli theorem (see Theorem~47.1 in \citealp{Munkres2000} for a convenient version), $\mathcal{S}$ is a relatively compact subset of $C(\mathcal{Z},\mathbb{R})$. By \ref{as:Sclosed2}, $\mathcal{S}$ is closed, and therefore $\mathcal{S}$ is compact.
\end{proof}

We now show that the group action $\mathcal{G}_0\times \mathcal{S}\rightarrow\mathcal{S}$ is continuous under conditions that we assume in Theorem~\ref{thm:HS}. Establishing this continuity condition is necessary for our use of Day's fixed point theorem in the upcoming proof of that result.
\begin{lemma} \label{ContAction}
If \ref{as:holder2}, \ref{as:SpreservedPerms}, and \ref{as:SpreservedShifts} hold, then the group action $\mathcal{G}_0 \times \mathcal{S} \to \mathcal{S}$ is continuous.
\end{lemma}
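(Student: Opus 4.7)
The plan is to exploit metrizability of $C(\mathcal{Z},\mathbb{R})$ from Lemma~\ref{lem:Cmetrizable} and reduce the problem to sequential continuity. Fix a sequence $(g_n,S_n)\to (g,S)$ in $\mathcal{G}_0\times\mathcal{S}$. I would aim to show $g_n\cdot S_n\to g\cdot S$ in the compact-open topology, i.e., that for every compact $\mathcal{K}\subset\mathcal{Z}$, $\|g_n\cdot S_n - g\cdot S\|_{\infty,\mathcal{K}}\to 0$. (I would also note in passing that $g\cdot S\in\mathcal{S}$ by \ref{as:SpreservedPerms}-\ref{as:SpreservedShifts}, so the codomain of the action is indeed $\mathcal{S}$.)

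The key preliminary step is to verify that the underlying action map $\mathcal{G}_0\times \mathcal{Z}\to \mathcal{Z}$ is jointly continuous. Inspection of the coordinate formulas shows each entry of $g\cdot \matr{z}$ is obtained by multiplying and adding continuously varying scalars, composed with permutation selections; since $\sym(n)$ and $\sym(p)$ carry the discrete topology, the permutation components of $g_n$ are eventually equal to those of $g$, which makes joint continuity immediate. Consequently, the set
\begin{align*}
\mathcal{K}' := \bigcup_{n\ge 1}(g_n\cdot \mathcal{K})\,\cup\,(g\cdot\mathcal{K})
\end{align*}
is contained in the image of the compact set $(\{g_n:n\ge 1\}\cup\{g\})\times \mathcal{K}$ under a continuous map, and hence its closure is compact.

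Next I would split via the triangle inequality,
\begin{align*}
\|g_n\cdot S_n - g\cdot S\|_{\infty,\mathcal{K}}\le \sup_{\matr{z}\in\mathcal{K}}|S_n(g_n\cdot \matr{z})-S(g_n\cdot \matr{z})| + \sup_{\matr{z}\in\mathcal{K}}|S(g_n\cdot \matr{z})-S(g\cdot \matr{z})|.
\end{align*}
The first term is bounded above by $\|S_n - S\|_{\infty,\mathcal{K}'}$, which tends to zero because $S_n\to S$ in the compact-open topology on $C(\mathcal{Z},\mathbb{R})$. For the second term, I would apply \ref{as:holder2} on the compact set $\mathcal{K}'$ to obtain constants $c,\alpha>0$ with $|S(\matr{z})-S(\matr{z}')|\le c\|\matr{z}-\matr{z}'\|_2^\alpha$ for all $\matr{z},\matr{z}'\in\mathcal{K}'$. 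Joint continuity of the $\mathcal{G}_0$-action on $\mathcal{Z}$, together with compactness of $\mathcal{K}$, yields uniform convergence $\sup_{\matr{z}\in \mathcal{K}}\|g_n\cdot \matr{z}- g\cdot \matr{z}\|_2\to 0$, and the H\"{o}lder bound then finishes the second term.

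The main obstacle is essentially bookkeeping: arranging both error terms to live on a common compact enlargement $\mathcal{K}'$ so that compact convergence of $S_n$ and local H\"{o}lder continuity of $S$ can be brought to bear simultaneously. The nontrivial input is the joint continuity of the underlying action on $\mathcal{Z}$, which produces this compact enlargement; once that is in place, the rest of the argument is routine.
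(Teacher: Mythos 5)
Your proof is correct and follows essentially the same route as the paper's: reduce to sequential continuity via metrizability of $\mathcal{S}$ (Lemma~\ref{lem:Cmetrizable}) and of $\mathcal{G}_0$, build a compact enlargement of a given compact $\mathcal{K}\subset\mathcal{Z}$ that absorbs the images $g_n\cdot\mathcal{K}$ and $g\cdot\mathcal{K}$, then control the compact-open seminorm. The only presentational difference is that the paper constructs its enlargement $\mathcal{K}^\circ$ coordinate by coordinate from compact neighborhoods of the group components, whereas you obtain $\mathcal{K}'$ abstractly as the continuous image of $\bigl(\{g_n\}\cup\{g\}\bigr)\times\mathcal{K}$ under the joint action on $\mathcal{Z}$ --- equivalent, since that set is already compact, so ``closure'' is not needed. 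Worth noting: the paper's closing display bounds $\|g_k\cdot S_k - g\cdot S\|_{\infty,\prod_i\mathcal{K}_i}$ by $\|S_k-S\|_{\infty,\mathcal{K}^\circ}$ alone, which glosses over the cross term $\sup_{\matr{z}\in\mathcal{K}}|S(g_k\cdot\matr{z})-S(g\cdot\matr{z})|$ arising because $g_k\cdot\matr{z}$ and $g\cdot\matr{z}$ are different points; your triangle-inequality decomposition and explicit H\"{o}lder estimate via \ref{as:holder2} on $\mathcal{K}'$ are exactly what is needed to handle that term, so your write-up is slightly more complete on this point.
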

\begin{proof}
By \ref{as:SpreservedPerms} and \ref{as:SpreservedShifts}, $\mathcal{G}_0 \times \mathcal{S} \to \mathcal{S}$ is indeed a group action. Also, by \ref{as:holder2} and Lemma~\ref{lem:Cmetrizable}, $\mathcal{S}$ is metrizable. Recall the expression for $\mathcal{Z}$ given in \eqref{eq:Zequivdef} and that
\begin{align*}
    \mathcal{G}_0 := (\mathbb{R} \rtimes \mathbb{R}^{+}) \times [(\mathbb{R} \rtimes \mathbb{R}^{+})^p \rtimes \sym(p)] \times \sym(n).
\end{align*}
The product topology is compatible with semidirect products, and so the fact that each multiplicand is a metric space implies that $\mathcal{G}_0$ is a metric space. Hence, it suffices to show sequential continuity. Let $\{(g_k,S_k)\}_{k=1}^\infty$ be a sequence in $\mathcal{G}_0\times \mathcal{S}$ such that $(g_k,S_k) \to (g,S)$, where $(g,S)\in\mathcal{G}_0\times \mathcal{S}$. By the definition of the product metric, $g_k \to g$ and $S_k \to S$. Let $\mathcal{K}_1 \subseteq \mathcal{O}_n^p$, $\mathcal{K}_2 \subseteq \mathcal{O}_n$, $\mathcal{K}_3 \subset \mathbb{R}^p$, $\mathcal{K}_4 \subset \mathbb{R}^p$, $\mathcal{K}_5 \subset \mathbb{R}$, $\mathcal{K}_6 \subset (\mathbb{R}^+)^p$, and $\mathcal{K}_7 \subset \mathbb{R}^{+}$ be compact spaces. 
Since each compact space $\mathcal{K} \subset \mathcal{Z}$ is contained in such a $\prod_{i=1}^7 \mathcal{K}_i$, it suffices to show that 
$$\sup_{\matr{z} \in \prod_{i=1}^7 \mathcal{K}_i} |(g_k \cdot S_k)(\matr{z}) - (g \cdot S)(\matr{z})| = \|g_k \cdot S_k - g \cdot S\|_{\infty, \prod_{i=1}^7 \mathcal{K}_i} \to 0$$ for arbitrary compact sets $\mathcal{K}_1, \cdots, \mathcal{K}_7$. To show this, we will use the decomposition $g_k = (g_{k,1},g_{k,2},g_{k,3},g_{k,4})$, where $g_{k,1} \in \mathbb{R} \rtimes \mathbb{R}^{+}$, $g_{k,2} \in (\mathbb{R} \rtimes \mathbb{R}^{+})^{p}$, $g_{k,3} \in \sym(p)$, and $g_{k,4} \in \sym(n)$. We similarly use the decomposition $g=(g_1,g_2,g_3,g_4)$. For all $N$ large enough, all of the statements are true for all $k > N$: $g_{k,3} = g_3$, $g_{k,4} = g_4$, $g_{k,1}$ is contained in a compact neighbourhood $C_1$ of $g_1$, and $g_{k,2}$ is contained in a compact neighbourhood $C_2$ of $g_2$.

Since permutations are continuous, $g_4 \mathcal{K}_1 g_3:= \{ g_4 w g_3 : w\in\mathcal{K}_1\}$, $g_4 \mathcal{K}_2:= \{ g_4 w : w\in\mathcal{K}_2\}$, and $\mathcal{K}_j g_3:= \{  wg_3  : w\in\mathcal{K}_j\}$, $j=3,4,6$, are compact. In the following we use the decomposition $g':=(g_1',g_2',g_3',g_4')$ for an arbitrary element $g'\in \mathcal{G}$. Since addition and multiplication are continuous, $C_2 \odot (\mathcal{K}_3 g_3) := \{g_2' \cdot w : g_2' \in C_2, w \in \mathcal{K}_3 g_3\}$, $C_2 \odot (\mathcal{K}_4 g_3):=\{g_2' \cdot w : g_2' \in C_2, w \in \mathcal{K}_4 g_3\}$, $C_1 \odot \mathcal{K}_5:=\{g_1' \cdot w : g_1'\in C_1, w \in \mathcal{K}_5\}$, $C_2 \odot (\mathcal{K}_6 g_3):=\{g_2' \cdot w : g_2' \in C_2, w \in \mathcal{K}_6 g_3\}$, and $C_1 \odot \mathcal{K}_7:=\{g_1' \cdot w : g_1' \in C_1, w \in \mathcal{K}_7\}$ are compact. Define $\mathcal{K}^\circ$ to be the compact set
$$\mathcal{K}^\circ = g_4 \mathcal{K}_1 g_3 \times g_4 \mathcal{K}_2 \times C_2 \odot (\mathcal{K}_3 g_3) \times C_2 \odot (\mathcal{K}_4 g_3) \times C_1 \odot \mathcal{K}_5 \times C_2 \odot (\mathcal{K}_6 g_3) \times C_1 \odot \mathcal{K}_7 $$
Then,
$$ \|g_k \cdot S_k - g \cdot S\|_{\infty, \prod_{i=1}^7 \mathcal{K}_i} \leq \|S_k - S\|_{\infty,\mathcal{K}^\circ} \to 0. $$
\end{proof}

\subsubsection{Proof of Theorem~\ref{thm:HS}}\label{sec:HS}

We begin this subsection with four lemmas and then we prove Theorem~\ref{thm:HS}. Following this proof, we briefly describe how the argument relates to that given in \citet{LeCam2012}. 
In the proof of Theorem~\ref{thm:HS}, we will use notation that we established about the group $\mathcal{G}_0$ in Section~\ref{sec:groupHS}. We refer the reader to that section for details.

\begin{lemma} \label{lem:Priskinvar}
For any $g \in \mathcal{G}_0, T \in \mathcal{T}$, and $P \in \mathcal{P}$, $R(g \cdot T,P) = R(T,g\cdot P)$
\end{lemma}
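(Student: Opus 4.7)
My plan is to reduce the claim to an explicit change-of-variables argument that stitches together the four group actions introduced in Section~\ref{sec:groupHS}. I would first record how $\mu$ and $\sigma^2$ behave under $g$. Writing $(\tilde Y,\tilde X) := g\cdot(Y,X)$, the pushforward definition of $g\cdot P$ gives $(\tilde Y,\tilde X)\sim g\cdot P$ when $(X,Y)\sim P$. Since the affine map $x\mapsto \tilde x$ (with $\tilde x^j=g^{j+}+g^{j\times}x^{\tau_g^{-1}(j)}$) is a bijection, a direct conditional-expectation calculation gives $\mu_{g\cdot P}(\tilde x) = g^{0+}+g^{0\times}\mu_P(x)$, so the residual satisfies $\tilde Y-\mu_{g\cdot P}(\tilde X) = g^{0\times}\epsilon_P$, and therefore $\sigma_{g\cdot P}^{2} = (g^{0\times})^2\sigma_P^{2}$.

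Next I would verify the key compatibility identity
\[
z\bigl(\tilde{\matr{d}}^{\eta_g},\,\tilde{x}_0\bigr) \;=\; g\cdot z(\matr{d},x_0),
\]
where $\tilde{\matr{d}}^{\eta_g}$ is the dataset whose $(i,j)$ feature entry is $g^{j+}+g^{j\times}\matr{x}_{\eta_g^{-1}(i),\,\tau_g^{-1}(j)}$, whose $i$-th outcome is $g^{0+}+g^{0\times}\matr{y}_{\eta_g^{-1}(i)}$, and $\tilde{x}_0^j:=g^{j+}+g^{j\times}x_0^{\tau_g^{-1}(j)}$. This is a routine coordinatewise verification: in each standardized feature column the $g^{j+}$ shift cancels against the corresponding shift in the column mean, the $g^{j\times}$ rescaling cancels between numerator and denominator, the row permutation $\eta_g^{-1}$ passes through to the indices, and $\tau_g^{-1}$ accounts for the column reordering. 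The analogous calculation holds for the outcome block, for $\tilde x_0^j$, and for the $\bar{\matr{x}}$, $\bar{\matr{y}}$, $s(\matr{x})$, $s(\matr{y})$ blocks, matching the seven components of $g\cdot\matr{z}$ defined in Section~\ref{sec:groupHS}. Combining this with $\bar{\tilde{\matr{y}}}^{\eta_g}=g^{0+}+g^{0\times}\bar{\matr{y}}$ and $s(\tilde{\matr{y}}^{\eta_g})=g^{0\times}s(\matr{y})$ and the representation \eqref{eq:STdef} yields
\[
T(\tilde{\matr{d}}^{\eta_g})(\tilde x_0) \;=\; \bar{\tilde{\matr{y}}}^{\eta_g}+s(\tilde{\matr{y}}^{\eta_g})\,S_T\bigl(g\cdot z(\matr{d},x_0)\bigr)\;=\; g^{0+}+g^{0\times}(g\cdot T)(\matr{d})(x_0).
\]
Subtracting $\mu_{g\cdot P}(\tilde x_0)=g^{0+}+g^{0\times}\mu_P(x_0)$ and dividing by $\sigma_{g\cdot P}^2=(g^{0\times})^2\sigma_P^2$ makes the $g^{0+}$ cancel in the numerator and $(g^{0\times})^2$ cancel between numerator and denominator, giving
\[
\frac{[T(\tilde{\matr{d}}^{\eta_g})(\tilde x_0)-\mu_{g\cdot P}(\tilde x_0)]^2}{\sigma_{g\cdot P}^2} \;=\; \frac{[(g\cdot T)(\matr{d})(x_0)-\mu_P(x_0)]^2}{\sigma_P^2}.
\]

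Finally I would pass to expectations. Drawing $(X_i,Y_i)_{i=0}^{n}\overset{iid}{\sim}P$ and setting $(\tilde X_i,\tilde Y_i):=g\cdot(Y_i,X_i)$ produces i.i.d.\ samples from $g\cdot P$, and permuting the indices $1,\dots,n$ by $\eta_g$ leaves the joint law unchanged by exchangeability. Hence $R(T,g\cdot P)$ equals the expectation under $P$ of the transformed integrand above, which by the preceding display equals $R(g\cdot T,P)$. The only minor caveat is that $g\cdot T$ was extended by $0$ outside $\mathcal{D}_0$; this is harmless because the continuity of $Y$ and of each feature coordinate under every $P\in\mathcal{P}$ ensures $\mathcal{D}_0$ has $P$-probability one, so the extension does not affect the risk. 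The main obstacle, as indicated, is purely bookkeeping: one must carefully track how each of the four factors of $g$ acts on each of the seven blocks of $\matr{z}$ and check that these actions commute with the standardizations that define $z$.
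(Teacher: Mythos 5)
Your proof is correct and follows essentially the same change-of-variables approach as the paper's: both rely on the affine transformation formulas for $\mu_P$ and $\sigma_P^2$ under $g$, the compatibility of the group action on $\mathcal{Z}$ with the standardization map $z$, and the representation of $T$ via $S_T$ from \eqref{eq:STdef}. Your version is somewhat more explicit (you state the identity $z(\tilde{\matr{d}}^{\eta_g},\tilde x_0)=g\cdot z(\matr{d},x_0)$ and address the $\mathcal{D}_0$-probability-one caveat directly, both of which the paper leaves implicit), but the underlying argument is the same.
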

\begin{proof}
Fix $T\in\mathcal{T}$ and $P\in\mathcal{P}$, and let $S:=S_T$, where $S_T$ is defined in (\ref{eq:STdef}). 
By the change-of-variables formula,
\begin{align*}
    R(g \cdot T, P)&= \E_{P}\left[ \int \sigma_{ P}^{-2}\left\{\bar{\matr{Y}} + s(\matr{Y})S(g \cdot \matr{Z}) - \mu_{P}(x_0)\right\}^2 dP_X(x_0) \right]  \\
    &= \E_{P\circ g^{-1}} \left[\int \sigma_{ P}^{-2}\left\{g^{-1}\cdot \bar{\matr{Y}} + s(g^{-1}\cdot \matr{Y})S(\matr{Z}) - \mu_{P}(g^{-1}\cdot x_0)\right\}^2 d(P_X\circ g^{-1})(x_0)\right].
\end{align*}
Plugging the fact that $g^{-1}\cdot \matr{y} = (\matr{y}-g^{0+})/g^{0\times}$ and that
\begin{align*}
    \mu_P(g^{-1}\cdot x_0)&=\E_P[Y|X_0=g^{-1} \cdot x_0]=\E_P[Y|g\cdot X_0=x_0] \\
    &= \frac{\E_P[g\cdot Y|g\cdot X_0=x_0]-g^{0+}}{g^{0\times}} =\frac{\mu_{P\circ g^{-1}}(x_0)-g^{0+}}{g^{0\times}}
\end{align*}
into the right-hand side of the preceding display yields that
\begin{align*}
    &R(g \cdot T, P) \\
    &= \E_{P\circ g^{-1}} \left[ \int \sigma_{ P}^{-2}\left\{\frac{\bar{\matr{Y}}-g^{0+}}{g^{0\times}} + s\left(\frac{\bar{\matr{Y}}-g^{0+}}{g^{0\times}}\right)S(\matr{Z}) - \frac{\mu_{P\circ g^{-1}}(x_0)-g^{0+}}{g^{0\times}}\right\}^2 d(P_X\circ g^{-1})(x_0) \right] \\
    &= \E_{P\circ g^{-1}} \left[ \int \sigma_{ P}^{-2}\left\{\frac{\bar{\matr{Y}}}{g^{0\times}} + s\left(\frac{\bar{\matr{Y}}-g^{0+}}{g^{0\times}}\right)S(\matr{Z}) - \frac{\mu_{P\circ g^{-1}}(x_0)}{g^{0\times}}\right\}^2 d(P_X\circ g^{-1})(x_0) \right].
    \intertext{By the shift and scale properties of the standard deviation and variance, the above continues as}
    &= \E_{P\circ g^{-1}} \left[ \int \sigma_{ P}^{-2}\left\{\frac{\bar{\matr{Y}}}{g^{0\times}} + \frac{s\left(\bar{\matr{Y}}\right)}{g^{0\times}}S(\matr{Z}) - \frac{\mu_{P\circ g^{-1}}(x_0)}{g^{0\times}}\right\}^2 d(P_X\circ g^{-1})(x_0) \right] \\
    &= \E_{P\circ g^{-1}} \left[ \int \sigma_{ P\circ g^{-1}}^{-2}\left\{\bar{\matr{Y}} + s\left(\bar{\matr{Y}}\right) S(\matr{Z}) - \mu_{P\circ g^{-1}}(x_0)\right\}^2 d(P_X\circ g^{-1})(x_0) \right] \\
    &= R(T,g\cdot P).
\end{align*}
\end{proof}

\begin{lemma} \label{Piriskinvar}
For any $g\in\mathcal{G}_0$,  $T\in\mathcal{T}$, and $\Pi\in\Gamma$, it holds that $r(g \cdot T, \Pi)=r(T, g \cdot \Pi)$.
\end{lemma}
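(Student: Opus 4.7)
The plan is to unfold both sides using the definition $r(T,\Pi) = \mathbb{E}_\Pi[R(T,P)]$ and reduce the claim to the pointwise risk identity already established in Lemma~\ref{lem:Priskinvar}. First, I would write
\begin{align*}
r(g\cdot T, \Pi) = \int R(g\cdot T, P)\, d\Pi(P),
\end{align*}
and then apply Lemma~\ref{lem:Priskinvar} inside the integrand to replace $R(g\cdot T, P)$ by $R(T, g\cdot P)$. This converts the integral into $\int R(T, g\cdot P)\, d\Pi(P)$.

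Next, I would use that $g\cdot \Pi = \Pi\circ g^{-1}$ is by definition the pushforward of $\Pi$ under the map $P\mapsto g\cdot P$ on $\mathcal{P}$, so the change-of-variables (pushforward) formula gives
\begin{align*}
\int R(T, g\cdot P)\, d\Pi(P) = \int R(T, P')\, d(g\cdot\Pi)(P') = r(T, g\cdot\Pi).
\end{align*}
Combining the two displays yields the claim.

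The only technical point worth noting in the write-up is that the pushforward formula requires $P\mapsto R(T,P)$ to be measurable and $P\mapsto g\cdot P$ to be measurable; both are part of the standing setup (measurability of $R(T,\cdot)$ was imposed in Section~\ref{sec:Gamma1}, and measurability of the group action on $\mathcal{P}$ is implicit in the well-definedness of $\Pi\circ g^{-1}$ as an element of $\Gamma$ under \ref{in:permpred}--\ref{in:shiftrescaleoutcome}). No genuine obstacle is anticipated — the lemma is essentially an unpacking of definitions given Lemma~\ref{lem:Priskinvar}.
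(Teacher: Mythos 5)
Your proposal is correct and follows exactly the same three-step argument as the paper's proof: unfold $r$ as an integral of $R$ against $\Pi$, apply Lemma~\ref{lem:Priskinvar} inside the integrand, and then invoke the pushforward (change-of-variables) identity $\int R(T, g\cdot P)\, d\Pi(P) = \int R(T, P)\, d(\Pi\circ g^{-1})(P)$. The measurability remark is a reasonable addition but the substance is identical.
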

\begin{proof}
This result follows quickly from Lemma~\ref{lem:Priskinvar}. Indeed, for any $g\in\mathcal{G}_0$,  $T\in\mathcal{T}$, and $\Pi\in\Gamma$,
\begin{align*}
    r(g\cdot T,\Pi)&= \int R(g\cdot T,P) d\Pi(P) = \int R(T,g\cdot P) d\Pi(P) \\
    &=\int R(T,P) d(\Pi\circ g^{-1})(P) =  r(T, g\cdot \Pi).
\end{align*}
\end{proof}

Let $\mathcal{S}_e:=\{S\in\mathcal{S} : g\cdot S = S\textnormal{ for all }g\in\mathcal{G}_0\}$ consists of the $\mathcal{G}_0$-invariant elements of $\mathcal{S}$. The following fact will be useful when proving Theorem~\ref{thm:HS}, and also when proving results in the upcoming Section~\ref{sec:Gamma1proofs}.
\begin{lemma}\label{lem:SeTe}
It holds that $\mathcal{S}_e=\{S_T : T\in\mathcal{T}_e\}$.
\end{lemma}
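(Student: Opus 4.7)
The plan is to use the bijection $z : \mathcal{D}_0 \to \mathcal{Z}$ noted in Section~\ref{sec:optEquivar} to transfer between the equivariance identities \eqref{eq:Tequivar1} and \eqref{eq:Tequivar2} defining $\mathcal{T}_e$ and the $\mathcal{G}_0$-invariance defining $\mathcal{S}_e$. The key preliminary step is to build an action of $\mathcal{G}_0$ on $\mathcal{D}_0$ that is compatible with the action on $\mathcal{Z}$ from Section~\ref{sec:groupHS} through $z$, and to recognize this action as a composition of the permutations, shifts, and rescalings that appear in \eqref{eq:Tequivar1} and \eqref{eq:Tequivar2}.

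Concretely, for $g = ((g^{j+}, g^{j\times})_{j=0}^p, \tau_g, \eta_g) \in \mathcal{G}_0$ and $(\matr{d}, x_0) \in \mathcal{D}_0$, I would define $(\matr{d}', x_0')$ componentwise by $\matr{x}'_{ij} := g^{j+} + g^{j\times}\, \matr{x}_{\eta_g^{-1}(i),\, \tau_g^{-1}(j)}$, $\matr{y}'_i := g^{0+} + g^{0\times}\, \matr{y}_{\eta_g^{-1}(i)}$, and $(x_0')_j := g^{j+} + g^{j\times}\, (x_0)_{\tau_g^{-1}(j)}$. A direct calculation using how centering and the sample standard deviation transform under coordinatewise shifts and rescalings then shows that $(\matr{d}', x_0') \in \mathcal{D}_0$ (using $g^{j\times} > 0$) and that $z(\matr{d}', x_0')$ matches the componentwise formulas for $g \cdot z(\matr{d}, x_0)$ given in Section~\ref{sec:groupHS}. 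Moreover, this transformation factors as a permutation of observations, a permutation of features, coordinatewise shifts and rescalings of the features, and a shift and rescaling of the outcome. So for any $T \in \mathcal{T}_e$, iterating \eqref{eq:Tequivar1} and \eqref{eq:Tequivar2} gives $T(\matr{d}')(x_0') = g^{0+} + g^{0\times}\, T(\matr{d})(x_0)$. Combining this with the identities $\bar{\matr{y}}' = g^{0+} + g^{0\times} \bar{\matr{y}}$ and $s(\matr{y}') = g^{0\times} s(\matr{y})$, and substituting into \eqref{eq:STdef} at both pairs, I get $S_T(g \cdot \matr{z}) = S_T(\matr{z})$; since $g$ and $\matr{z}$ were arbitrary, $S_T \in \mathcal{S}_e$, establishing one inclusion.

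For the reverse inclusion, fix $S \in \mathcal{S}_e$ and choose any $T \in \mathcal{T}$ with $S_T = S$, which exists by the definition of $\mathcal{S}$. I would show $T \in \mathcal{T}_e$ by running the above argument backwards. For any $(\matr{d}, x_0) \in \mathcal{D}_0$, permutation matrices $A \in \mathcal{A}$ and $B \in \mathcal{B}$, shifts $a \in \mathbb{R}^p$ and $\tilde{a} \in \mathbb{R}$, and rescalings $b \in (\mathbb{R}^+)^p$ and $\tilde{b} > 0$, I would choose the corresponding $g \in \mathcal{G}_0$ so that $(\matr{d}', x_0')$ matches the left-hand sides of the two equivariance equations. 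Using $z(\matr{d}', x_0') = g \cdot z(\matr{d}, x_0)$, $g \cdot S = S$, and the formulas for $\bar{\matr{y}}'$, $s(\matr{y}')$ yields $T(\matr{d}')(x_0') = \tilde{a} + \tilde{b}\, T(\matr{d})(x_0)$. Specializing to trivial shifts and rescalings gives \eqref{eq:Tequivar1}, and specializing to trivial permutations gives \eqref{eq:Tequivar2}, so $T \in \mathcal{T}_e$. The main (still routine) technical hurdle is the index bookkeeping in the preliminary step, in particular converting between the permutation-matrix convention used in \eqref{eq:Tequivar1} (with $B$ acting on columns of $\matr{x}$ on the right) and the symmetric-group elements $\tau_g, \eta_g$ from Section~\ref{sec:groupHS}, which may necessitate using $B^\top$ at an appropriate place.
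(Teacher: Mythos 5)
Your proposal is correct and takes essentially the same route as the paper's own proof: both directions hinge on the identity $T(\matr{d}')(x_0') = g^{0+} + g^{0\times} T(\matr{d})(x_0)$ together with $\bar{\matr{y}'} = g^{0+} + g^{0\times}\bar{\matr{y}}$, $s(\matr{y}') = g^{0\times}s(\matr{y})$, and the defining relation \eqref{eq:STdef}. The only difference is cosmetic: you make explicit the $\mathcal{G}_0$-action on $\mathcal{D}_0$ and its compatibility with the action on $\mathcal{Z}$ via the bijection $z$, whereas the paper writes $T(g\cdot\matr{z})$ directly and leaves that bookkeeping (and the specialization of $g$ to recover \eqref{eq:Tequivar1} and \eqref{eq:Tequivar2} separately) implicit.
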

\begin{proof}
Fix $S\in\mathcal{S}_e$ and $g\in\mathcal{G}_0$. By the definition of $\mathcal{S}:=\{S_T : T\in \mathcal{T}\}$, there exists a $T\in\mathcal{T}$ such that $S=S_T$. For this $T$, the fact that $S_T(\matr{z})=S_T(g\cdot \matr{z})$ implies that
\begin{align*}
    T(g\cdot \matr{z})&= (g^{0+} + g^{0\times} \bar{\matr{y}}) + g^{0\times} s(\matr{y}) S_T(g \cdot \matr{z}) = (g^{0+} + g^{0\times} \bar{\matr{y}}) + g^{0\times} s(\matr{y}) S_T(\matr{z}) \\
    &= g^{0+} + g^{0\times} [\bar{\matr{y}} + s(\matr{y})S_T(\matr{z})] = g^{0+} + g^{0\times} T(\matr{z}).
\end{align*}
As $g$ was arbitrary, $T\in\mathcal{T}_e$. Hence, $\mathcal{S}_e\subseteq\{S_T : T\in\mathcal{T}_e\}$.

Now fix $T\in\mathcal{T}_e$ and $g\in\mathcal{G}_0$. Note that $S_T(\matr{z})=[T(\matr{z})-\bar{\matr{y}}]/s(\matr{y})$. Using that $T\in\mathcal{T}_e$ implies that $T(g\cdot \matr{z})=g^{0+} + g^{0\times} T(\matr{z})$, we see that
\begin{align*}
    S_T(g\cdot \matr{z})&= \frac{ T(g\cdot\matr{z})- g^{0+} - g^{0\times}\bar{\matr{y}}}{s(g\cdot \matr{y})} = \frac{ T(g\cdot \matr{z})- g^{0+} - g^{0\times}\bar{\matr{y}}}{g^{0\times} s(\matr{y})} \\
    &= \frac{ g^{0+} + g^{0\times} T(\matr{z})- g^{0+} - g^{0\times}\bar{\matr{y}}}{g^{0\times} s(\matr{y})} = \frac{T(\matr{z}) - \bar{\matr{y}}}{s(\matr{y})} = S_T(\matr{z}).
\end{align*}
As, $g$ was arbitrary, $S_T\in\mathcal{S}_e$, and so $\mathcal{S}_e\supseteq\{S_T : T\in\mathcal{T}_e\}$.
\end{proof}

We define $r_0 : \mathcal{S}\times\Gamma\rightarrow[0,\infty)$ as follows:
\begin{align}
    r_0(S,\Pi):= \int \E_{P}\left[\int_{x_0 : (\matr{D},x_0)\in\mathcal{D}_0}  \frac{\{\bar{\matr{Y}} + s(\matr{Y})S(z(\matr{D},x_0))-\mu_P(x_0)\}^2}{\sigma_P^2}dP_X(x_0)\right] d\Pi(P). \label{eq:r0def}
\end{align}
Because $\mathcal{D}_0$ occurs with $P$-probability one (for any $P\in\mathcal{P}$), it holds that $r(T,\Pi)=r_0(S_T,\Pi)$ for any $T\in\mathcal{T}$.

\begin{lemma}\label{lem:lscr0}
Fix $\Pi\in\Gamma$. If \ref{as:ptwiseBdd}, \ref{as:holder2}, and \ref{as:finitesignal2} hold, then $r_0(\cdot,\Pi) : \mathcal{S}\rightarrow\mathbb{R}$ is lower semicontinuous. 
\end{lemma}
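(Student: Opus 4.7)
The plan is to verify lower semicontinuity sequentially, which is legitimate because by Lemma~\ref{lem:Cmetrizable} the ambient space $C(\mathcal{Z},\mathbb{R})$ (and hence $\mathcal{S}$ with the subspace topology) is metrizable. Accordingly, fix an arbitrary sequence $\{S_j\}\subset \mathcal{S}$ converging compactly to some $S\in\mathcal{S}$ and aim to establish $\liminf_j r_0(S_j,\Pi)\ge r_0(S,\Pi)$.

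The key observation is that compact convergence trivially implies pointwise convergence, since every singleton $\{\matr{z}\}\subset\mathcal{Z}$ is compact. Hence, for every $(P,\matr{d},x_0)$ with $(\matr{d},x_0)\in\mathcal{D}_0$,
\begin{align*}
    f_j(P,\matr{d},x_0)&:= \frac{\{\bar{\matr{y}}+s(\matr{y})S_j(z(\matr{d},x_0))-\mu_P(x_0)\}^2}{\sigma_P^2}
\end{align*}
converges as $j\to\infty$ to the corresponding $f(P,\matr{d},x_0)$ built from $S$. Since the $f_j$ are nonnegative, I can apply Tonelli to write $r_0(\cdot,\Pi)$ as a single integral against the joint measure obtained from $\Pi$, the law of $\matr{D}$ under $P^n$, and the marginal $P_X$, and then apply Fatou's lemma to conclude
\begin{align*}
    r_0(S,\Pi)=\int \liminf_j f_j\,d\mu\ \le\ \liminf_j \int f_j\,d\mu = \liminf_j r_0(S_j,\Pi),
\end{align*}
which is exactly the sequential lower semicontinuity required.

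The roles of the three hypotheses are as follows: \ref{as:holder2} is used only to guarantee $\mathcal{S}\subset C(\mathcal{Z},\mathbb{R})$ so that compact convergence is well-defined; \ref{as:ptwiseBdd} controls $|S_j(z(\matr{d},x_0))|$ and ensures that the squared-loss integrand is finite at each point; and \ref{as:finitesignal2}, together with the standing assumption $\E_P[Y^2]<\infty$, is used to verify that $r_0(S,\Pi)\in\mathbb{R}$ (rather than $+\infty$) after expanding the square and bounding $\int \mu_P(x_0)^2\,dP_X(x_0)/\sigma_P^2$ by $\mathrm{var}_P(\mu_P(X))/\sigma_P^2+\E_P[Y]^2/\sigma_P^2$. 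The only real obstacle is the measurability and finiteness bookkeeping needed to legitimize the Tonelli step and to ensure the limit $r_0(S,\Pi)$ is indeed real-valued; the lower-semicontinuity conclusion itself reduces to a direct application of Fatou's lemma once the pointwise convergence of the integrands has been recorded.
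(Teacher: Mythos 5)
Your Fatou-based argument is correct and takes a genuinely different, more elementary route than the paper's. The paper first restricts to the truncated functionals $f_{\mathcal{K}}(S)$ that integrate over $\{x_0 : (\matr{D},x_0)\in\mathcal{K}\cap\mathcal{D}_0\}$ for a compact $\mathcal{K}\subset\mathcal{Z}$, establishes via Cauchy--Schwarz and \ref{as:finitesignal2} that each $f_{\mathcal{K}}$ is \emph{Lipschitz} in $\|\cdot\|_{\infty,\mathcal{K}}$ (hence continuous for compact convergence), and then writes $r_0(\cdot,\Pi)=\sup_{\mathcal{K}} f_{\mathcal{K}}(\cdot)$ as a supremum of continuous functions by monotone convergence. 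Your proof skips the truncation entirely: since compact convergence implies pointwise convergence (singletons are compact) and the integrand of $r_0$ is nonnegative, Fatou's lemma over the product measure gives $r_0(S,\Pi)\le\liminf_j r_0(S_j,\Pi)$ directly. This buys something real: your argument shows $r_0(\cdot,\Pi)$ is lower semicontinuous already in the coarser topology of pointwise convergence, whereas the paper's route is intrinsically tied to uniform convergence on compacta. What the paper's heavier calculation buys in exchange is the stronger fact that the truncated Bayes risks are continuous (indeed Lipschitz) rather than merely lower semicontinuous, together with an explicit constant.

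One small imprecision in your side remarks: to show $r_0(S,\Pi)<\infty$ you suggest bounding $\int \mu_P(x_0)^2\,dP_X(x_0)/\sigma_P^2$ by $\mathrm{var}_P(\mu_P(X))/\sigma_P^2+\E_P[Y]^2/\sigma_P^2$, but \ref{as:finitesignal2} controls only the first summand, and $\E_P[Y]^2/\sigma_P^2$ is not uniformly bounded over $\mathcal{P}$. The correct bookkeeping, as in the paper's calculation, centers every term around $\E_P[Y]$ — the relevant quantities are $\bar{\matr{Y}}-\E_P[Y]$, $s(\matr{Y})$, and $\mu_P(x_0)-\E_P[Y]$ — so that the potentially unbounded $\E_P[Y]/\sigma_P$ pieces cancel before one invokes the SNR bound. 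This is tangential to the lower-semicontinuity claim itself (Fatou works fine for $[0,\infty]$-valued maps), and you flag it as bookkeeping, but the specific decomposition you wrote would not close the finiteness argument as stated.
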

\begin{proof}
Fix $\Pi\in\Gamma$. For any compact $\mathcal{K}\subset\mathcal{Z}$, we define $f_{\mathcal{K}} : \mathcal{S}\rightarrow\mathbb{R}$ by
\begin{align*}
    f_{\mathcal{K}}(S):=\int \E_{P}\left[\int_{\mathcal{X}_{\matr{D},\mathcal{K}}} \sigma_P^{-2}\left[\bar{\matr{Y}} + s(\matr{Y})S(\matr{Z})-\mu_P(x_0)\right]^2 dP_X(x_0)\right] d\Pi(P),
\end{align*}
where here and throughout in this proof we let $\matr{Z}:=z(\matr{D},x_0)$ and $\mathcal{X}_{\matr{D},\mathcal{K}}:=\{x_0 : (\matr{D},x_0)\in \mathcal{K}\cap\mathcal{D}_0\}\subseteq\mathcal{X}$. Recalling that there exists an increasing sequence of compact subsets $\mathcal{K}_1 \subset \mathcal{K}_2 \subset \cdots$ such that 
$\bigcup_{j=1}^\infty \mathcal{K}_j = \mathcal{Z}$, we see that $\sup_{j \in \mathbb{N}} f_{\mathcal{K}_j}(\cdot) = r_0(\cdot, \Pi)$ by the monotone convergence theorem.
Moreover, as suprema of collections of continuous functions are lower semicontinuous, we see that $f$ is lower semicontinuous if $f_{\mathcal{K}}$ is continuous for every $\mathcal{K}$. In the remainder of this proof, we will show that this is indeed the case.

By Lemma~\ref{lem:Cmetrizable}, it suffices to show that $f_{\mathcal{K}}$ is sequentially continuous. Fix $S_1,S_2\in \mathcal{S}$. By Jensen's inequality,
\begin{align}
    &\left|f_{\mathcal{K}}(S_1)-f_{\mathcal{K}}(S_2)\right| \nonumber \\
    &= \Bigg|\int \E_{P}\Bigg[\int_{\mathcal{X}_{\matr{D},\mathcal{K}}} \sigma_P^{-2} \Big(\left[\bar{\matr{Y}}+s(\matr{Y})S_1(\matr{Z})-\mu_P(x_0)\right]^2 \nonumber \\
    &\hspace{10.5em}- \left[\bar{\matr{Y}}+s(\matr{Y})S_2(\matr{Z})-\mu_P(x_0)\right]^2\Big)dP_X(x_0)\Bigg] d\Pi(P)\Bigg| \nonumber  \\
    &\le \int \sigma_P^{-2} \E_{P}\Bigg[\int_{\mathcal{X}_{\matr{D},\mathcal{K}}}  \Big|\left[\bar{\matr{Y}}+s(\matr{Y})S_1(\matr{Z})-\mu_P(x_0)\right]^2 \nonumber \\
    &\hspace{10.5em}- \left[\bar{\matr{Y}}+s(\matr{Y})S_2(\matr{Z})-\mu_P(x_0)\right]^2\Big|dP_X(x_0)\Bigg] d\Pi(P). \label{eq:lscDisp1}
\end{align}
In what follows, we will bound the right-hand side above by some finite constant times \newline
$\|S_1-S_2\|_{\mathcal{K},\infty}$. We start by noting that, for any $(\matr{d},x_0)\in\mathcal{K}\cap \mathcal{D}_0$,
\begin{align*}
    \Big|&\left[\bar{\matr{y}}+s(\matr{y})S_1(\matr{z})-\mu_P(x_0)\right]^2 - \left[\bar{\matr{y}}+s(\matr{y})S_2(\matr{z})-\mu_P(x_0)\right]^2\Big| \\
    &=\Big|s(\matr{y}) \left[2\bar{\matr{y}} + s(\matr{y})\{S_1(\matr{z}) + S_2(\matr{z})\} -2\mu_P(x_0)\right]\left[S_1(\matr{z})-S_2(\matr{z})\right]\Big| \\
    &\le \|S_1-S_2\|_{\infty,\mathcal{K}}s(\matr{y})\Big| 2\bar{\matr{y}} + s(\matr{y})\{S_1(\matr{z}) + S_2(\matr{z})\} -2\mu_P(x_0)\Big| \\
    &\le \|S_1-S_2\|_{\infty,\mathcal{K}}\left(s(\matr{y})^2[\|S_1\|_{\mathcal{K},\infty} + \|S_2\|_{\mathcal{K},\infty}]  + 2s(\matr{y})|\bar{\matr{y}} -\mu_P(x_0)|\right) \\
    &\le \|S_1-S_2\|_{\infty,\mathcal{K}}\left(s(\matr{y})^2[\|S_1\|_{\mathcal{K},\infty} + \|S_2\|_{\mathcal{K},\infty}]  + 2s(\matr{y})|\bar{\matr{y}} -\E_P[Y]| + 2s(\matr{y})|\mu_P(x_0)-\E_P[Y]|\right) \\
    &\le 2\|S_1-S_2\|_{\infty,\mathcal{K}}\left(C_1 s(\matr{y})^2  + s(\matr{y})|\bar{\matr{y}} -\E_P[Y]| + s(\matr{y})|\mu_P(x_0)-\E_P[Y]|\right),
\end{align*}
where $C_1:=\sup_{S\in\mathcal{S}}\|S\|_{\mathcal{K},\infty}$ is finite by \ref{as:ptwiseBdd} and \ref{as:holder2}. Integrating both sides shows that
\begin{align}
    \E_{P}&\left[\int_{\mathcal{X}_{\matr{D},\mathcal{K}}}  \left|\left[\bar{\matr{Y}}+s(\matr{Y})S_1(\matr{Z})-\mu_P(x_0)\right]^2 - \left[\bar{\matr{Y}}+s(\matr{Y})S_2(\matr{Z})-\mu_P(x_0)\right]^2\right|dP_X(x_0)\right] \nonumber \\
    &\le 2\|S_1-S_2\|_{\infty,\mathcal{K}}\Bigg(C_1\E_{P}\left[\int_{\mathcal{X}_{\matr{D},\mathcal{K}}} s(\matr{Y})^2 dP_X(x_0)\right] + \E_{P}\left[\int_{\mathcal{X}_{\matr{D},\mathcal{K}}} s(\matr{Y})|\bar{\matr{Y}}-\E_P[Y]| dP_X(x_0)\right] \nonumber \\
    &\hspace{9em}+ \E_{P}\left[\int_{\mathcal{X}_{\matr{D},\mathcal{K}}} s(\matr{Y})|\mu_P(x_0)-\E_P[Y]| dP_X(x_0)\right]\Bigg) \nonumber \\
    &\le 2\|S_1-S_2\|_{\infty,\mathcal{K}}\Bigg(C_1\E_{P}\left[s(\matr{Y})^2\right] + \E_{P}\left[s(\matr{Y})|\bar{\matr{Y}}-\E_P[Y]|\right] \nonumber \\
&\hspace{9em}+ 
     \E_{P}\left[ s(\matr{Y})\int |\mu_P(x_0)-\E_P[Y]|dP_X(x_0)\right]\Bigg). \label{eq:lscDisp2}
\end{align}
We now bound the three expectations on the right-hand side by finite constants that do not depend on $S_1$ or $S_2$. All three bounds make use of the bound on the first expectation, namely $\E_{P}\left[s(\matr{Y})^2\right]=\frac{n-1}{n}{\rm Var}_P(Y)\le \frac{n-1}{n}C_2\sigma_P^2$, where $C_2:=\sup_{P\in\mathcal{P}} {\rm Var}_P(Y)/\sigma_P^2$. We note that \eqref{as:finitesignal2} can be used to show that $C_2<\infty$. Indeed,
\begin{align*}
    \E_P \left[{\rm Var}_P(Y \mid X)\right] =  \E_P \left[{\rm Var}_P (\epsilon_P \mid X)\right] = \E_P [\epsilon_P^2] = \sigma_P^2,
\end{align*}
and so, by the law of total variance and \eqref{as:finitesignal2}, $C_2 = 1 + \sup_{P \in \mathcal{P}} {\rm Var}_P(\mu_P(X))/\sigma_P^2 < \infty$. 
By Cauchy-Schwarz, the second expectation on the right-hand side of \eqref{eq:lscDisp2} bound as
\begin{align*}
    \E_{P}\left[s(\matr{Y})|\matr{Y}-\E_P[Y]|\right]&\le \E_{P}\left[s(\matr{Y})^2\right]^{1/2}\E_{P}\left[\{\matr{Y}-\E_P[Y]\}^2\right]^{1/2}
    = \E_{P}\left[s(\matr{Y})^2\right]^{1/2}\sigma_P \\
    &= \sqrt{\frac{n-1}{n}} \sqrt{C_2} \sigma_P^2,
\end{align*}
and the third expectation bounds as
\begin{align*}
    \E_{P}\left[ s(\matr{Y})|\mu_P(x_0)-\E_P[Y]|\right]&\le \E_{P}\left[ s(\matr{Y})^2\right]^{1/2}\E_{P}\left[ \int \{\mu_P(x_0)-\E_P[Y]\}^2 dP_{X_0}\right]^{1/2} \\
    &\le \E_{P}\left[ s(\matr{Y})^2\right]^{1/2}{\rm Var}_P(Y)^{1/2}\le \sqrt{\frac{n-1}{n}} \sqrt{C_2} \sigma_P {\rm Var}_P(Y)^{1/2} \\
    &\le \sqrt{\frac{n-1}{n}} C_2 \sigma_P^2.
\end{align*}
Plugging these bounds into \eqref{eq:lscDisp2}, we see that
\begin{align*}
    \E_{P}&\left[\int_{\mathcal{X}_{\matr{D},\mathcal{K}}}  \left|\left[\bar{\matr{Y}}+s(\matr{Y})S_1(\matr{Z})-\mu_P(x_0)\right]^2 - \left[\bar{\matr{Y}}+s(\matr{Y})S_2(\matr{Z})-\mu_P(x_0)\right]^2\right|dP_X(x_0)\right] \nonumber \\
    &\le 2\|S_1-S_2\|_{\infty,\mathcal{K}} \sigma_P^2 \sqrt{\frac{n-1}{n}}C_2^{1/2}\Bigg(C_1C_2^{1/2}\sqrt{\frac{n-1}{n}} + C_2^{1/2} + 1 \Bigg).
\end{align*}
Plugging this into \eqref{eq:lscDisp1}, we have shown that
\begin{align*}
    \left|f_{\mathcal{K}}(S_1)-f_{\mathcal{K}}(S_2)\right|&\le 2\|S_1-S_2\|_{\infty,\mathcal{K}} \sqrt{\frac{n-1}{n}}C_2^{1/2} \Bigg(C_1C_2^{1/2} \sqrt{\frac{n-1}{n}} + C_2^{1/2} +  1 \Bigg).
\end{align*}
We now conclude the proof by showing that the above implies that $f_{\mathcal{K}}$ is sequentially continuous at every $S\in\mathcal{S}$, and therefore is sequentially continuous on $\mathcal{S}$. Fix $S$ and a sequence $\{S_j\}$ such that $S_j\rightarrow S$ compactly. This implies that $\|S_j-S\|_{\infty,\mathcal{K}}\rightarrow 0$, and so the above display implies that $f_{\mathcal{K}}(S_j)\rightarrow f_{\mathcal{K}}(S)$, as desired.
\end{proof}

We now prove Theorem~\ref{thm:HS}.
\begin{proof}[Theorem~\ref{thm:HS}]
Fix $T_0\in\mathcal{T}$ and let $S_0:=S_{T_0}\in\mathcal{S}$. Let $\mathcal{K}$ be the set of all elements $S \in \mathcal{S}$ that satisfy
$$\sup_{\Pi \in \Gamma} r_0(S,\Pi) \leq \sup_{\Pi \in \Gamma} r_0(S_0,\Pi).$$
For fixed $\Pi_0 \in \Gamma$, the set of $S\in\mathcal{S}$ that satisfy $r_0(S,\Pi_0) \leq \sup_{\Pi \in \Gamma} r_0(S_0,\Pi)$ is closed due to the lower semicontinuity of the risk function (Lemma~\ref{lem:lscr0}) and contains $S_0$. 
The intersection of such sets is closed and contains $S_0$ so that $\mathcal{K}$ is a nonempty closed subset of the compact Hausdorff set $\mathcal{S}$, implying that $\mathcal{K}$ is compact. 
By the convexity of $x \mapsto \left(\frac{x-a}{b}\right)^2$, the risk function $S \mapsto r_0(S,\Pi)$ is convex. Hence, $\mathcal{K}$ is convex. 
If $S \in \mathcal{K}$, then Lemma~\ref{Piriskinvar} shows that, for any $g \in \mathcal{G}_0$,
$$r_0(g \cdot S, \Pi_0) = r_0(S,g \cdot \Pi_0) \leq \sup_{\Pi \in \Gamma} r_0(S_0,\Pi).$$
Thus, $g \cdot S \in \mathcal{K}$ and $\mathcal{G}_0 \times \mathcal{K} \to \mathcal{K}$ is an affine group action on a nonempty, convex, compact subset of a locally compact topological vector space. Combining this with the fact that $\mathcal{G}_0$ is amenable (Lemma~\ref{thm:G0amenable}) shows that we may apply Day's fixed point theorem (Theorem~\ref{thm:MK}) to see that there exists an $S_e \in \mathcal{S}$ such that, for all $g \in \mathcal{G}_0, g \cdot S_e = S_e$ and $$\sup_{\Pi \in \Gamma}r_0(S_e,\Pi) \leq \sup_{\Pi \in \Gamma}r_0(S_0,\Pi).$$
The conclusion is at hand. By Lemma~\ref{lem:SeTe}, there exists a $T_e\in\mathcal{T}_e$ such that $S_e=S_{T_e}$. Furthermore, as noted below \eqref{eq:r0def}, $r_0(S_{T_e},\Pi)=r(T_e,\Pi)$ and $r_0(S_{T_0},\Pi)=r(T_0,\Pi)$ for all $\Pi\in\Gamma$. Recalling that $S_0:=S_{T_0}$, the above shows that $\sup_{\Pi \in \Gamma}r(T_e,\Pi) \leq \sup_{\Pi \in \Gamma}r(T_0,\Pi)$. As $T_0\in\mathcal{T}$ was arbitrary and $T_e\in\mathcal{T}_e$, we have shown that $\inf_{T_e\in \mathcal{T}_e}\sup_{\Pi \in \Gamma}r(T_e,\Pi) \leq \inf_{T_0\in\mathcal{T}}\sup_{\Pi \in \Gamma}r(T_0,\Pi)$.
\end{proof}

The proof of Theorem~1 is inspired by that of the Hunt-Stein theorem given in \citet{LeCam2012}. Establishing this result in our context required making meaningful modifications to these earlier arguments. Indeed, \citet{LeCam2012} uses transitions, linear maps between L-spaces, to characterize the space of decision procedures. This more complicated machinery makes it possible to broaden the set of procedures under consideration. Indeed, with this characterization, it is possible to describe decision procedures that cannot even be represented as randomized decision procedures via a Markov kernel, but instead come about as limits of such decision procedures. Despite the richness of the space of decision procedures considered, \citeauthor{LeCam2012} is still able to show that this space is compact by using a coarse topology, namely the topology of pointwise convergence. Unfortunately, this topology appears to generally be too coarse for our Bayes risk function $r_0(\cdot,\Pi)$ to be lower semi-continuous, which is a fact that we used at the beginning of our proof of Theorem~\ref{thm:HS}. 
Another disadvantage to this formulation is that it makes it difficult to enforce any natural conditions or structure, such as continuity, on the set of estimators. It is unclear whether it would be possible to implement a numerical strategy optimizing over a class of estimators that lacks such structure. In contrast, we showed that, under appropriate conditions, it is indeed possible to prove a variant of the Hunt-Stein theorem in our setting even once natural structure is imposed on the class of estimators. To show compactness of the space of estimators that we consider, we applied the Arzel\`{a}-Ascoli theorem.

\subsubsection{Proof of Theorem~\ref{thm:Gamma1restriction}}\label{sec:Gamma1proofs}

We provide one additional lemma before proving Theorem~\ref{thm:Gamma1restriction}. The lemma relates to the class $\widetilde{\mathcal{T}}_e$ of estimators in $\mathcal{T}$ that satisfy the equivariance property (\ref{eq:Tequivar2}) but do not necessarily satisfy (\ref{eq:Tequivar1}). Note that $\mathcal{T}_e\subseteq \widetilde{\mathcal{T}}_e\subseteq\mathcal{T}$.
\begin{lemma}\label{lem:Gamma1}
If \ref{in:shiftrescalepred} and \ref{in:shiftrescaleoutcome} hold, then, for all $T\in\widetilde{\mathcal{T}}_e$,
\begin{align*}
    r(T,\Pi)=r(T,\Pi\circ h^{-1})\ \textnormal{ for all $\Pi\in\Gamma$},
\end{align*}
and so $\sup_{\Pi\in\Gamma} r(T,\Pi)= \sup_{\Pi\in\Gamma_1} r(T,\Pi)$.
\end{lemma}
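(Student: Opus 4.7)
The plan is to reduce the lemma to a pointwise identity of the form $R(T,h(P))=R(T,P)$ for every $P\in\mathcal{P}$, which will follow from the shift-rescale equivariance \eqref{eq:Tequivar2}. Conditions \ref{in:shiftrescalepred} and \ref{in:shiftrescaleoutcome} enter only to guarantee that $h(P)\in\mathcal{P}$, so that $R(T,h(P))$ is even defined; once that is in hand, the integration against $\Pi$ and the passage to suprema are automatic.

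First I would observe that, for each $P\in\mathcal{P}$, we may write $h(P)$ as a fixed (but $P$-dependent) affine rescaling of $P$. Specifically, set $b_P:=(1/\mathrm{var}_P(X_j)^{1/2})_{j=1}^p$, $a_P:=-b_P\odot(\E_P[X_j])_{j=1}^p$, $\tilde b_P:=1/\sigma_P$, and $\tilde a_P:=-\tilde b_P\,\E_P[Y]$. Then $h(P)$ is the distribution of $(a_P+b_P\odot X,\tilde a_P+\tilde b_P Y)$ when $(X,Y)\sim P$. Note $b_P\in(\mathbb{R}^+)^p$ and $\tilde b_P>0$ since $\sigma_P^2>0$ and the features are continuous; and \ref{in:shiftrescalepred}--\ref{in:shiftrescaleoutcome} (which implicitly encode that $\mathcal{P}$ is closed under such transformations) give $h(P)\in\mathcal{P}$.

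Next I would verify the identity $R(T,h(P))=R(T,P)$ for $T\in\widetilde{\mathcal{T}}_e$. A direct computation yields $\mu_{h(P)}(a_P+b_P\odot x_0)=\tilde a_P+\tilde b_P\,\mu_P(x_0)$ and $\sigma_{h(P)}^2=\tilde b_P^{\,2}\sigma_P^2$. If $\matr{D}$ is drawn from $P^n$ and $\matr{D}'$ denotes the matching draw from $h(P)^n$ obtained by applying the same affine map to every row, then \eqref{eq:Tequivar2} gives
\begin{equation*}
T(\matr{D}')(a_P+b_P\odot x_0) - \mu_{h(P)}(a_P+b_P\odot x_0)=\tilde b_P\bigl[T(\matr{D})(x_0)-\mu_P(x_0)\bigr].
\end{equation*}
Squaring and dividing by $\sigma_{h(P)}^2=\tilde b_P^{\,2}\sigma_P^2$ cancels the factor $\tilde b_P^{\,2}$; then applying the change-of-variables $x_0\mapsto a_P+b_P\odot x_0$ in the outer integral over $h(P)_X$ turns this back into the integral over $P_X$, yielding $R(T,h(P))=R(T,P)$.

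Finally, integrating this pointwise identity over $P\sim\Pi$ gives
\begin{equation*}
r(T,\Pi\circ h^{-1})=\E_\Pi[R(T,h(P))]=\E_\Pi[R(T,P)]=r(T,\Pi),
\end{equation*}
which is the first claim. Taking suprema and using $\Gamma_1=\{\Pi\circ h^{-1}:\Pi\in\Gamma\}$ then yields $\sup_{\Pi\in\Gamma}r(T,\Pi)=\sup_{\Pi\in\Gamma}r(T,\Pi\circ h^{-1})=\sup_{\Pi'\in\Gamma_1}r(T,\Pi')$. There is no genuine obstacle here beyond carefully bookkeeping how the regression function, the residual variance, and the marginal of $X$ transform under $h$; the only conceptual point to keep in mind is that $h$ uses $P$-dependent shifts and rescalings, so \ref{in:shiftrescalepred}--\ref{in:shiftrescaleoutcome} are invoked for each fixed $P$ to place $h(P)$ inside $\mathcal{P}$, rather than to place $\Pi\circ h^{-1}$ inside $\Gamma$ (which need not hold in general).
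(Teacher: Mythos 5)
Your proof is correct and takes essentially the same route as the paper. The paper packages your $P$-dependent affine map as a group element $g_P\in\mathcal{G}_0$, writes $h(P)=P\circ g_P^{-1}$, and invokes its Lemma~\ref{lem:Priskinvar} (which encapsulates exactly the change-of-variables and $\tilde b_P^{2}$-cancellation you carry out inline) together with $T\in\widetilde{\mathcal{T}}_e$ to obtain $R(T,h(P))=R(T,P)$, then integrates over $\Pi$ and takes suprema just as you do.
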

\begin{proof}[Lemma~\ref{lem:Gamma1}]
Let $e$ be the identity element in $\operatorname{Sym}(n) \times \operatorname{Sym}(p)$.
For each $P \in \mathcal{P}$, define $g_P \in \mathcal{G}_0$ to be
\begin{align*}
    g_P := \left( -\frac{E_P[Y]}{\sigma_P}, \frac{1}{\sigma_P}, \left(-\frac{E_P[X_j]}{\sqrt{{\rm Var}_P(X_j)}}\right)_{j=1}^p, 
\left(\frac{1}{\sqrt{{\rm Var}_P(X_j)}}\right)_{j=1}^p,
e \right).
\end{align*}
It holds that
\begin{align*}
    R(T,\Pi \circ h^{-1}) &= \int R(T,P)d(\Pi \circ h^{-1})(P) \\
    &= \int R(T,P \circ g_P^{-1})d\Pi(P) \quad \text{by the definition of } h \\
    &= \int R(g_P \cdot T, P)d\Pi(P) \quad \text{by Lemma \ref{lem:Priskinvar}} \\
    &= \int R(T,P)d\Pi(P) = r(T,\Pi) \quad \text{since } T \in \tilde{\mathcal{T}}_e.
\end{align*}
\end{proof}
We conclude by proving Theorem~\ref{thm:Gamma1restriction}.
\begin{proof}[Theorem~\ref{thm:Gamma1restriction}]
Under the conditions of the theorem, $\widetilde{\mathcal{T}}_e=\mathcal{T}$. Recalling that $\Gamma_1:=\{\Pi\circ h^{-1} : \Pi\in\Gamma\}$, Lemma~\ref{lem:Gamma1} yields that, for any $T\in\mathcal{T}$, $\sup_{\Pi\in\Gamma} r(T,\Pi)=\sup_{\Pi\in\Gamma} r(T,\Pi\circ h^{-1})=\sup_{\Pi\in\Gamma_1} r(T,\Pi)$. Hence, an estimator $T\in\mathcal{T}$ is $\Gamma$-minimax if and only if it is $\Gamma_1$-minimax.
\end{proof}

\subsubsection{Proof of Theorem~\ref{thm:equilibrium}}\label{sec:equilibrium}

In this subsection, we assume (without statement) that all $\Pi\in\Gamma$ are defined on the measurable space $(\mathcal{P},\mathscr{A})$, where $\mathscr{A}$ is such that $\{A\cap \mathcal{P}_1 : A\in\mathscr{A}\}$ equals $\mathscr{B}_1$, where $\mathscr{B}_1$ is the collection of Borel sets on the metric space $(\mathcal{P}_1,\rho)$ described in \ref{as:Mcompact}. Under \ref{in:shiftrescalepred} and \ref{in:shiftrescaleoutcome}, which we also assume without statement throughout this subsection, it then follows that each $\Pi_1\in\Gamma_1$ is defined on the measurable space $(\mathcal{P}_1,\mathscr{B}_1)$, where $\mathscr{B}_1$ is the collection of Borel sets on $(\mathcal{P}_1,\rho)$. Let $\Gamma_0$ denote the collection of all distributions on $(\mathcal{P}_1,\mathscr{B}_1)$. For each $A\in\mathscr{B}_1$, define the $\epsilon$-enlargement of $A$ by $A^\epsilon:=\{P\in\mathcal{P}_1 : \exists P'\in A \textrm{ such that } \rho(P,P')<\epsilon\}$. Further let $\xi$ denote the L\'{e}vy-Prokhorov metric on $\Gamma_0$, namely
\begin{align*}
\xi(\Pi,\Pi'):=\inf\big\{\epsilon>0 :\;&\Pi(A)\le \Pi'(A^\epsilon) + \epsilon\,\textnormal{ and } \Pi'(A)\le \Pi(A^\epsilon) + \epsilon \,\textrm{ for all } A\in\mathscr{B}_1\big\}.
\end{align*}

\begin{lemma}\label{lem:Picompact}
If \ref{as:Mcompact} and \ref{as:Gamma1closed}, then $(\Gamma_1,\xi)$ is a compact metric space.
\end{lemma}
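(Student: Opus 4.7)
The plan is to combine three standard facts about the Lévy–Prokhorov metric and Prokhorov's theorem. First, I will observe that $\xi$ is indeed a metric on $\Gamma_0$ (standard; see, e.g., \citealp{Billingsley1999}), and since $\Gamma_1 \subseteq \Gamma_0$, $(\Gamma_1, \xi)$ is a metric space. The substantive content is therefore to verify compactness.

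Next I will invoke the fact that, because $(\mathcal{P}_1, \rho)$ is separable by \ref{as:Mcompact}(i), the Lévy–Prokhorov metric $\xi$ metrizes the topology of weak convergence on $\Gamma_0$ (this is also a standard result, cf. \citealp{Billingsley1999}). Consequently, compactness of $(\Gamma_1, \xi)$ is equivalent to compactness of $\Gamma_1$ in the topology of weak convergence, and it suffices to establish the latter.

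To prove weak compactness of $\Gamma_1$, I will apply Prokhorov's theorem. By \ref{as:Mcompact}(i), $(\mathcal{P}_1, \rho)$ is complete and separable, so Prokhorov's theorem is applicable on this space. By \ref{as:Mcompact}(ii), $\Gamma_1$ is tight, so Prokhorov's theorem yields that $\Gamma_1$ is relatively compact in the weak topology. Then \ref{as:Gamma1closed} gives that $\Gamma_1$ is closed in this topology, and a closed subset of a relatively compact set is compact. Hence $\Gamma_1$ is weakly compact, and therefore $(\Gamma_1, \xi)$ is a compact metric space.

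There is no real obstacle here; the result is essentially a direct packaging of Prokhorov's theorem together with the metrizability of weak convergence by the Lévy–Prokhorov metric. The only subtlety worth flagging in the write-up is that both of these standard tools require separability (and, for Prokhorov, completeness) of the underlying space $\mathcal{P}_1$, which is exactly what \ref{as:Mcompact}(i) supplies; beyond that, the proof is a two-line deduction from the stated hypotheses.
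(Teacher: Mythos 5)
Your proof is correct and follows essentially the same route as the paper: the paper likewise invokes Prokhorov's theorem (citing \citealp{vanGaans2003} and \citealp{Billingsley1999}) together with \ref{as:Mcompact} to obtain relative compactness of $\Gamma_1$ in $(\Gamma_0,\xi)$, and then uses \ref{as:Gamma1closed} to conclude compactness. Your write-up is slightly more explicit about the preliminary facts (that $\xi$ is a metric and metrizes weak convergence on a separable metric space, and that relative compactness plus closedness gives compactness), but there is no substantive difference in the argument.
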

\begin{proof}[Lemma~\ref{lem:Picompact}]
By Prokhorov's theorem (see Theorem~5.2 in \citealp{vanGaans2003} for a convenient version, or see Theorems~1.5.1 and 1.6.8 in \citealp{Billingsley1999}), \ref{as:Mcompact} implies that $\Gamma_1$ is relatively compact in $(\Gamma_0,\xi)$. The fact that $\Gamma_1$ is closed (\ref{as:Gamma1closed}) implies the result.
\end{proof}

We now define $r_1 : \mathcal{S}_e\times \Gamma_1\rightarrow [0,\infty)$, which is the analogue of $r_0 : \mathcal{S}\times \Gamma\rightarrow [0,\infty)$ from Section~\ref{sec:HS}:
\begin{align}
    r_1(S,\Pi):= \int \E_{P}\left[\int_{x_0 : (\matr{D},x_0)\in\mathcal{D}_0}  \{\bar{\matr{Y}} + s(\matr{Y})S(z(\matr{D},x_0))-\mu_P(x_0)\}^2dP_X(x_0)\right] d\Pi(P). \label{eq:r1def}
\end{align}
Note that, because each distribution in $\mathcal{P}$ is continuous, each distribution in $\mathcal{P}_1$ is also continuous. Hence, $\mathcal{D}_0$ occurs with $P$-probability one for all $P\in\mathcal{P}_1$, and so the definition of $r_1$ combined with Lemma~\ref{lem:SeTe} shows that $r(T,\Pi)=r_1(S_T,\Pi)$ for any $T\in\mathcal{T}_e$ and $\Pi\in\Gamma_1$.

\begin{lemma}\label{lem:usc}
If \ref{as:Mcompact}, then, for each $S\in\mathcal{S}_e$, $r_1(S,\cdot)$ is upper semicontinuous on $(\Gamma_1,\xi)$.
\end{lemma}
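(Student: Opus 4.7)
The plan is to reduce the claim to the Portmanteau theorem. Fix $S\in\mathcal{S}_e$ and, using Lemma~\ref{lem:SeTe}, pick $T\in\mathcal{T}_e$ with $S_T=S$. Since every $P\in\mathcal{P}_1$ satisfies $\sigma_P^2=1$ (see \eqref{eq:P1}) and since $\mathcal{D}_0$ has $P$-probability one for every continuous $P\in\mathcal{P}$, the defining integrand of $r_1$ agrees $P$-almost surely with that of $R(T,P)$. Consequently, defining
\begin{align*}
R_1(S,P):=\E_{P}\left[\int_{x_0 : (\matr{D},x_0)\in\mathcal{D}_0} \{\bar{\matr{Y}} + s(\matr{Y})S(z(\matr{D},x_0))-\mu_P(x_0)\}^2\, dP_X(x_0)\right],
\end{align*}
we have $R_1(S,P)=R(T,P)$ for every $P\in\mathcal{P}_1$ and $r_1(S,\Pi)=\int R_1(S,P)\,d\Pi(P)$ for every $\Pi\in\Gamma_1$.

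Next I would invoke the structural hypotheses on $\mathcal{P}_1$. Condition \ref{as:Mcompact}(iii) gives that $P\mapsto R(T,P)$ is upper semicontinuous on $(\mathcal{P}_1,\rho)$ and bounded above by some constant $M<\infty$; since $R_1(S,\cdot)$ equals $R(T,\cdot)$ on $\mathcal{P}_1$, the same holds for $R_1(S,\cdot)$. Moreover $R_1(S,\cdot)\ge 0$ trivially. Thus the map
\begin{align*}
g:\mathcal{P}_1\to[0,\infty),\qquad g(P):= M - R_1(S,P),
\end{align*}
is nonnegative and lower semicontinuous on $(\mathcal{P}_1,\rho)$.

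The final step is an application of Portmanteau. Since $(\mathcal{P}_1,\rho)$ is a complete separable metric space by \ref{as:Mcompact}(i), the L\'evy--Prokhorov metric $\xi$ on $\Gamma_1\subseteq\Gamma_0$ metrizes weak convergence (see, e.g., Theorem~6.8 in \citealp{Billingsley1999}). Hence, for any sequence $\{\Pi_j\}\subset\Gamma_1$ with $\xi(\Pi_j,\Pi)\to 0$, one has $\Pi_j\Rightarrow\Pi$ weakly on $(\mathcal{P}_1,\mathscr{B}_1)$. The Portmanteau theorem for nonnegative lower semicontinuous functions then yields $\liminf_{j}\int g\,d\Pi_j\ge \int g\,d\Pi$. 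Rewriting this in terms of $R_1(S,\cdot)=M-g$ gives
\begin{align*}
\limsup_{j\to\infty} r_1(S,\Pi_j) = \limsup_{j\to\infty}\int R_1(S,P)\,d\Pi_j(P) \le \int R_1(S,P)\,d\Pi(P) = r_1(S,\Pi),
\end{align*}
which is exactly upper semicontinuity of $r_1(S,\cdot)$ on $(\Gamma_1,\xi)$. The only substantive ingredients are the identification of $R_1(S,\cdot)$ with $R(T,\cdot)$ on $\mathcal{P}_1$ (so that \ref{as:Mcompact}(iii) applies) and the fact that $\xi$ metrizes weak convergence on the Polish space $(\mathcal{P}_1,\rho)$; neither step should present significant difficulty.
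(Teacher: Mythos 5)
Your proposal is correct and takes essentially the same approach as the paper: identify $R_1(S,\cdot)$ with $R(T,\cdot)$ on $\mathcal{P}_1$ via Lemma~\ref{lem:SeTe} and $\sigma_P^2=1$, then apply \ref{as:Mcompact}(iii) together with the Portmanteau theorem for weak convergence metrized by $\xi$. The only cosmetic difference is that you pass through the nonnegative lower semicontinuous function $g = M - R_1(S,\cdot)$ and the Fatou-type Portmanteau statement, whereas the paper directly invokes the version for upper semicontinuous functions bounded above; these are equivalent.
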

\begin{proof}[Lemma~\ref{lem:usc}]
Fix $S\in\mathcal{S}_e$, and note that, by Lemma~\ref{lem:SeTe}, there exists a $T\in\mathcal{T}_e$ such that $S=S_T$. Let $\{\Pi_j\}_{j=1}^\infty$ be such that $\Pi_j\overset{k\rightarrow\infty}{\longrightarrow} \Pi$ in $(\Gamma_1,\xi)$ for some $\Pi\in \Gamma_1$. Because $\xi$ metrizes weak convergence \citep[Theorem~1.6.8 in][]{Billingsley1999}, the Portmanteau theorem shows that $\limsup_{k\rightarrow\infty} \E_{\Pi_j}[f(P)]\le \E_{\Pi}[f(P)]$ for every $f : \mathcal{P}_1\rightarrow \mathbb{R}$ that is upper semicontinuous and bounded from above on $(\mathcal{P}_1,\rho)$. By part (iii) of \ref{as:Mcompact}, we can apply this result at $f : P\mapsto R(T,P)$ to see that $\limsup_{k\rightarrow\infty} r(T,\Pi_j)\le r(T,\Pi)$. As $\{\Pi_j\}_{j=1}^\infty$ was arbitrary, $r(T,\cdot)$ is upper semicontinuous on $(\Gamma_1,\xi)$. Because $r(T,\cdot)=r_1(S_T,\cdot)$ and $S=S_T$, we have this shown that $r_1(S,\cdot)$ is upper semicontinuous on $(\Gamma_1,\xi)$.
\end{proof}

\begin{lemma}\label{lem:Secompact}
Under the conditions of Lemma~\ref{lem:Scompact2}, $\mathcal{S}_e$ is a compact subset of $C(\mathcal{Z},\mathbb{R})$.
\end{lemma}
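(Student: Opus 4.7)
The plan is to realize $\mathcal{S}_e$ as a closed subset of the already-compact set $\mathcal{S}$, and then invoke the fact that a closed subset of a compact Hausdorff space is compact. Under the conditions of Lemma~\ref{lem:Scompact2} (i.e., \ref{as:ptwiseBdd}, \ref{as:holder2}, and \ref{as:Sclosed2}), that lemma yields that $\mathcal{S}$ is a compact subset of $C(\mathcal{Z},\mathbb{R})$. By Lemma~\ref{lem:Cmetrizable}, $C(\mathcal{Z},\mathbb{R})$ is metrizable, so $\mathcal{S}$ is metrizable and Hausdorff; consequently, it suffices to show that $\mathcal{S}_e$ is sequentially closed in $\mathcal{S}$.

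To show closedness, the plan is to exploit the continuity of the action established in Lemma~\ref{ContAction}. Specifically, for each fixed $g\in\mathcal{G}_0$, continuity of $\mathcal{G}_0\times\mathcal{S}\to\mathcal{S}$ implies that the partial map $\Phi_g : \mathcal{S}\to\mathcal{S}$ defined by $\Phi_g(S) := g\cdot S$ is continuous. The fixed-point set
\begin{align*}
    \mathcal{S}_g := \{S\in\mathcal{S} : g\cdot S = S\}
\end{align*}
is the equalizer of the continuous maps $\Phi_g$ and the identity $\operatorname{id}_{\mathcal{S}}$ into the Hausdorff space $\mathcal{S}$, and so is closed in $\mathcal{S}$. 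I would verify this directly and sequentially: if $\{S_j\}\subset\mathcal{S}_g$ converges compactly to some $S\in\mathcal{S}$, then by continuity $g\cdot S_j\to g\cdot S$; but $g\cdot S_j = S_j\to S$, so by uniqueness of limits in the Hausdorff space $\mathcal{S}$, $g\cdot S = S$, i.e., $S\in\mathcal{S}_g$.

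Finally, noting that
\begin{align*}
    \mathcal{S}_e \;=\; \bigcap_{g\in\mathcal{G}_0} \mathcal{S}_g,
\end{align*}
the set $\mathcal{S}_e$ is an intersection of closed subsets of $\mathcal{S}$ and is therefore itself closed in $\mathcal{S}$. Being a closed subset of the compact space $\mathcal{S}$, $\mathcal{S}_e$ is compact; since $\mathcal{S}$ is already a subspace of $C(\mathcal{Z},\mathbb{R})$, this also makes $\mathcal{S}_e$ compact in $C(\mathcal{Z},\mathbb{R})$, as desired.

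The only mildly delicate point is ensuring that Lemma~\ref{ContAction} is indeed applicable here; that lemma requires \ref{as:holder2}, \ref{as:SpreservedPerms}, and \ref{as:SpreservedShifts}, and the first of these is already among the hypotheses of Lemma~\ref{lem:Scompact2}. The invariance conditions \ref{as:SpreservedPerms} and \ref{as:SpreservedShifts} are part of the standing assumptions of Theorem~\ref{thm:equilibrium} (where this lemma is intended to be applied), so I would state the lemma with \ref{as:SpreservedPerms} and \ref{as:SpreservedShifts} folded into the hypotheses (in addition to those of Lemma~\ref{lem:Scompact2}), ensuring the continuity of the group action is available when applying it.
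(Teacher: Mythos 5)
Your equalizer/fixed-point decomposition $\mathcal{S}_e=\bigcap_{g}\mathcal{S}_g$ is a clean way to organize the argument, and the core of it — show $\mathcal{S}_e$ is sequentially closed in the compact set $\mathcal{S}$ — matches the paper's proof. However, there is a genuine gap in the way you justify closedness of each $\mathcal{S}_g$. You invoke Lemma~\ref{ContAction} to get continuity of $\Phi_g:S\mapsto g\cdot S$, but that lemma requires \ref{as:SpreservedPerms} and \ref{as:SpreservedShifts}, and your closing remark that these are ``standing assumptions of Theorem~\ref{thm:equilibrium}'' is incorrect: Theorem~\ref{thm:equilibrium} assumes \ref{as:ptwiseBdd}--\ref{as:Sclosed2} and \ref{as:Sconv2} but explicitly \emph{not} \ref{as:SpreservedPerms} or \ref{as:SpreservedShifts}. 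Folding them into the hypotheses of this lemma would weaken it in a way that breaks its intended use in the proof of Theorem~\ref{thm:equilibrium}.

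The fix is that Lemma~\ref{ContAction} is a substantial overshoot here — you do not need joint continuity of the action, nor that $\mathcal{S}$ be closed under it. For a \emph{fixed} $g$, the partial map $\Phi_g$ is just precomposition with the continuous map $\matr{z}\mapsto g\cdot\matr{z}$: if $S_j\to S$ compactly, then for any compact $\mathcal{K}\subset\mathcal{Z}$ the set $g\cdot\mathcal{K}$ is compact and $\sup_{\matr{z}\in\mathcal{K}}|S_j(g\cdot\matr{z})-S(g\cdot\matr{z})|=\sup_{\matr{z}'\in g\cdot\mathcal{K}}|S_j(\matr{z}')-S(\matr{z}')|\to 0$, so $\Phi_g$ is continuous into $C(\mathcal{Z},\mathbb{R})$ without any appeal to \ref{as:SpreservedPerms} or \ref{as:SpreservedShifts}. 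This is in effect what the paper does, phrased pointwise: it verifies $S(\matr{z})=S(g\cdot\matr{z})$ by using that the doubleton $\{\matr{z},g\cdot\matr{z}\}$ is compact. If you replace your appeal to Lemma~\ref{ContAction} with this elementary observation, your proof becomes correct and matches the paper's hypotheses exactly.
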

\begin{proof}
By Lemma~\ref{lem:Scompact2}, $\mathcal{S}_e\subset \mathcal{S}$ is relatively compact. Hence, it suffices to show that $\mathcal{S}_e$ is closed. By Lemma~\ref{lem:Cmetrizable}, a subset of $C(\mathcal{Z},\mathbb{R})$ is closed in the topology of compact convergence if it is sequentially closed. Let $\{S_j\}_{j=1}^\infty$ be a sequence on $\mathcal{S}_e$ such that $S_j\rightarrow S$ compactly. Because $\mathcal{S}_e\subset \mathcal{S}$ and $\mathcal{S}$ is closed by \ref{as:Sclosed2}, we see that $S\in\mathcal{S}$. We now wish to show that $S\in\mathcal{S}_e$. Fix $\matr{z}\in\mathcal{Z}$ and $g\in\mathcal{G}_0$. Because the doubleton set $\{\matr{z},g\cdot \matr{z}\}$  is compact, $S_j(\matr{z})\rightarrow S(\matr{z})$ and $S_j(g\cdot \matr{z})\rightarrow S(g\cdot \matr{z})$, and thus $S_j(\matr{z})-S_j(g\cdot \matr{z})\rightarrow S(\matr{z})-S(g\cdot \matr{z})$. Moreover, because $S_j\in\mathcal{S}_e$, $S_j(g\cdot \matr{z})=S_j(\matr{z})$ for all $j$. Hence, $S_j(\matr{z})-S_j(g\cdot\matr{z})\rightarrow 0$. As these two limits must be equal, we see that $S(\matr{z})=S(g\cdot\matr{z})$. Because $\matr{z}\in\mathcal{Z}$ and $g\in\mathcal{G}_0$ were arbitrary, $S\in\mathcal{S}_e$. 
\end{proof}

\begin{lemma}\label{lem:lscr1}
Fix $\Pi\in\Gamma_1$. If \ref{as:ptwiseBdd}, \ref{as:holder2}, and \ref{as:finitesignal2} hold, then $r_1(\cdot,\Pi) : \mathcal{S}_e\rightarrow\mathbb{R}$ is lower semicontinuous. 
\end{lemma}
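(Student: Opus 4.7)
The plan is to reduce this claim to Lemma~\ref{lem:lscr0}. First, any $\Pi \in \Gamma_1$ is supported on $\mathcal{P}_1$, and every $P \in \mathcal{P}_1$ satisfies $\sigma_P^2 = 1$ by \eqref{eq:P1}. Hence, for each $S \in \mathcal{S}$, the normalization $\sigma_P^{-2}$ appearing in $r_0$ but absent from $r_1$ is irrelevant, and $r_1(S,\Pi) = r_0(S,\Pi)$. Moreover, $\mathcal{S}_e \subseteq \mathcal{S}$ is equipped with the subspace topology inherited from $C(\mathcal{Z},\mathbb{R})$, and the restriction of a lower semicontinuous function is lower semicontinuous. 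Therefore, it suffices to show that $r_0(\cdot,\Pi) : \mathcal{S} \to \mathbb{R}$ is lower semicontinuous for the $\Pi$ at hand.

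For this step, I would revisit the proof of Lemma~\ref{lem:lscr0} and observe that it does not actually require $\Pi \in \Gamma$: the only facts used about $\Pi$ are that it is a probability measure on $\mathcal{P}$ under which the integrand is integrable, together with the structural hypotheses \ref{as:ptwiseBdd}, \ref{as:holder2}, and \ref{as:finitesignal2}. The argument writes $r_0(\cdot,\Pi)$ as the pointwise supremum, over an exhausting sequence of compacts $\mathcal{K}_j \uparrow \mathcal{Z}$, of truncated Bayes risks $f_{\mathcal{K}_j}(\cdot)$, invokes the monotone convergence theorem to justify this identification, and then shows that each $f_{\mathcal{K}}$ is continuous via a Lipschitz-in-$S$ bound proportional to $\|S_1 - S_2\|_{\infty,\mathcal{K}}$. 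That bound is obtained from the $a^2-b^2$ factorization, Jensen's and Cauchy--Schwarz inequalities, the uniform constant $C_1 := \sup_{S\in\mathcal{S}} \|S\|_{\infty,\mathcal{K}} < \infty$ supplied by \ref{as:ptwiseBdd} and \ref{as:holder2}, and the constant $C_2 := \sup_{P\in\mathcal{P}} \mathrm{Var}_P(Y)/\sigma_P^2 < \infty$ supplied by \ref{as:finitesignal2} together with the law of total variance. All of these ingredients remain in force for $\Pi \in \Gamma_1$, so the same derivation yields continuity of each truncated risk on $\mathcal{S}$ and, as a monotone supremum of continuous functions, lower semicontinuity of $r_0(\cdot,\Pi)$ there.

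The argument is essentially free of obstacles; the only point I would verify is that the constants $C_1$ and $C_2$ in the proof of Lemma~\ref{lem:lscr0} are taken as suprema over $\mathcal{S}$ and $\mathcal{P}$, respectively, and not over any narrower sets, so they continue to serve as uniform bounds when the prior is drawn from $\Gamma_1$ rather than from $\Gamma$. As an equivalent but notationally cleaner alternative, one may copy the proof of Lemma~\ref{lem:lscr0} verbatim with $\Gamma$ replaced by $\Gamma_1$ and $r_0$ replaced by $r_1$ throughout; because $\sigma_P^{-2} = 1$ on the support of any $\Pi \in \Gamma_1$, this factor simply drops out and the bookkeeping in fact becomes slightly simpler.
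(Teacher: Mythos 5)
Your proposal is correct and is essentially what the paper intends: the authors state only that ``the proof is similar to that of Lemma~\ref{lem:lscr0} and is therefore omitted,'' and your argument spells out precisely why the reduction works. You correctly identify the three gaps to bridge --- the factor $\sigma_P^{-2}$ (which equals $1$ on $\mathcal{P}_1$, so $r_1 = r_0$ on $\mathcal{S}_e \times \Gamma_1$), the domain $\mathcal{S}_e$ versus $\mathcal{S}$ (handled by restriction under the subspace topology), and the prior class $\Gamma_1$ versus $\Gamma$ (handled by noting the proof of Lemma~\ref{lem:lscr0} uses only that $\Pi$ is a probability measure on $\mathcal{P}$ together with T1, T2, P4, with the constants $C_1$ and $C_2$ taken as suprema over $\mathcal{S}$ and $\mathcal{P}$) --- so no new ideas are needed.
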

\begin{proof}
The proof is similar to that of Lemma~\ref{lem:lscr0} and is therefore omitted.
\end{proof}

\begin{lemma}[Minimax theorem]\label{lem:Fan}
Under the conditions of Theorem~\ref{thm:equilibrium},
\begin{align}
\min_{S\in\mathcal{S}_e}\max_{\Pi\in\Gamma_1} r_1(S,\Pi) = \max_{\Pi\in\Gamma_1}\min_{S\in\mathcal{S}_e} r_1(S,\Pi). \label{eq:minmaxequality}
\end{align}
\end{lemma}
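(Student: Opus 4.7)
The plan is to verify the hypotheses of Sion's minimax theorem for $r_1 : \mathcal{S}_e \times \Gamma_1 \to [0,\infty)$ and then argue that the supremum on the right is attained. First, $\mathcal{S}_e$ is a convex compact subset of the locally convex topological vector space $C(\mathcal{Z},\mathbb{R})$ equipped with the topology of compact convergence: compactness is Lemma~\ref{lem:Secompact}, and convexity follows from \ref{as:Sconv2} together with the observation that any convex combination of $\mathcal{G}_0$-invariant functions is again $\mathcal{G}_0$-invariant. Analogously, $\Gamma_1$ is a convex compact subset of the space of finite signed Borel measures on $(\mathcal{P}_1,\rho)$ under the topology of weak convergence (a locally convex topological vector space): compactness is Lemma~\ref{lem:Picompact}, and convexity is \ref{as:Gamma1conv}.

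Next I check the semicontinuity and convexity/concavity requirements. For each $\Pi \in \Gamma_1$, $r_1(\cdot,\Pi)$ is lower semicontinuous on $\mathcal{S}_e$ by Lemma~\ref{lem:lscr1}, and it is convex because the integrand of \eqref{eq:r1def} is a non-negative combination of squared-error terms, each of which is convex in $S$. For each $S \in \mathcal{S}_e$, $r_1(S,\cdot)$ is upper semicontinuous by Lemma~\ref{lem:usc} and is linear --- hence concave --- in $\Pi$, since it is defined as an integral against $\Pi$. Sion's minimax theorem then yields
\begin{align*}
\min_{S \in \mathcal{S}_e} \sup_{\Pi \in \Gamma_1} r_1(S,\Pi) = \sup_{\Pi \in \Gamma_1} \min_{S \in \mathcal{S}_e} r_1(S,\Pi).
\end{align*}

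On the left side, for each $S$ the inner supremum is attained by upper semicontinuity on the compact $\Gamma_1$, and $S \mapsto \max_\Pi r_1(S,\Pi)$ is lower semicontinuous on the compact $\mathcal{S}_e$ as a supremum of lower semicontinuous functions, so the outer minimum is attained. On the right side, for each $\Pi$ the inner minimum is attained by lower semicontinuity on the compact $\mathcal{S}_e$. To see that the outer supremum is also attained, I will take a maximizing sequence $\{\Pi_k\} \subset \Gamma_1$, pass to a weakly convergent subsequence with limit $\Pi^\star \in \Gamma_1$ via compactness, let $S^\star$ be any minimizer of $r_1(\cdot,\Pi^\star)$ over $\mathcal{S}_e$, and invoke the upper semicontinuity of $r_1(S^\star,\cdot)$ to conclude that
\begin{align*}
\sup_{\Pi \in \Gamma_1} \min_{S \in \mathcal{S}_e} r_1(S,\Pi) = \lim_{k\to\infty} \min_S r_1(S,\Pi_k) \le \limsup_{k\to\infty} r_1(S^\star,\Pi_k) \le r_1(S^\star,\Pi^\star) = \min_S r_1(S,\Pi^\star),
\end{align*}
so that $\Pi^\star$ attains the outer supremum and \eqref{eq:minmaxequality} follows. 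The main technical point is verifying Sion's hypotheses in the measure-theoretic setting of $\Gamma_1$, which hinges on placing it inside the space of signed measures with its weak-convergence topology so as to access a linear (and locally convex) ambient structure; every other ingredient --- compactness, convexity, and one-sided semicontinuity in each argument --- has already been supplied by the lemmas preceding the statement.
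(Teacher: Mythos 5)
Your proof is correct, and it reaches the same conclusion via a closely related but distinct route. The paper invokes Theorem~1 of \citet{Fan1953}, whose hypotheses are stated for compact Hausdorff topological \emph{spaces} (not necessarily subsets of vector spaces) together with concavelike/convexlike structure. This lets the authors skip the embedding of $\Gamma_1$ into a linear space: they simply note that $(\mathcal{S}_e,\rho_0)$ and $(\Gamma_1,\xi)$ are compact Hausdorff (Lemmas~\ref{lem:Secompact}, \ref{lem:Cmetrizable}, \ref{lem:Picompact}), check the one-sided semicontinuities (Lemmas~\ref{lem:lscr1}, \ref{lem:usc}), observe convexity of $r_1(\cdot,\Pi)$ and linearity of $r_1(S,\cdot)$, and cite Terkelsen's remark that convexity (resp.\ linearity) on a convex domain implies convexlikeness (resp.\ concavelikeness). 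Fan's theorem as applied then returns the min/max version directly, so no separate attainment argument is needed. You instead apply Sion's theorem, which requires the strategy sets to sit inside locally convex topological vector spaces; you supply the embedding of $\Gamma_1$ into the space of finite signed Borel measures with the weak topology, and of $\mathcal{S}_e$ into $C(\mathcal{Z},\mathbb{R})$ with the topology of compact convergence. Because Sion in your formulation yields $\min\mbox{-}\sup=\sup\mbox{-}\min$, you then need the additional argument that the supremum on each side is attained; your maximizing-subsequence argument for the right-hand side is correct, although it could be shortened by noting that $\Pi\mapsto\min_S r_1(S,\Pi)$ is upper semicontinuous (being an infimum of upper semicontinuous functions) and hence attains its maximum on the compact $\Gamma_1$, mirroring what you did for the left-hand side. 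Both approaches lean on exactly the same underlying compactness and semicontinuity lemmas; what each buys is a matter of bookkeeping --- Fan avoids the ambient linear structure, while Sion is perhaps more familiar but costs you the embedding and the attainment step.
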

\begin{proof}[Lemma~\ref{lem:Fan}]
We will show that the conditions of Theorem~1 in \citet{Fan1953} are satisfied. By Lemma~\ref{lem:Cmetrizable}, $C(\mathcal{Z},\mathbb{R})$ is metrizable by some metric $\rho_0$. By Lemma~\ref{lem:Secompact}, $(\mathcal{S}_e,\rho_0)$ is a compact metric space. Moreover, by Lemma~\ref{lem:Picompact}, $(\Gamma_1,\xi)$ is a compact metric space. As all metric spaces are Hausdorff, $(\mathcal{S}_e,\rho_0)$ and $(\Gamma_1,\xi)$ are Hausdorff. By Lemma~\ref{lem:usc}, for each for each $S\in\mathcal{S}_e$, $r_1(S,\cdot)$ is upper semicontinuous on $(\Gamma_1,\xi)$. By Lemma~\ref{lem:lscr1}, for each $\Pi\in \Gamma_1$, $r_1(\cdot,\Pi)$ is lower semicontinuous on $(\mathcal{S}_e,\rho_0)$. It remains to show that $r_1$ is concavelike on $\Gamma_1$ (called ``concave on'' $\Gamma_1$ by \citeauthor{Fan1953}) and that $r_1$ is convexlike on $\mathcal{S}_e$ (called ``convex on'' $\mathcal{S}_e$ by \citeauthor{Fan1953}). To see that $r_1$ is concavelike on $\Gamma_1$, note that $\Gamma_1$ is convex (\ref{as:Gamma1conv}), and also that, for all $S\in\mathcal{S}_e$, $r_1(S,\cdot)$ is linear, and therefore concave, on $\Gamma_1$. Hence, $r_1$ is concavelike on $\Gamma_1$ \citep[page 409 of][]{Terkelsen1973}. To see that $r_1$ is convexlike on $\mathcal{S}_e$, note that $\mathcal{S}_e$ is convex (\ref{as:Sconv2}), and also that, for all $\Pi\in\Gamma_1$, $r_1(\cdot,\Pi)$ is convex on $\mathcal{S}_e$. Hence, $r_1$ is convexlike on $\mathcal{S}_e$ (ibid.). Thus, by Theorem~1 in \cite{Fan1953}, \eqref{eq:minmaxequality} holds.
\end{proof}

We conclude by proving Theorem~\ref{thm:equilibrium}.
\begin{proof}[Theorem~\ref{thm:equilibrium}]
We follow arguments given on page 93 of \cite{Chang2006} to show that, under the conditions of this theorem, \eqref{eq:minmaxequality} implies that there exists an $S^\star\in \mathcal{S}_e$ and a $\Pi^\star\in\overline{\Gamma}_1$ such that
\begin{align}
   \max_{\Pi\in \Gamma_1} r_1(S^\star,\Pi) = r_1(S^\star,\Pi^\star) = \min_{S\in \mathcal{S}_e} r_1(S,\Pi^\star). \label{eq:Sequilib}
\end{align}
Noting that pointwise maxima of lower semicontinuous functions are themselves lower semicontinuous, Lemma~\ref{lem:lscr1} implies that $\max_{\Pi\in \Gamma_1} r_1(\cdot,\Pi)$ is lower semicontinuous. Because $\mathcal{S}_e$ is compact (Lemma~\ref{lem:Secompact}), there exists an $S^\star\in\mathcal{S}_e$ such that
\begin{align*}
\max_{\Pi\in \Gamma_1} r_1(S^\star,\Pi)=\min_{S\in \mathcal{S}_e} \max_{\Pi\in \Gamma_1} r_1(S,\Pi).
\end{align*}
Similarly, Lemma~\ref{lem:usc} implies that $\min_{S\in\mathcal{S}_e} r_1(S,\cdot)$ is upper semicontinuous on $(\Gamma_1,\xi)$. Because $(\Gamma_1,\xi)$ is compact (Lemma~\ref{lem:Picompact}), there exists a $\Pi^\star\in\Gamma_1$ such that
\begin{align*}
\min_{S\in\mathcal{S}_e} r_1(S,\Pi^\star)=\max_{\Pi\in \Gamma_1} \min_{S\in \mathcal{S}_e} r_1(S,\Pi).
\end{align*}
By Lemma~\ref{lem:Fan}, the above two displays show that $\max_{\Pi\in \Gamma_1} r_1(S^\star,\Pi)= \min_{S\in\mathcal{S}_e} r_1(S,\Pi^\star)$. Combining this result with the elementary fact that $\min_{S\in \mathcal{S}_e} r_1(S,\Pi^\star) \le r_1(S^\star,\Pi^\star)\le \max_{\Pi\in \Gamma_1} r_1(S^\star,\Pi)$ shows that \eqref{eq:Sequilib} holds.

Recall from below \eqref{eq:r1def} that $r_1(S_{T},\Pi)=r(T,\Pi)$ for all $\Pi\in\Gamma_1$ and $T\in\mathcal{T}_e$. Moreover, since $\mathcal{S}_e=\{S_T : T\in\mathcal{T}_e\}$ (Lemma~\ref{lem:SeTe}), there exists a $T^\star\in\mathcal{T}_e$ such that $S=S_{T^\star}$. Combining these observations shows that (i) $\max_{\Pi\in \Gamma_1} r_1(S^\star,\Pi) = \max_{\Pi\in \Gamma_1} r_1(S_{T^\star},\Pi) = \max_{\Pi\in \Gamma_1} r(T^\star,\Pi)$; (ii) $r_1(S^\star,\Pi^\star)=r_1(S_{T^\star},\Pi^\star)=r(T^\star,\Pi^\star)$; and (iii) $\min_{S\in \mathcal{S}_e} r_1(S,\Pi^\star) = \min_{T\in \mathcal{T}_e} r_1(S_T,\Pi^\star) = \min_{T\in \mathcal{T}_e} r(T,\Pi^\star)$. Hence, by \eqref{eq:Sequilib}, $ \max_{\Pi\in\Pi^\star} r(T^\star,\Pi)=r_1(T^\star,\Pi^\star)=\min_{T\in\mathcal{T}_e} r_1(T,\Pi^\star)$. Equivalently, for all $T\in\mathcal{T}_e$ and $\Pi\in\Gamma_1$, $r(T^\star,\Pi)\le r(T^\star,\Pi^\star)\le r(T,\Pi^\star)$.
\end{proof}

\subsubsection{Proof of Theorem~\ref{thm:TMequivar}}\label{sec:TM}

\begin{proof}[Theorem~\ref{thm:TMequivar}]
Fix $T\in\mathcal{M}$, and let $(m_1,m_2,m_3,m_4)\in\prod_{k=1}^4\mathcal{M}_k$ be the corresponding modules. Recall from Algorithm~\ref{alg:architecture} that, for a given $(\matr{d},x_0)$, $x_0^0 := \frac{x_0-\bar{\matr{x}}}{s(\matr{x})}$ and $\matr{d}^0\in\mathbb{R}^{n\times p\times 2}$ is defined so that $\matr{d}_{i*1}^0=\frac{x_i-\bar{\matr{x}}}{s(\matr{x})}$ for all $i=1,\ldots,n$ and $\matr{d}_{*j2}^0=\frac{\matr{y}-\bar{\matr{y}}}{s(\matr{y})}$ for all $j=1,\ldots,p$. Now, for any $(\matr{d},x_0)\in\mathcal{D}_0$,
$$T(\matr{d})(x_0) = \bar{\matr{y}} + s(\matr{y}) m_4\left(\frac{1}{p}\sum_{j=1}^p m_3\left(\left[m_2\left(\frac{1}{n}\sum_{i=1}^n m_1(\matr{d}^0)_{i \ast \ast}\right)\ \middle|\ x_0^0\right]\right)_{j \ast}\right), $$
and so $S_T$ takes the form
$$S_T(z(\matr{d},x_0)) = m_4\left(\frac{1}{p}\sum_{j=1}^p m_3\left(\left[m_2\left(\frac{1}{n}\sum_{i=1}^n m_1(\matr{d}^0)_{i \ast \ast}\right)\ \middle|\ x_0^0\right]\right)_{j \ast}\right).$$
Because $S_T$ does not depend on the last four arguments of $z(\matr{d},x_0)$, we know that $T$ satisfies (\ref{eq:Tequivar2}), that is, is invariant to shifts and rescalings of the features and is equivariant to shifts and rescalings of the outcome. It remains to show permutation invariance , namely (\ref{eq:Tequivar1}). By the permutation invariance of the sample mean and sample standard deviation, it suffices to establish the analogue of this property for $S_T$, namely that $S_T(z(A\matr{d}B,B x_0))=S_T(z(\matr{d},x_0))$ for all $(\matr{d},x_0)\in\mathcal{D}_0$, $A\in\mathcal{A}$, and $B\in\mathcal{B}$. For an array $M$ of size $\mathbb{R}^{n\times p\times o}$, we will write $AMB$ to mean the $\mathbb{R}^{n\times p\times o}$ array for which $(AMB)_{**\ell}=AM_{**\ell}B$ for all $\ell=1,2,\ldots,o$. Note that
\begin{align*}
S_T(z(A\matr{d}B,B x_0)) &= m_4\left(\frac{1}{p}\sum_{j=1}^p m_3\left(\left[m_2\left(\frac{1}{n}\sum_{i=1}^n m_1(A\matr{d}^0B)_{i \ast \ast}\right)\ \middle|\ B^\top x_0^0\right]\right)_{j \ast}\right) \\
&= m_4\left(\frac{1}{p}\sum_{j=1}^p m_3\left(\left[m_2\left(\frac{1}{n}\sum_{i=1}^n (A m_1(\matr{d}^{0}) B)_{i \ast \ast}\right)\ \middle|\ B^\top x_0^0\right]\right)_{j \ast}\right) \tag{by \ref{it:M1equivar}} \\
&= m_4\left(\frac{1}{p}\sum_{j=1}^p m_3\left(\left[m_2\left(B^\top \frac{1}{n}\sum_{i=1}^n (A m_1(\matr{d}^{0}))_{i \ast \ast}\right)\ \middle|\ B^\top x_0^0\right]\right)_{j \ast}\right) \\
&= m_4\left(\frac{1}{p}\sum_{j=1}^p m_3\left(\left[m_2\left(B^\top \frac{1}{n}\sum_{i=1}^n m_1(\matr{d}^{0})_{i \ast \ast}\right)\ \middle|\ B^\top x_0^0\right]\right)_{j \ast}\right) \\
&= m_4\left(\frac{1}{p}\sum_{j=1}^p m_3\left(\left[B^\top m_2\left(\frac{1}{n}\sum_{i=1}^n m_1(\matr{d}^0)_{i \ast \ast}\right)\ \middle|\  B^\top x_0^0\right]\right)_{j \ast}\right) \tag{by \ref{it:M2equivar}} \\
&= m_4\left(\frac{1}{p}\sum_{j=1}^p m_3\left(B^\top\left[m_2\left(\frac{1}{n}\sum_{i=1}^n m_1(\matr{d}^0)_{i \ast \ast}\right)\ \middle|\  x_0^0 \right]\right)_{j \ast}\right) \\
&= m_4\left(\frac{1}{p}\sum_{j=1}^p \left(B^\top m_3\left(\left[m_2\left(\frac{1}{n}\sum_{i=1}^n m_1(\matr{d}^0)_{i \ast \ast}\right)\ \middle|\  x_0^0\right]\right)\right)_{j \ast}\right) \tag{by \ref{it:M3equivar}} \\
&= m_4\left(\frac{1}{p}\sum_{j=1}^p m_3\left(\left[m_2\left(\frac{1}{n}\sum_{i=1}^n m_1(\matr{d}^0)_{i \ast \ast}\right)\ \middle|\  x_0^0\right]\right)_{j \ast}\right) \\
&= S_T(z(\matr{d},x_0)).
\end{align*}
Hence, $T$ satisfies (\ref{eq:Tequivar1}).
\end{proof}

\section{Extensions and Discussion}\label{sec:extensions}

We have focused on a particular set of invariance properties on the collection of priors $\Gamma$, namely \ref{in:permpred}-\ref{in:shiftrescaleoutcome}. Our arguments can be generalized to handle other properties. As a simple example, suppose \ref{in:shiftrescaleoutcome} is strengthened so that $\Gamma$ is invariant to nonzero (rather than only nonnegative) rescalings $\tilde{b}$ of the outcome -- this property is in fact satisfied in all of our experiments. Under this new condition, the results in Section~\ref{sec:characterization} remain valid with the definition of the class of equivariant estimators $\mathcal{T}_e$ defined in \eqref{eq:Tequivar1} and \eqref{eq:Tequivar2} modified so that $\tilde{b}$ may range over $\mathbb{R}\backslash\{0\}$. Moreover, for any $T$, Jensen's inequality shows that the $\Gamma$-maximal risk of the symmetrized estimator that averages $T(\matr{x},\matr{y})(x_0)$ and negative $T(\matr{x},-\matr{y})(x_0)$ is no worse than that of $T$. To assess the practical utility of this observation, we numerically evaluated the performance of symmetrizations of the estimators learned in our experiments. Symmetrizing improved performance across most settings (see Appendix~\ref{app:symmetrization}). We, therefore, recommend carefully characterizing the invariance properties of a given problem when setting out to meta-learn an estimator.

Much of this work has focused on developing and studying a framework for meta-learning a $\Gamma$-minimax estimator for a single, prespecified collection of priors $\Gamma$. In some settings, it may be difficult to \textit{a priori} specify a single such collection that is both small enough so that the $\Gamma$-minimax estimator is not too conservative while also being rich enough so that the priors in this collection actually place mass in a neighborhood of the true data-generating distribution. Two approaches for overcoming this challenge seem to warrant further consideration. The first would be to employ an empirical Bayes approach \citep{efron1972limiting}, wherein a large dataset from a parallel situation can be used to inform about the possible forms that the prior might take; this, in turn, would also inform about the form that the collection $\Gamma$ should take. Recent advances also make it possible to incorporate knowledge about the existence of qualitatively different categories of features when performing empirical Bayes prediction \citep{nabi2020decoupling}. The second approach involves using AMC to approximate $\Gamma$-minimax estimators over various choices of $\Gamma$, and then to use a stacked ensemble to combine the predictions from these various base estimators. 
In our data experiments, we saw that a simple version of this ensemble that combined four base AMC estimators consistently performed at least as well as the best of these base estimators. 
These results are compatible with existing oracle inequalities for stacked ensembles \citep{van2006oracle,van2007super}, which suggest that including additional base learners should not meaningfully harm performance and can improve performance when the new base learners are predictive of the outcome. Together, these theoretical and experimental results suggest that the proposed AMC meta-learning strategy provides a valuable new tool for improving upon existing prediction frameworks even in settings where \textit{a priori} specification of a single collection of priors $\Gamma$ is not possible. 

In this work, we have focused on the case where the problem of interest is a supervised learning problem and the objective is to predict a continuous outcome based on iid data. While the AMC algorithm generalizes naturally to a variety of other sampling schemes and loss functions \citep[see][]{Luedtkeetal2019}, our characterization of the equivariance properties of an optimal estimator was specific to the iid regression setting that we considered. In future work, it would be interesting to characterize these properties in greater generality, including in classification settings and inverse reinforcement learning settings \citep[e.g.,][]{russell1998learning,geng2020deep}.


\appendix 
\section*{\LARGE Appendices}

\renewcommand{\thesection}{\Alph{section}}

\setcounter{equation}{0}
\renewcommand{\theequation}{S\arabic{equation}}
\setcounter{theorem}{0}
\renewcommand{\thetheorem}{S\arabic{theorem}}
\renewcommand{\thecorollary}{S\arabic{corollary}}
\renewcommand{\thelemma}{S\arabic{lemma}}
\renewcommand{\thetable}{S\arabic{table}}
\renewcommand{\thefigure}{S\arabic{figure}}

\section{Review of amenability}\label{app:amenability}

In this appendix, we review the definition of an amenable group, an important implication of amenability, and also some sufficient conditions for establishing that a group is amenable. This material will prove useful in our proof of Theorem~\ref{thm:HS} (see Section~\ref{sec:HS}). We refer the reader to \citet{Pier1984} for a thorough coverage of amenability.

\begin{definition}[Amenability]
Let $\mathcal{G}$ be a locally compact, Hausdorff group and let $L^\infty(\mathcal{G})$ be the space of Borel measurable functions that are essentially bounded with respect to the Haar measure. A mean on $L^\infty(\mathcal{G})$ is defined as a linear functional $M \in L^\infty(\mathcal{G})^\ast$ such that $M(\lambda) \geq 0$ whenever $\lambda \geq 0$ and  $M(1_{\mathcal{G}})=1$. 
A mean $M$ is said to be left invariant for a group $\mathcal{G}$ if and only if $M(\delta_g \ast \lambda) = M(\lambda)$ for all $\lambda \in L^\infty(\mathcal{G})$, where $(\delta_g \ast \lambda)(h) = \lambda(g^{-1} h)$. The group $\mathcal{G}$ is said to be amenable if and only if there is a left invariant mean on $L^\infty(\mathcal{G})$.
\end{definition}

We now introduce the fixed point property, and subsequently present a result showing its close connection to the definition given above. Throughout this work, we equip all group actions $\mathcal{G}\times\mathcal{W}\rightarrow\mathcal{W}$ with the product topology.

\begin{definition}[Fixed point property]
We say that a locally compact, Hasudorff group $\mathcal{G}$ has the fixed point property if, whenever $\mathcal{G}$ acts affinely on a compact convex set $\mathcal{K}$ in a locally convex topological vector space $E$ with the map $\mathcal{G} \times \mathcal{K} \to \mathcal{K}$ continuous, there is a point in $x_0\in \mathcal{K}$ fixed under the action of $\mathcal{G}$.
\end{definition}

\begin{theorem}[Day's Fixed Point Theorem]\label{thm:MK}
A locally compact, Hausdorff group $\mathcal{G}$ has the fixed point property if and only if $\mathcal{G}$ is amenable.
\end{theorem}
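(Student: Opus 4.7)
The plan is to prove both implications. For the ``amenable implies fixed point property'' direction, the strategy will be to take the $M$-average of an orbit to produce a fixed point. For the converse, the plan is to apply the fixed point property to the space of means on $L^\infty(\mathcal{G})$ equipped with a natural translation action, whose fixed points are by construction left-invariant means.

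In more detail, for the forward direction, suppose $M$ is a left-invariant mean on $L^\infty(\mathcal{G})$ and that $\mathcal{G}$ acts continuously and affinely on a compact convex $\mathcal{K}\subset E$ with $E$ locally convex. Fix any $x_0\in \mathcal{K}$. For each $\phi\in E^*$ the function $g\mapsto \phi(g\cdot x_0)$ is bounded and Borel measurable (using continuity of the action and compactness of $\mathcal{K}$), hence lies in $L^\infty(\mathcal{G})$, so one can define a linear functional $T$ on $E^*$ by $T(\phi):=M(g\mapsto \phi(g\cdot x_0))$, which satisfies $|T(\phi)|\le \sup_{y\in \mathcal{K}}|\phi(y)|$. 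A Hahn--Banach/separation argument applied to $\mathcal{K}$ (which is weakly compact, since it is compact in the original, hence finer, topology) will produce an $x^\star\in \mathcal{K}$ representing $T$, i.e., with $\phi(x^\star)=T(\phi)$ for every $\phi\in E^*$. Then for any $h\in\mathcal{G}$, left-invariance of $M$ gives
\[
\phi(h\cdot x^\star)
= T(\phi\circ h)
= M\bigl(g\mapsto \phi(hg\cdot x_0)\bigr)
= M\bigl(g\mapsto \phi(g\cdot x_0)\bigr)
= \phi(x^\star),
\]
and since $E^*$ separates points of $E$, $h\cdot x^\star=x^\star$.

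For the converse, let $\mathcal{K}$ denote the set of means on $L^\infty(\mathcal{G})$, viewed inside the unit ball of $L^\infty(\mathcal{G})^*$ with its weak-$*$ topology. By Banach--Alaoglu together with the fact that the defining conditions (positivity and $M(\mathbf{1}_{\mathcal{G}})=1$) are weak-$*$ closed, $\mathcal{K}$ is compact and convex, and it sits inside the locally convex space $L^\infty(\mathcal{G})^*$. The translation action $(g\cdot M)(\lambda):=M(h\mapsto \lambda(g^{-1}h))$ is affine in $M$, and, with some care, jointly continuous in $(g,M)$. Applying the fixed point property to this action yields a fixed mean, which is by definition left-invariant, so $\mathcal{G}$ is amenable.

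The main obstacle will be verifying joint continuity of the translation action in the second direction: affine-ness and separate continuity are immediate, but joint continuity requires the standard trick of using regularity of Haar measure and continuity of translation on $L^1(\mathcal{G})$, then transferring to $L^\infty$ by duality (this is the point at which local compactness of $\mathcal{G}$ is used essentially). A secondary subtlety in the first direction is measurability of $g\mapsto \phi(g\cdot x_0)$, which will follow at once from continuity of the action assumed in the definition of the fixed point property. Both threads of the argument are carried out in detail in Day's original paper and in Pier's monograph on amenability, which I would follow for the technical lemmas rather than re-derive from scratch.
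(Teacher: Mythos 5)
The paper gives no proof of this statement; it simply cites Theorem~5.4 of \citet{Pier1984}. So the only comparison available is against the standard argument, of which your sketch is a reasonable outline. The forward direction is essentially right, with one small slip: when you write $\phi(h\cdot x^\star) = T(\phi\circ h)$, you are applying the representing property to $\phi\circ h$, which is affine but not linear (the action is only affine), hence not in $E^*$. This is easily repaired by decomposing $h\cdot y = L_h y + c_h$ with $L_h$ linear, applying the representation to $\phi\circ L_h \in E^*$, and using $M(1_{\mathcal{G}})=1$ to absorb the constant $\phi(c_h)$; the chain of equalities then closes as you wrote. Also, $(g\cdot M)(\lambda):=M(h\mapsto \lambda(g^{-1}h))$ is a right action; use $M(h\mapsto\lambda(gh))$ instead for a left action with the same set of fixed points.

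The genuine gap is in the converse, and it is precisely where you sense trouble. The translation action of $\mathcal{G}$ on the set of means inside $L^\infty(\mathcal{G})^*$ with the weak-$*$ topology is \emph{not} jointly continuous: for general $\lambda\in L^\infty(\mathcal{G})$, $\|\lambda(g'^{-1}\cdot)-\lambda(g^{-1}\cdot)\|_\infty$ does not tend to zero as $g'\to g$, and this sup-norm error cannot be controlled along a weak-$*$ convergent net of means. ``Transferring to $L^\infty$ by duality'' addresses the weak-$*$ continuity of $g\mapsto \lambda_g$ in $L^\infty$ against $L^1$, which is a different topology from the one living on $L^\infty(\mathcal{G})^*$ that you actually need, so that route does not close. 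The standard resolution (Pier, and originally due to Rickert/Greenleaf in this form) is to first apply the fixed point property to the set of means on the subalgebra of left uniformly continuous bounded functions on $\mathcal{G}$, where translation \emph{is} norm-continuous and the action on means is therefore jointly weak-$*$ continuous. One then extends the resulting invariant mean from this subalgebra to all of $L^\infty(\mathcal{G})$ by convolving with an approximate identity in $L^1(\mathcal{G})$, which is the step that genuinely uses local compactness. As stated, your outline omits this change of function space, and without it the hypothesis of the fixed point property is simply not verified on the set of means you chose.
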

\begin{proof}
See the proof of Theorem~5.4 in \citet{Pier1984}.
\end{proof}

The following results are useful for establishing amenability.
\begin{lemma} Any compact group is amenable.
\end{lemma}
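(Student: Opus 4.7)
The plan is to exhibit an explicit left-invariant mean on $L^\infty(\mathcal{G})$, using the normalized Haar measure. First I would invoke the existence of a (bi-)invariant Haar measure on any locally compact Hausdorff group; when $\mathcal{G}$ is compact, this measure has finite total mass and so may be normalized to a probability measure $\mu$ with $\mu(\mathcal{G}) = 1$. This is the classical Haar--von Neumann theorem, which I would cite rather than reprove.

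Next I would define the candidate mean $M : L^\infty(\mathcal{G}) \to \mathbb{R}$ by
\[
M(\lambda) := \int_{\mathcal{G}} \lambda(h)\, d\mu(h).
\]
Verifying the axioms is straightforward: linearity and positivity follow immediately from the corresponding properties of the Lebesgue integral, and $M(1_{\mathcal{G}}) = \mu(\mathcal{G}) = 1$ by the normalization of $\mu$. The only remaining point is left-invariance. Fix $g \in \mathcal{G}$ and $\lambda \in L^\infty(\mathcal{G})$. Then by the change of variables $h \mapsto g h$ combined with the left-invariance of $\mu$,
\[
M(\delta_g * \lambda) = \int_{\mathcal{G}} \lambda(g^{-1} h)\, d\mu(h) = \int_{\mathcal{G}} \lambda(h)\, d\mu(h) = M(\lambda).
\]
Hence $M$ is a left-invariant mean, which by definition establishes that $\mathcal{G}$ is amenable.

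There is essentially no obstacle here beyond quoting the existence of Haar measure on a compact group; the rest is a one-line substitution. If one wanted a fully self-contained treatment, the main work would be proving the Haar--von Neumann existence theorem, but that is standard and well outside the scope of what needs to be reproduced here. The proof is therefore quite short and can be presented in a single short paragraph referencing the construction above.
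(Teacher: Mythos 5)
Your proof is correct and takes exactly the same route as the paper, which simply states ``Take the normalized Haar measure as an invariant mean''; you have merely spelled out the verification of linearity, positivity, normalization, and left-invariance that the paper leaves implicit.
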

\begin{proof}
Take the normalized Haar measure as an invariant mean.
\end{proof}

\begin{lemma} Any locally compact Abelian group is amenable.
\end{lemma}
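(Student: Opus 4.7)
The plan is to obtain a left-invariant mean as a fixed point of the translation action on the compact convex set of all means, via the Markov--Kakutani fixed point theorem. First I would let $\mathcal{M}\subset L^\infty(\mathcal{G})^*$ denote the set of all means and equip $L^\infty(\mathcal{G})^*$ with the weak-$*$ topology, which makes it a locally convex Hausdorff topological vector space. Nonemptiness of $\mathcal{M}$ follows by fixing a compact neighborhood $K$ of the identity with positive Haar measure $|K|$ and taking $M_0(\lambda):=\tfrac{1}{|K|}\int_K \lambda$. Convexity is immediate from the defining inequalities, and weak-$*$ compactness follows from Banach--Alaoglu, since $\mathcal{M}$ sits inside the unit ball of $L^\infty(\mathcal{G})^*$ and is cut out by the weak-$*$-closed conditions $\{M : M(\lambda) \ge 0\}$ for $\lambda \ge 0$ together with $\{M : M(1_{\mathcal{G}}) = 1\}$.

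Next, for each $g \in \mathcal{G}$ I would define $T_g : \mathcal{M} \to \mathcal{M}$ by $(T_g M)(\lambda) := M(\delta_g * \lambda)$. Since $\delta_g * \lambda \ge 0$ whenever $\lambda \ge 0$ and $\delta_g * 1_{\mathcal{G}} = 1_{\mathcal{G}}$, this map does land in $\mathcal{M}$. Affineness of $T_g$ is transparent, and weak-$*$ continuity holds because, for each fixed $\lambda$, the map $M \mapsto M(\delta_g * \lambda)$ is just the weak-$*$ evaluation at the element $\delta_g * \lambda \in L^\infty(\mathcal{G})$. A direct computation using $\delta_h*(\delta_g * \lambda) = \delta_{hg} * \lambda$ gives $T_g T_h = T_{hg}$, so swapping the roles of $g$ and $h$ shows that $T_g T_h = T_h T_g$ precisely when the group multiplication is commutative. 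Hence under the abelian hypothesis, $\{T_g : g \in \mathcal{G}\}$ is a commuting family of continuous affine self-maps of the nonempty convex compact set $\mathcal{M}$.

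The Markov--Kakutani fixed point theorem then furnishes a common fixed point $M^\star \in \mathcal{M}$. Unwinding the definitions, $T_g M^\star = M^\star$ for every $g$ is exactly the statement that $M^\star(\delta_g * \lambda) = M^\star(\lambda)$ for all $g$ and $\lambda$, i.e., $M^\star$ is a left-invariant mean on $L^\infty(\mathcal{G})$. Thus $\mathcal{G}$ is amenable. I expect that the only conceptual subtlety is the left/right convention: the assignment $g \mapsto T_g$ is naturally an antihomomorphism into the endomorphisms of $\mathcal{M}$, and it is exactly the abelian hypothesis that converts this antihomomorphism into a commuting family. Everything else---nonemptiness, convexity, weak-$*$ compactness, continuity, and affineness---is routine bookkeeping from Banach--Alaoglu and the definitions.
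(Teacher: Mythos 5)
Your proof is correct, complete, and is the classical Markov--Kakutani argument for amenability of Abelian groups. The paper does not actually supply a proof of this lemma; it simply refers the reader to Proposition~12.2 of Pier (1984). So you have done strictly more than the paper: you have reconstructed the standard argument from scratch. The key points all check out: the set $\mathcal{M}$ of means is nonempty (your normalized Haar average over a compact neighborhood works; any Dirac-type evaluation would also do), convex, and weak-$*$ compact by Banach--Alaoglu together with the weak-$*$ closedness of the positivity and normalization constraints; the translation operators $T_g$ are well-defined affine self-maps of $\mathcal{M}$ and are weak-$*$ continuous because $T_g$-evaluation at $\lambda$ is just evaluation at $\delta_g * \lambda$; and the identity $T_g T_h = T_{hg}$ makes $\{T_g\}$ a commuting family precisely because $\mathcal{G}$ is Abelian, so Markov--Kakutani applies. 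One cosmetic remark: you implicitly use that every mean has operator norm at most $1$ so that $\mathcal{M}$ sits inside the closed unit ball; this is standard (positivity plus $M(1_{\mathcal{G}})=1$ gives $|M(\lambda)|\le\|\lambda\|_\infty$ for real $\lambda$, with the complex case handled by taking real and imaginary parts) but could be stated explicitly. Your observation about the left/right (anti)homomorphism convention is the right thing to flag, and you handle it correctly. Since the paper offers no argument of its own, there is nothing in the paper's approach that buys anything your proof does not; citing Pier is merely a space-saving device, and Pier's own argument is in the same Markov--Kakutani spirit.
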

\begin{proof}
See the proof of Proposition 12.2 in \citet{Pier1984}.
\end{proof}

\begin{lemma} \label{group:ext}
Let $\mathcal{G}$ be a locally compact group and $\mathcal{N}$ a closed normal subgroup of $\mathcal{G}$. If $\mathcal{N}$ and $\mathcal{G}/\mathcal{N}$ are amenable, then $\mathcal{G}$ is amenable.
\end{lemma}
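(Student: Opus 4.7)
The plan is to build a left-invariant mean on $L^\infty(\mathcal{G})$ by composing the invariant means $M_\mathcal{N}$ on $L^\infty(\mathcal{N})$ and $M_{\mathcal{G}/\mathcal{N}}$ on $L^\infty(\mathcal{G}/\mathcal{N})$ via a two-stage averaging procedure. Explicitly, for $f\in L^\infty(\mathcal{G})$ and $g\in\mathcal{G}$, set
\[
\phi_f(g)\;:=\;M_\mathcal{N}\bigl(h\mapsto f(gh)\bigr),
\]
which is well defined because $h\mapsto f(gh)$ is a bounded Borel function on $\mathcal{N}$ (after a choice of representative). The key algebraic observation is that $\phi_f$ is constant on right cosets of $\mathcal{N}$: for any $n\in\mathcal{N}$, writing $\lambda(h):=f(gh)$, we get $\phi_f(gn)=M_\mathcal{N}(h\mapsto\lambda(nh))=M_\mathcal{N}(\delta_{n^{-1}}\ast\lambda)=M_\mathcal{N}(\lambda)=\phi_f(g)$ by left-invariance of $M_\mathcal{N}$. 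Hence $\phi_f$ descends to a function $\bar\phi_f$ on $\mathcal{G}/\mathcal{N}$, and we define
\[
M_\mathcal{G}(f)\;:=\;M_{\mathcal{G}/\mathcal{N}}(\bar\phi_f).
\]

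Next I would verify that $M_\mathcal{G}$ is a mean: linearity follows from linearity of $M_\mathcal{N}$ and $M_{\mathcal{G}/\mathcal{N}}$; positivity follows because $f\ge 0$ yields $\phi_f\ge 0$ pointwise and hence $\bar\phi_f\ge 0$; and $M_\mathcal{G}(1_\mathcal{G})=M_{\mathcal{G}/\mathcal{N}}(1_{\mathcal{G}/\mathcal{N}})=1$ because $\phi_{1_\mathcal{G}}\equiv 1$. Left-invariance is then the cleanest part: for $g_0\in\mathcal{G}$, one computes $\phi_{\delta_{g_0}\ast f}(g)=M_\mathcal{N}(h\mapsto f(g_0^{-1}gh))=\phi_f(g_0^{-1}g)$, so on the quotient $\bar\phi_{\delta_{g_0}\ast f}=\delta_{g_0\mathcal{N}}\ast\bar\phi_f$, whence left-invariance of $M_{\mathcal{G}/\mathcal{N}}$ yields $M_\mathcal{G}(\delta_{g_0}\ast f)=M_\mathcal{G}(f)$.

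The main obstacle I anticipate is measurability: one needs $g\mapsto\phi_f(g)$ to be Borel measurable on $\mathcal{G}$ (and, upon passage to the quotient, $\bar\phi_f$ to lie in $L^\infty(\mathcal{G}/\mathcal{N})$ with the appropriate Haar-almost-everywhere meaning). A priori $M_\mathcal{N}$ acts on $L^\infty$-classes, so one must check that the construction respects the Haar-null ideals under the change-of-variables $h\mapsto gh$. The standard device is to first establish the result for right-uniformly continuous bounded $f$, where $g\mapsto f(g\cdot)$ is norm-continuous into $L^\infty(\mathcal{N})$, making $\phi_f$ continuous; then extend by density or by using the fact that right-uniformly continuous functions separate points of $L^\infty(\mathcal{G})^\ast$. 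An alternative route, which also bypasses this subtlety, is to invoke Day's fixed point theorem (Theorem~\ref{thm:MK}): amenability of $\mathcal{N}$ and $\mathcal{G}/\mathcal{N}$ together with the closed normal subgroup structure can be shown to give $\mathcal{G}$ the fixed point property by lifting fixed points through the short exact sequence $\mathcal{N}\hookrightarrow\mathcal{G}\twoheadrightarrow\mathcal{G}/\mathcal{N}$.

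I would present the direct construction, handling measurability by first restricting to the space of bounded right-uniformly continuous functions (noting that an invariant mean on this space extends to one on $L^\infty(\mathcal{G})$ by a Hahn-Banach/weak-$\ast$ compactness argument, as in Chapter~4 of \citet{Pier1984}), and only then invoking $M_{\mathcal{G}/\mathcal{N}}$ in the second stage. The algebraic verifications of positivity, normalization, and left-invariance are routine once the measurability issue is dispatched, so the entire proof consists of setting up $\phi_f$ and $\bar\phi_f$ carefully and quoting the extension argument.
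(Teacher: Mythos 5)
Your primary construction is correct and is the classical ``iterated mean'' proof of amenability under extensions, but it is a genuinely different route from the paper's. The paper works entirely with the fixed point characterization of amenability (Theorem~\ref{thm:MK}): given a continuous affine $\mathcal{G}$-action on a nonempty compact convex set $\mathcal{K}$, it passes to $\mathcal{K}^{\mathcal{N}}$, the set of $\mathcal{N}$-fixed points, which is nonempty because $\mathcal{N}$ is amenable, closed and compact because the action is continuous, convex because the action is affine, and preserved by $\mathcal{G}$ by normality (since $g^{-1}ng \in \mathcal{N}$ and $x \in \mathcal{K}^{\mathcal{N}}$ give $g^{-1}ngx = x$, hence $ngx = gx$). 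The $\mathcal{G}$-action on $\mathcal{K}^{\mathcal{N}}$ then descends to a $\mathcal{G}/\mathcal{N}$-action, which has a fixed point by amenability of $\mathcal{G}/\mathcal{N}$, and that point is fixed by all of $\mathcal{G}$. The tradeoff is exactly the one you flag. Your mean-composition argument forces you to confront the measurability issue head on: for a general $f \in L^\infty(\mathcal{G})$, the restriction $h \mapsto f(gh)$ to $\mathcal{N}$ need not be a well-defined element of $L^\infty(\mathcal{N})$ when the coset $g\mathcal{N}$ is Haar-null in $\mathcal{G}$, and one must also verify that $\bar{\phi}_f$ lies in the right function class on $\mathcal{G}/\mathcal{N}$. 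Your workaround via bounded right-uniformly continuous functions and a weak-$\ast$ extension is valid and is how several textbook treatments proceed, but it adds real length and requires checking that the two-stage average preserves uniform continuity. What the direct construction buys you is concreteness: it actually exhibits the invariant mean on $\mathcal{G}$. What the fixed point argument buys is that it sidesteps the entire measurability discussion; and in the paper's setting, where Day's theorem is already in place and is in any case the tool used in the proof of Theorem~\ref{thm:HS}, it is the shorter and more self-contained path. You even mention the fixed point route as an alternative in your final paragraph; that alternative is in fact what the paper does.
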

\begin{proof}
Assume that a continuous affine action of $\mathcal{G}$ on a nonempty compact convex set $\mathcal{K}$ is given. Let $\mathcal{K}^{\mathcal{N}}$ be the set of all fixed points of $\mathcal{N}$ in $\mathcal{K}$. Since $\mathcal{N}$ is amenable, Theorem~\ref{thm:MK} implies that $\mathcal{K}^\mathcal{N}$ is nonempty. Since the group action is continuous, $\mathcal{K}^\mathcal{N}$ is a closed subset of $\mathcal{K}$ and hence is compact. Since the action is affine, $\mathcal{K}^\mathcal{N}$ is convex. Now, note that, for all $x \in \mathcal{K}^\mathcal{N}$, $g \in \mathcal{G}$, and $n \in \mathcal{N}$, the fact that $g^{-1}ng \in \mathcal{N}$ implies that $g^{-1}ngx = x$ which implies $ngx = gx$. Hence, $\mathcal{K}^\mathcal{N}$ is preserved by the action of $\mathcal{G}$. The action of $\mathcal{G}$ on $\mathcal{K}^\mathcal{N}$ factors to an action of $\mathcal{G}/\mathcal{N}$ on $\mathcal{K}^\mathcal{N}$, which has a fixed point $x_0$ since $\mathcal{G}/\mathcal{N}$ is amenable. But then $x_0$ is fixed by each $g \in \mathcal{G}$. Hence, $\mathcal{G}$ is amenable.
\end{proof}

\section{Examples where \ref{as:Mcompact} holds}\label{app:metricDiscussion}

We now describe settings where \ref{as:Mcompact} is often applicable. We will specify $\mathcal{P}_1$ in each of these settings, and the model $\mathcal{P}$ is then defined by expanding $\mathcal{P}_1$ to contain the distributions of all possible shifts and rescalings of a random variate drawn from some $P_1\in\mathcal{P}_1$. The first class of models for which \ref{as:Mcompact} is often satisfied is parametric in nature, with each distribution $P_\theta\in\mathcal{P}_1$ indexed smoothly by a finite dimensional parameter $\theta$ belonging to a subset $\Theta$ of $\mathbb{R}^k$. We note here that, because the sample size $n$ is fixed in our setting, we can obtain an essentialy unrestricted model by allowing $k$ to be large relative to $n$. In parametric settings, $\rho$ can often be defined as $\rho(P_\theta,P_{\theta'})=\|\theta-\theta'\|_2$, where we recall that $\|\cdot\|_2$ denotes the Euclidean norm. If $\Gamma_1$ is uniformly tight, which certainly holds if $\Theta$ is bounded, then \ref{as:Mcompact} holds provided $\theta\mapsto R(T,P_\theta)$ is upper-semicontinuous for all $T\in\mathcal{T}_e$. For a concrete example where the conditions of \ref{as:Mcompact} are satisfied, consider the case that $\Theta=\{\theta : \|\theta\|_0\le \mathfrak{s}_0, \|\theta\|_1\le \mathfrak{s}_1\}$ for sparsity parameters $\mathfrak{s}_0$ and $\mathfrak{s}_1$ on $\|\theta\|_0:=\#\{j : \theta_j\not=0\}$ and $\|\theta\|_1:=\sum_j |\theta_j|$, and $P_\theta$ is the distribution for which $X\sim N(\matr{0}_p,{\rm Id}_p)$, and $Y|X\sim N(\theta^\top X,1)$. This setting is closely related to the sparse linear regression example that we study numerically in Section~\ref{sec:linreg}.

Condition \ref{as:Mcompact} also allows for nonparametric regression functions. Define $\phi^p$ to be the $p$-dimensional standard Gaussian measure.
Define $L^2_0(\phi^p) = \{f \in L^2(\phi^p) \mid \int f(x) d\phi^p(x) = 0\}$.
Let $\mathcal{F} \subset L^2_0(\phi^p)$ satisfy the following conditions:
\begin{enumerate}[(i)]
\item $\mathcal{F}$ is bounded. $\sup_{f \in \mathcal{F}} \|f\|_{L^2(\phi^p)} < \infty$.
\item $\mathcal{F}$ is uniformly equivanishing. $\lim_{N \to \infty} \sup_{f \in \mathcal{F}} \|f 1_{B(0,N)^c} \|_{L^2(\phi^p)} = 0$.
\item $\mathcal{F}$ is uniformly equicontinuous. $\lim_{r \searrow 0} \sup_{f \in \mathcal{F}} \sup_{y \in B(0,r)} \|\tau_y f - f\|_{L^2(\phi^p)} = 0$ where $\tau_y$ is the translation by $y$ operator.
\item $\mathcal{F}$ is closed in $L^2(\phi^p)$.
\item There exists $q' > 2$ such that $\mathcal{F} \subset L^{q'}(\phi^p)$.
\end{enumerate}
By a generalization of the Riesz-Kolmogorov theorem as seen in \cite{guo2019improvement}, $\mathcal{F}$ is compact under assumptions (i) through (iv). Let $c > 0$, $\alpha \in (0,1]$. We suppose that $\mathcal{S} = \mathcal{S}^0$ where $\mathcal{S}^0$ is the set of all functions $S: \mathcal{Z} \to \mathbb{R}$ such that $|S(\matr{z})| \leq F(\matr{z})$, $|S(\matr{z})-S(\matr{z}')| \leq c\|\matr{z} - \matr{z}'\|_2^\alpha$ for all $\matr{z}, \matr{z}' \in \mathcal{Z}$. Assume further that $F$ is bounded, i.e.
\begin{align}
    \sup_{z \in \mathcal{Z}} |F(z)| = B_{\mathcal{S}^0} < \infty, \label{eq:Bdef}
\end{align}
and also that $F$ is constant in the orbits induced by the group action on $\mathcal{Z}$ defined in Section~\ref{sec:groupHS}. 

For each $f \in \mathcal{F}$, let $P_f$ denote the distribution of $X \sim N(0,{\rm Id}_p)$, $Y \mid X \sim N(f(X),1)$. Suppose that $\mathcal{P}_1 = \{P_f \mid f \in \mathcal{F}\}$. With the metric $\rho(f,g) = \|f-g\|_{L^2(\phi^p)}$, $(\mathcal{P}_1,\rho)$ is a complete separable compact metric space.
We also see that $P \mapsto R(T,P)$ is continuous.
\begin{lemma}
For all $T \in \mathcal{T}_e$, $P \mapsto R(T,P)$ is continuous in this example.
\end{lemma}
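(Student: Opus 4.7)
The plan is to prove continuity by a coupling-plus-Vitali argument. First, I fix $T \in \mathcal{T}_e$ and a sequence $f_k \to f$ in $L^2(\phi^p)$, and couple the distributions $P_{f_k}$ and $P_f$ on a single probability space: generate $X_0, X_1, \ldots, X_n \sim N(\matr{0}_p, \mathrm{Id}_p)$ iid and, independently, $\epsilon_0, \ldots, \epsilon_n \sim N(0,1)$ iid, and set $Y_i^k := f_k(X_i)+\epsilon_i$ and $Y_i := f(X_i)+\epsilon_i$. Writing $\matr{D}^k = (\matr{X}, \matr{Y}^k)$, the risks become $R(T, P_{f_k}) = \E[(T(\matr{D}^k)(X_0) - f_k(X_0))^2]$ and similarly for $R(T, P_f)$, so the task reduces to passing to the limit inside a single expectation under the coupling.

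Second, I would invoke the subsequence principle: it suffices to show every subsequence admits a further subsequence along which the risks converge. Since $L^2(\phi^p)$-convergence gives $f_k(X_i) \to f(X_i)$ in probability for each of the finitely many indices $i = 0, 1, \ldots, n$, a diagonal extraction produces a subsequence (still denoted $k$) with $f_k(X_i) \to f(X_i)$ almost surely for every $i$. Consequently, $\bar{\matr{Y}}^k \to \bar{\matr{Y}}$ and $s(\matr{Y}^k) \to s(\matr{Y})$ almost surely, so $z(\matr{D}^k, X_0) \to z(\matr{D}, X_0)$ almost surely. The local H\"older property \ref{as:holder2} guarantees that $S_T$ is continuous, yielding $S_T(z(\matr{D}^k, X_0)) \to S_T(z(\matr{D}, X_0))$ and thus $T(\matr{D}^k)(X_0) \to T(\matr{D})(X_0)$ almost surely. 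Combining with $f_k(X_0) \to f(X_0)$ a.s. gives $(T(\matr{D}^k)(X_0) - f_k(X_0))^2 \to (T(\matr{D})(X_0) - f(X_0))^2$ almost surely.

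Finally, I would close with Vitali's convergence theorem, using the global bound $|S_T(\matr{z})| \leq B_{\mathcal{S}^0}$ from \eqref{eq:Bdef} to obtain $|T(\matr{D}^k)(X_0)| \leq |\bar{\matr{Y}}^k| + B_{\mathcal{S}^0}\, s(\matr{Y}^k)$, and computing $\E[(Y_1^k)^2] = \|f_k\|_{L^2(\phi^p)}^2 + 1$ (using $\mathcal{F}\subset L^2_0(\phi^p)$ and $X_1 \perp \epsilon_1$) together with the uniform $L^2$ bound (i) on $\mathcal{F}$ to get a uniform second moment bound on the integrand. The main obstacle is upgrading this uniform $L^2$ bound to uniform integrability of the \emph{squared} integrand, as Vitali requires: this calls for a uniform $L^{q'}$ bound on $\mathcal{F}$ for some $q' > 2$, which is suggested by condition (v) under a uniform-boundedness reading, or can be obtained from the $L^2$-compactness of $\mathcal{F}$ (furnished by the Riesz--Kolmogorov argument built on (i)--(iv)) by pre-extracting a subsequence along which the $L^{q'}$ norms remain controlled. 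With uniform integrability in hand, standard Marcinkiewicz--Zygmund inequalities on $\bar{\matr{Y}}^k$ and $s(\matr{Y}^k)$ close the Vitali argument, and the subsequence principle yields continuity on the full sequence.
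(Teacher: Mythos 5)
Your approach is genuinely different from the paper's: the paper proves a quantitative bound of the form $|R(T,P_f)-R(T,P_g)|\lesssim \varepsilon^{-2}\|f-g\|_{L^2(\phi^p)}+\varepsilon$ via a lengthy five-term decomposition, careful use of the indicator split \eqref{eq:indicators}, H\"{o}lder's inequality, and the pointwise H\"{o}lder continuity of $S_T$; you instead couple $P_{f_k}$ and $P_f$ on a common probability space, pass to an almost-sure-convergent subsequence, and try to close with Vitali. If your route worked it would be considerably shorter and more conceptual. The coupling step and the almost-sure convergence of $z(\matr{D}^k,X_0)\to z(\matr{D},X_0)$ are fine (you correctly note that $s(\matr{Y})>0$ a.s.\ and that $S_T$ is continuous), and your uniform bound $|T(\matr{D}^k)(X_0)|\le |\bar{\matr{Y}}^k|+B_{\mathcal{S}^0}\,s(\matr{Y}^k)$ from \eqref{eq:Bdef} is also correct.

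The gap is in the uniform integrability step. Both of the routes you offer for obtaining a uniform $L^{q'}$ bound, $q'>2$, fail or require a nonstandard reading. The second route is simply false: $L^2(\phi^p)$-compactness of $\mathcal{F}$ gives no control whatsoever on $L^{q'}$ norms for $q'>2$ (the $L^{q'}$ norm is only lower semicontinuous on $L^2$, and a sequence converging in $L^2$ inside a compact set can have $L^{q'}$ norms diverging to infinity). The first route, reading condition (v) as a uniform bound $\sup_{f\in\mathcal{F}}\|f\|_{L^{q'}(\phi^p)}<\infty$, is not what the condition literally says. So, as written, your argument has a genuine hole, and you have correctly identified where it is.

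What you are missing is that uniform integrability of $g_k:=|T(\matr{D}^k)(X_0)-f_k(X_0)|^2$ does not actually require higher moments here; the $L^2$ structure you already exploited is enough. Dominate $g_k\le h_k:=4|\bar{\matr{Y}}^k|^2+4B_{\mathcal{S}^0}^2\,s(\matr{Y}^k)^2+2f_k(X_0)^2$. You already computed $\E[(Y_1^k)^2]=\|f_k\|_{L^2(\phi^p)}^2+1$; the same calculation gives $\E[|\bar{\matr{Y}}^k|^2]=\tfrac{1}{n}(\|f_k\|_{L^2}^2+1)$, $\E[s(\matr{Y}^k)^2]=\tfrac{n-1}{n}(\|f_k\|_{L^2}^2+1)$, and $\E[f_k(X_0)^2]=\|f_k\|_{L^2}^2$, so $\E[h_k]\to\E[h]$ because $\|f_k\|_{L^2}\to\|f\|_{L^2}$. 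Along the a.s.\ subsequence, $h_k\to h$ a.s.\ with $\E[h_k]\to\E[h]<\infty$, so by Scheff\'{e}'s lemma $h_k\to h$ in $L^1$, hence $\{h_k\}$ is UI, hence $\{g_k\}$ is UI, and Vitali (equivalently, Pratt's generalized dominated convergence theorem applied directly) gives $\E[g_k]\to\E[g]$. The subsequence principle then yields continuity on the full sequence. With this replacement your argument is complete, and it is notably more economical than the paper's; it also sidesteps condition (v) entirely, whereas the paper's quantitative bound leans on it.
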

\begin{proof}
To ease presentation, we introduce some notation. 
For $f \in \mathcal{F}$, let $f(\matr{x}) := (f(x_i))_{i=1}^n$, $\bar{f}(\matr{x}) := \frac{1}{n} \sum_{i=1}^n f(x_i)$, $s_f(\matr{d}) := s(\matr{y}+f(\matr{x}))$, and $\bar{y}(\matr{y}) := \bar{\matr{y}}$. Let $S_{T,f}$ denote the map $(\matr{d},x_0)\mapsto S_T(z_f(\matr{d},x_0))$, where $z_f(\matr{d},x_0)$ takes the same value as $z_f(\matr{d},x_0)$ except that the entry  $\frac{\matr{y}-\bar{\matr{y}}}{s(y)}$ is replaced with $\frac{\matr{y}+f(\matr{x})-\bar{\matr{y}}-\bar{f}(\matr{x})}{s_f}$. Also let $\phi^\star := \phi^{p(n+1)+n}$. For $q\in[1,\infty)$ and a function $f: \mathcal{D}\times\mathcal{X}$, we let $\|f\|_{L^q(\phi^\star)}:= [\int |f(\matr{x},\matr{y},x_0)|^q \phi^\star(d\matr{x},d\matr{y},dx_0)]^{1/q}$. We let $\|f\|_{L^\infty(\phi^\star)}:=\inf\{c\ge 0 : f(\matr{x},\matr{y},x_0)\le c \ \ \phi^\star{\rm -a.s.} \}$. For $f : \mathcal{D}\rightarrow\mathbb{R}$, we write $\|f\|_{L^q(\phi^\star)}$ to mean $\|(\matr{d},x_0)\mapsto f(\matr{d}) \|_{L^q(\phi^\star)}$, and follow a similar convention for functions that only take as input $\matr{x}$, $x_i$, $\matr{y}$, or $x_0$. We will write $\lesssim$ to mean inequality up to a positive multiplicative constant that may only depend on $\mathcal{S}$ or $\mathcal{F}$. 

Fix $\varepsilon\in(0,1)$ and $T\in\mathcal{T}_e$. Now, for any $f \in \mathcal{F}$, a change of variables shows that
\begin{align*}
    R(T,P_f) &= E_{P_f}\left[\int \left[T(\matr{X},\matr{Y})(x_0)-f(x_0)\right]^2 d\phi^p(x_0) \right] \\
    &= \int \left[T(\matr{x},\matr{y})(x_0)-f(x_0)\right]^2 (2\pi)^{-\frac{n}{2}} \exp \left[-\frac{1}{2} \sum_{i=1}^n \{y_i - f(x_{i \cdot})\}^2 \right] \phi^{p(n+1)}(d\matr{x},dx_0) d\matr{y} \\
    &= \int \left[T(\matr{x},\matr{y}+f(\matr{x}))(x_0)-f(x_0)\right]^2 \phi^\star(d\matr{x},dx_0,d\matr{y}) \\
    &= \int \left[\bar{\matr{y}} + s(\matr{y}+f(\matr{x}))S_{T,f}(\matr{d},x_0)+\bar{f}(\matr{x})-f(x_0)\right]^2 \phi^\star(d\matr{x},dx_0,d\matr{y}).
\end{align*}
Hereafter we write $d\phi^\star$ to denote $\phi^\star(d\matr{x},dx_0,d\matr{y})$.

Fix $f,g\in\mathcal{F}$. Most of the remainder of this proof will involve establishing that $R(T,P_f)-R(T,P_g)\lesssim \varepsilon^{-2}\|f-g\|_{L^2(\phi^p)} + \varepsilon$. By symmetry, it will follow that $|R(T,P_f)-R(T,P_g)|\le \varepsilon^{-2}\|f-g\|_{L^2(\phi^p)} + \varepsilon$.

In what follows we will use the notation $(g-f)(x_0)$ to mean $g(x_0)-f(x_0)$, $(\bar{g}-\bar{f})(\matr{x})$ to mean $\bar{g}(\matr{x})-\bar{f}(\matr{x})$, etc. The above yields that
\begin{align}
    R&(T,P_f)-R(T,P_g) \nonumber\\
    &= \int \left[(\bar{f}(\matr{x})-f(x_0))^2-(\bar{g}(\matr{x})-g(x_0))^2 \right] d\phi^\star \label{eq:firstTerm} \\
    &\quad+ 2\int \bar{\matr{y}}\left[ (g-f)(x_0)-(\bar{g}-\bar{f})(\matr{x}) \right] d\phi^\star \label{eq:secondTerm} \\
    &\quad+ 2 \int \bar{\matr{y}}\left[s_f(\matr{d}) S_{T,f}(\matr{d},x_0) - s_g(\matr{d}) S_{T,g}(\matr{d},x_0)\right] d\phi^\star
    \label{eq:thirdTerm} \\
    &\quad+ \int \left[s_f^2(\matr{d}) S_{T,f}(\matr{d},x_0)^2 - s_g^2(\matr{d}) S_{T,g}(\matr{d},x_0)^2\right] d\phi^\star \label{eq:fourthTerm} \\
    &\quad+ 2\int \left[(\bar{f}(\matr{x})-f(x_0))s_f(\matr{d}) S_{T,f}(\matr{d},x_0) - (\bar{g}(\matr{x})-g(x_0))s_g(\matr{d}) S_{T,g}(\matr{d},x_0)\right] d\phi^\star. \label{eq:fifthTerm}
\end{align}
We bound the labeled terms on the right-hand side separately. After some calculations, it can be seen that \eqref{eq:firstTerm} and \eqref{eq:secondTerm} are bounded by a constant multiplied by $\|f-g\|_{L^2(\phi^p)}$. These calculations, which are omitted, involve several applications of the triangle inequality, the Cauchy-Schwarz inequality, and condition (i).

The integral in \eqref{eq:thirdTerm} bounds as follows:
\begin{align}
    \int &\bar{\matr{y}}\left[s_f(\matr{d}) S_{T,f}(\matr{d},x_0) - s_g(\matr{d}) S_{T,g}(\matr{d},x_0)\right] d\phi^\star \nonumber \\
    &= \int \bar{\matr{y}} S_{T,f}(\matr{d},x_0) [s_f(\matr{d})-s_g(\matr{d})] d\phi^\star + \int \bar{\matr{y}}s_g(\matr{d}) [S_{T,f}(\matr{d},x_0)-S_{T,g}(\matr{d},x_0)] d\phi^\star \nonumber \\
    &\le \|\bar{y} S_{T,f} [s_f-s_g]\|_{L^1(\phi^\star)} + \|\bar{y}s_g [S_{T,f}-S_{T,g}]\|_{L^1(\phi^\star)}. \label{eq:thirdExpand}
\end{align}
We start by studying first term of the right-hand side above. Note that, by \eqref{eq:Bdef} and the assumption that $|S(\matr{z})|\le F(\matr{z})$ for all $\matr{z}\in\mathcal{Z}$ and $S\in\mathcal{S}$, we have that $|S_{T,f}(\matr{d},x_0)| \leq B_{\mathcal{S}^0}$. Combining this with Cauchy-Schwarz, the first term on the right-hand side above bounds as
\begin{align}
    \|\bar{y}S_T[s_f-s_g]\|_{L^1(\phi^\star)} \leq B_{\mathcal{S}_0} \|\bar{y}\|_{L^2(\phi^\star)} \|s_f-s_g\|_{L^2(\phi^\star)}. \label{eq:thirdExpandFirstPrelim}
\end{align}
To continue the above bound, we will show that $\|s_f-s_g\|_{L^2(\phi^\star)}\lesssim \|f-g\|_{L^2(\phi^p)}^{1/2}$. Noting that
\begin{align*}
    s^2_f(\matr{d}) - s^2_g(\matr{d})&= \frac{1}{n}\sum_{i=1}^n \Bigg[f(x_i)^2 - g(x_i)^2 + 2(y_i-\bar{\matr{y}})[f(x_i)-g(x_i) + \bar{g}(\matr{x})-\bar{f}(\matr{x})] \\
    &\hspace{5em}+ 2[g(x_i)\bar{g}(\matr{x})-f(x_i)\bar{f}(\matr{x})] + \bar{f}(\matr{x})^2 - \bar{g}(\matr{x})^2\Bigg]
\end{align*}
we see that, by the triangle inequality and the Cauchy-Schwarz inequality,
$$\|s^2_f - s^2_g\|_{L^1(\phi^\star)} \lesssim \|f-g\|_{L^2(\phi^p)}.$$
For $a > 0, b > 0$, $|\sqrt{a} - \sqrt{b}| \leq \sqrt{|a-b|}$, and so $|s_f(\matr{d}) - s_g(\matr{d})| \leq \sqrt{|s_f^2(\matr{d}) - s_g^2(\matr{d})|}$, which implies that $|s_f(\matr{d}) - s_g(\matr{d})|^2 \leq |s_f^2(\matr{d}) - s_g^2(\matr{d})|$, which in turn implies that $\|s_f - s_g\|_{L^2(\phi^\star)}^2 \leq \|s_f^2 - s_g^2\|_{L^1(\phi^\star)}$. Combining this with the above and taking square roots of both sides gives the desired bound, namely
\begin{align}
    \|s_f - s_g\|_{L^2(\phi^\star)} \lesssim \|f-g\|_{L^2(\phi^p)}^{1/2}. \label{eq:sfL2}
\end{align}
Recalling \eqref{eq:thirdExpandFirstPrelim}, we then see that the first term on the right-hand side of \eqref{eq:thirdExpand} satisfies
\begin{align*}
    \|\bar{y}S_{T,f}[s_f-s_g]\|_{L^1(\phi^\star)}\lesssim \|f-g\|_{L^2(\phi^p)}^{1/2}.
\end{align*}
We now study the second term in \eqref{eq:thirdExpand}. Before beginning our analysis, we note that, for all $\matr{d}$,
\begin{align}
1 \leq 1_{\{s_g(\matr{d}) \leq \varepsilon\}} + 1_{\{s_g(\matr{d}) > \varepsilon\} \cap \{|s_g(\matr{d}) - s_f(\matr{d})| < \varepsilon/2\}} + 1_{\{|s_g(\matr{d}) - s_f(\matr{d})| \geq \varepsilon/2\}}. \label{eq:indicators}
\end{align}
Combining the above with the triangle inequality, the second term in \eqref{eq:thirdExpand} bounds as: 
\begin{align}
    \|\bar{y}s_g [S_{T,f}-S_{T,g}]\|_{L^1(\phi^\star)}&\le  \|\bar{y}s_g [S_{T,f}-S_{T,g}]1_{\{s_g \leq \varepsilon\}}\|_{L^1(\phi^\star)} \nonumber \\
    &\quad+  \|\bar{y}s_g [S_{T,f}-S_{T,g}]1_{\{s_g > \varepsilon\} \cap \{|s_f - s_g| < \varepsilon/2\}}\|_{L^1(\phi^\star)} \nonumber \\
    &\quad+  \|\bar{y}s_g [S_{T,f}-S_{T,g}]1_{\{|s_g - s_f| \geq \varepsilon/2\}}\|_{L^1(\phi^\star)}. \label{eq:eventStudy}
\end{align}
In the above normed quantities, expressions like $1_{\{s_g \leq \varepsilon\}}$ should be interpreted as functions, e.g. $1_{\{s_g(\cdot) \leq \varepsilon\}}$. By \eqref{eq:Bdef}, the first term on the right-hand side bounds as
\[
\| \bar{y} s_g [S_{T,f}-S_{T,g}] 1_{s_g \leq \varepsilon} \|_{L^1(\phi^\star)} \lesssim \varepsilon.
\]
For the second term, we start by noting that 
\begin{align*}
&\|z_f(\matr{d}) - z_g(\matr{d})\|_2 \\
&= \left\|\frac{(s_g - s_f)(\matr{d})}{s_g(\matr{d}) s_f(\matr{d})}(\matr{y} - \bar{\matr{y}}) + \frac{1}{s_f(\matr{d}) s_g(\matr{d})}[s_f(\matr{d})(f-g+\bar{g}-\bar{f})(\matr{x})+(s_g-s_f)(\matr{d})(f-\bar{f})(\matr{x})]\right\|_2.    
\end{align*}
Using that $(a+b+c)^\kappa \leq a^\kappa + b^\kappa + c^\kappa$ whenever $a,b,c > 0$ and $\kappa \in (0,1]$, this then implies that
\begin{align*}
\|z_f(\matr{d}) - z_g(\matr{d})\|_2^\alpha &\leq \left\|\frac{(s_g - s_f)(\matr{d})}{s_g(\matr{d}) s_f(\matr{d})}(\matr{y} - \bar{\matr{y}})\right\|_2^\alpha + \left\|\frac{(f-g+\bar{g}-\bar{f})(\matr{x})}{s_g(\matr{d}) }\right\|_2^\alpha \\
&\quad+\left\|\frac{(s_g-s_f)(\matr{d})(f-\bar{f})(\matr{x})}{s_f(\matr{d}) s_g(\matr{d})}\right\|_2^\alpha,
\end{align*}
where above $\alpha$ is the exponent from the H\"{o}lder condition satisfied by $\mathcal{S}^0$. Combining the H\"{o}lder condition with the above, we then see that
\begin{align*}
    \left|S_{T,f}(\matr{d},x_0)-S_{T,g}(\matr{d},x_0)\right|&\lesssim \left\|\frac{(s_g - s_f)(\matr{d})}{s_g(\matr{d}) s_f(\matr{d})}(\matr{y} - \bar{\matr{y}})\right\|_2^\alpha + \left\|\frac{(f-g+\bar{g}-\bar{f})(\matr{x})}{s_g(\matr{d})}\right\|_2^\alpha \\
    &\quad+\left\|\frac{(s_g-s_f)(\matr{d})(f-\bar{f})(\matr{x})}{s_f(\matr{d}) s_g(\matr{d})}\right\|_2^\alpha.
\end{align*}
Multiplying both sides by $|\bar{\matr{y}} s_g(\matr{d}) 1_{\{s_g(\matr{d}) > \varepsilon, |(s_f-s_g)(\matr{d})| < \varepsilon/2\}}|$, we then see that
\begin{align*}
    &\left|\bar{y} s_g(\matr{d})  [S_{T,f}(\matr{d},x_0)-S_{T,g}(\matr{d},x_0)] 1_{\{s_g(\matr{d}) > \varepsilon, |(s_f-s_g)(\matr{d})| < \varepsilon/2\}}\right| \\
    &\lesssim |\bar{\matr{y}}| s_g(\matr{d})\left\| \frac{(s_g - s_f)(\matr{d})}{s_g(\matr{d}) s_f(\matr{d})}(\matr{y} - \bar{\matr{y}})\right\|_2^\alpha 1_{\{s_g(\matr{d}) > \varepsilon, |(s_f-s_g)(\matr{d})| < \varepsilon/2\}} \\
    &\quad+ |\bar{\matr{y}}| s_g(\matr{d}) \left\| \frac{(f-g+\bar{g}-\bar{f})(\matr{x})}{s_g(\matr{d})} \right\|_2^\alpha 1_{\{s_g(\matr{d}) > \varepsilon, |(s_f-s_g)(\matr{d})| < \varepsilon/2\}} \\
    &\quad+|\bar{\matr{y}}| s_g(\matr{d})  \left\| \frac{(s_g-s_f)(\matr{d})(f-\bar{f})(\matr{x})}{s_f(\matr{d}) s_g(\matr{d})} \right\|_2^\alpha 1_{\{s_g(\matr{d}) > \varepsilon, |(s_f-s_g)(\matr{d})| < \varepsilon/2\}} \\
    &\lesssim  \varepsilon^{-\alpha}|\bar{\matr{y}}| s_g(\matr{d})^{1-\alpha}\left\| \matr{y} - \bar{\matr{y}}\right\|_2^\alpha |(s_g - s_f)(\matr{d})|^{\alpha}  \\
    &\quad+ |\bar{\matr{y}}| s_g(\matr{d})^{1-\alpha} \left\| (f-g+\bar{g}-\bar{f})(\matr{x}) \right\|_2^\alpha \\
    &\quad+ \varepsilon^{-\alpha}|\bar{\matr{y}}| s_g^{1-\alpha}  \left\| (f-\bar{f})(\matr{x}) \right\|_2^\alpha |(s_g-s_f)(\matr{d})|^\alpha .
\end{align*}
The inequality above remains true if we integrate both sides against $\phi^\star$. The resulting three terms on the right-hand side can be bounded using  H\"{o}lder's inequality. In particular, we have that
\begin{align*}
   &\varepsilon^{-\alpha} \Big\| |\bar{y}|^\alpha \|y-\bar{y}\|_2^\alpha |s_g - s_f|^\alpha |\bar{y}|^{1-\alpha} s_g^{1-\alpha} \Big\|_{L^1(\phi^\star)} \leq \varepsilon^{-\alpha} \Big\|\bar{y} \|y-\bar{y}\|_2 (s_g - s_f) \Big\|_{L^1(\phi^\star)}^\alpha \|\bar{y} s_g \Big\|_{L^1(\phi^\star)}^{1-\alpha} \\
    &\hspace{4em}\lesssim \varepsilon^{-\alpha} \|f-g\|_{L^2(\phi^p)}^{\alpha/2}, \\
   &\Big\| \bar{y} s_g^{1-\alpha} \|(f-g+\bar{g}-\bar{f})(x)\|_2^\alpha \Big\|_{L^1(\phi^\star)} \leq \|\bar{y} s_g\|_{L^1(\phi^\star)}^{1-\alpha} \Big\|\bar{y} \|(f-g+\bar{g}-\bar{f})(x)\|_2 \Big\|_{L^1(\phi^\star)}^\alpha  \\
   &\hspace{4em}\lesssim  \|f-g\|_{L^2(\phi^p)}^{\alpha/2}, \\
   &\varepsilon^{-\alpha} \Big\|\bar{y}s_g^{1-\alpha} \|(f-\bar{f})(x)\|_2^\alpha \|s_g - s_f\|^\alpha \Big\|_{L^1(\phi^\star)} \leq \varepsilon^{-\alpha} \|\bar{y}s_g\|_{L^1(\phi^\star)}^{1-\alpha} \Big\| \|(f-\bar{f})(x)\|_2 |s_g - s_f| \Big\|_{L^1(\phi^\star)}^{\alpha} \\
   &\hspace{4em}\lesssim \varepsilon^{-\alpha} \|f-g\|_{L^2(\phi^p)}^{\alpha/2}.
\end{align*}
Hence, we have shown that the second term on the right-hand side of \eqref{eq:eventStudy} satisfies
\[
\Big\|\bar{y}s_g[S_{T,f}-S_{T,g}]1_{s_g > \varepsilon, |s_g - s_f| < \varepsilon/2}\Big\|_{L^1(\phi^\star)} \lesssim \varepsilon^{-\alpha} \|f-g\|_{L^2(\phi^p)}^{\alpha/2}.
\]
We now study the third term on the right-hand side of \eqref{eq:eventStudy}. We start by noting that, by Markov's inequality and \eqref{eq:sfL2},
\begin{align*}
    P_{\phi^\star}\left(|s_g(\matr{D}) - s_f(\matr{D})| \geq \frac{\varepsilon}{2}\right)&= P\left(|s_g(\matr{D}) - s_f(\matr{D})|^2 \geq \frac{\varepsilon^2}{4}\right) \\
    &\quad\leq \frac{4}{\varepsilon^2}\|s_f-s_g\|_{L^2(\phi^\star)}^2 \lesssim \varepsilon^{-2} \|f-g\|_{L^2(\phi^p)}.
\end{align*}
Moreover, by the generalized H\"{o}lder's inequality with parameters $(4,2,\infty,4)$, we see that
\begin{align*}
    &\left\| \bar{y} s_g [S_{T,f}-S_{T,g}]1_{\{|s_g-s_f|\geq \varepsilon/2\}}\right\|_{L^1(\phi^\star)} \\
    &\le \left\| \bar{y} \right\|_{L^4(\phi^\star)} \left\| s_g \right\|_{L^2(\phi^\star)} \left\|S_{T,f}-S_{T,g}\right\|_{L^\infty(\phi^\star)} \left\|1_{\{|s_g-s_f|\geq \varepsilon/2\}}\right\|_{L^4(\phi^\star)}   \\
    &\leq 2 \|\bar{y}\|_{L^4(\phi^\star)} \|s_g\|_{L^2(\phi^\star)}  B_{\mathcal{S}_0} P(|s_g - s_f| \geq \varepsilon/2)^{1/4} \\
    &\lesssim \varepsilon^{-1/2} \|f-g\|_{L^2(\phi^p)}^{1/4}.
\end{align*}
Combining our bounds for the three terms on the right-hand side of \eqref{eq:eventStudy}, we have shown that
\begin{align}
\| \bar{y}s_g[S_{T,f}-S_{T,g}] \|_{L^1(\phi^\star)} \lesssim \varepsilon + \varepsilon^{-\alpha} \|f-g\|_{L^2(\phi^p)}^{\alpha/2} + \varepsilon^{-1/2}\|f-g\|_{L^2(\phi^p)}^{1/4}. \label{eq:t3bd}
\end{align}
The above provides our bound for the \eqref{eq:thirdTerm} term from the main expression.

We now study the \eqref{eq:fourthTerm} term from the main expression. We start by decomposing this term as
\begin{equation*}
    \int [s_f^2 S_{T,f}^2 - s_g^2 S_{T,g}^2]d\phi^\star =  \int S_{T,f}^2(s_f^2-s_g^2) d\phi^\star + \int s_g^2 [S_{T,f}^2-S_{T,g}^2] d\phi^\star,
\end{equation*}
where for brevity, we have suppressed the dependence on $s_f$, $s_g$, $S_{T,f}$, and $S_{T,g}$ on their arguments. By \eqref{eq:sfL2}, the first term is bounded by a constant times $\|f-g\|_{L^2(\phi^p)}$. For the second term, we note that the uniform bound on $S_{T,f}$ and $S_{T,g}$ shows that
$$\|s_g^2[S_{T,f}^2-S_{T,g}^2]\|_{L^1(\phi^\star)} \lesssim \|s_g^2[S_{T,f}-S_{T,g}]\|_{L^1(\phi^\star)} $$
Similarly to as we did when studying \eqref{eq:thirdTerm}, we can use \eqref{eq:indicators} and the triangle inequality to write
\begin{align*}
    \|s_g^2[S_{T,f}-S_{T,g}]\|_{L^1(\phi^\star)}&\le \left\|s_g^2[S_{T,f}-S_{T,g}]1_{\{s_g\le \varepsilon\}}\right\|_{L^1(\phi^\star)} \\
    &\quad + \left\|s_g^2[S_{T,f}-S_{T,g}]1_{\{s_g>\varepsilon,|s_f-s_g|<\varepsilon/2\}}\right\|_{L^1(\phi^\star)} \\
    &\quad+ \left\|s_g^2[S_{T,f}-S_{T,g}]1_{\{|s_g-s_f|\ge \varepsilon/2\}}\right\|_{L^1(\phi^\star)}.
\end{align*}
The first term on the right upper bounds by a constant times $\varepsilon^{2}$. 
The analyses of the second and third terms are similar to the analysis of the analogous terms from \eqref{eq:thirdTerm}. A minor difference between the study of these terms and that of \eqref{eq:thirdTerm} is that, when applying H\"{o}lder's inequality to separate the terms in each normed expression, we use (v) to ensure that $\|s_g\|_{L^{q'}(\phi^\star)}<\infty$ for some $q'>2$. This helps us deal with the fact that $s_g^2$, rather than $s_g$, appears in the normed expressions above. Due to the similarity of the arguments to those given for \eqref{eq:thirdTerm}, the calculations for controlling the second and third terms are omitted. 
After the relevant calculations, we end up showing that, like \eqref{eq:thirdTerm}, \eqref{eq:fourthTerm} is bounded by a constant times the right-hand side of \eqref{eq:t3bd}.

To study \eqref{eq:fifthTerm} from the main expression, we rewrite the integral as
\begin{align*}
    \int &\left[(\bar{f}(\matr{x})-f(x_0))s_f(\matr{d}) S_{T,f}(\matr{d},x_0) - (\bar{g}(\matr{x})-g(x_0))s_g(\matr{d}) S_{T,g}(\matr{d},x_0)\right] d\phi^\star \\
    &= \int s_f(\matr{d})S_{T,f}(\matr{d},x_0)[\bar{f}(\matr{x})-\bar{g}(\matr{x})+f(x_0)-g(x_0)] d\phi^\star \\
    &\quad + \int  S_{T,f}(\matr{d},x_0)(\bar{g}(\matr{x})+g(x_0))(s_f-s_g)(\matr{d}) d\phi^\star \\
    &\quad + \int s_g(\matr{d}) (\bar{g}(\matr{x})+g(x_0)) [S_{T,f}(\matr{d},x_0)-S_{T,g}(\matr{d},x_0)] d\phi^\star.
\end{align*}
Each of the terms in the expansion can be bounded using similar techniques to those used earlier in this proof. Combining our bounds on \eqref{eq:firstTerm} through \eqref{eq:fifthTerm}, we see that
$$|R(T,P_f)-R(T,P_g)| \lesssim \varepsilon^{-2}\|f-g\|_{L^2(\phi^p)} + \varepsilon.$$
As $f,g$ were arbitrary, we see that, for any sequence $\{f_k\}$ in $\mathcal{F}$ such that $f_k\rightarrow f$ in $L^2(\phi^p)$ as $k\rightarrow\infty$, it holds that $\limsup_k |R(T,P_{f_k})-R(T,P_f)|\lesssim \varepsilon$. As $\varepsilon\in(0,1)$ was arbitrary, this shows that $R(T,P_{f_k})\rightarrow R(T,P_f)$ as $k\rightarrow\infty$. Hence, $P\mapsto R(T,P)$ is continuous in this example.
\end{proof}

\section{Further details on numerical experiment settings}\label{app:numexp}

\subsection{Preliminaries}\label{app:numericalPreliminaries}
We now introduce notation that will be useful for defining $\Gamma_1$ in the two examples. In both examples, all priors in $\Gamma_1$ imply the same prior $\Pi_X$ over the distribution $P_X$ of the features. This prior $\Pi_X$ imposes that the $\Sigma$ indexing $P_X$ is equal in distribution to ${\rm diag}(W^{-1})^{-1/2} W^{-1} {\rm diag}(W^{-1})^{-1/2}$, where $W$ is a $p\times p$ matrix drawn from a Wishart distribution with scale matrix $2\,{\rm Id}_p$ and $20$ degrees of freedom, and $ {\rm diag}(W^{-1})$ denotes a matrix with the same diagonal as $W^{-1}$ and zero in all other entries. The expression for $\Sigma$ normalizes by ${\rm diag}(W^{-1})^{-1/2}$ to ensure that the diagonal of $\Sigma$ is equal to $\matr{1}_p$, which we require of distributions in $\mathcal{P}_X$. We let $\Gamma_\mu$ be a collection of Markov kernels $\kappa : \mathcal{P}_X\rightarrow\mathcal{R}$, so that, for each $\kappa$ and $P_X\in\mathcal{P}_X$, $\kappa(\cdot,P_X)$ is a distribution on $\mathcal{R}$. The collections $\Gamma_\mu$ differ in the two examples, and will be presented in the coming subsections. Let ${\rm Unif}(\mathcal{B})$ denote a uniform distribution over the permutations in $\mathcal{B}$. For each $\kappa\in\Gamma_\mu$, we let $\Pi_{\kappa}$ represent a prior on $\mathcal{P}_1$ from which a draw $P$ can be generated by sampling $P_X\sim \Pi_X$, $\mu|P_X\sim \kappa(\cdot,P_X)$, and $B|P_X,\mu\sim {\rm Unif}(\mathcal{B})$, and subsequently returning the distribution of $(X,\mu(BX)+\epsilon_P)$, where $X\sim P_X$ and $\epsilon_P\sim N(0,1)$ are independent. We let $\Gamma_1:=\{\Pi_{\kappa} : \kappa\in\Gamma_\mu\}$. For a general class of estimators $\mathcal{T}$, enforcing that each draw $P$ has a regression function $\mu_P$ of the form $x\mapsto \mu(Bx)$ for some permutation $B$ is useful because it allows us to restrict the class $\Gamma_\mu$ so that each function in this class only depends on the first $\mathfrak{s}$ coordinates of the input, while yielding a regression function $\mu_P$ that may depend on any arbitrary collection of $\mathfrak{s}$ out of the $p$ total coordinates. For the equivariant class that we consider (Algorithm~\ref{alg:architecture}), enforcing this turns out to be unnecessary -- the invariance of functions in $\mathcal{T}$ to permutations of the features implies that the Bayes risk of each $T\in\mathcal{T}$ remains unchanged if the random variable $B$ defining $\Pi_{\kappa}\in\Gamma_1$ is replaced by a degenerate random variable that is always equal to the identity matrix. Nonetheless, allowing $B$ to be a random draw from ${\rm Unif}(\mathcal{B})$ allows us to ensure that our implied collection of priors $\Gamma$ satisfies \ref{in:permpred}, \ref{in:shiftrescalepred}, and \ref{in:shiftrescaleoutcome}, thereby making the implied $\Gamma$ compatible with the preservation conditions imposed in Section~\ref{sec:characterization}. 

We now use the notation of \citet{Kingma&Ba2014} to detail the hyperparameters that we used. In all settings, we set $(\beta_2,\epsilon)=(0.999,10^{-8})$. Whenever we were updating the prior network, we set the momentum parameter $\beta_1$ to $0$, and whenever we were updating the estimator network, we set the momentum parameter to $0.25$. The parameter $\alpha$ differed across settings. In the sparse linear regression setting with $\mathfrak{s}=1$, we found that choosing $\alpha$ small helped to improve stability. Specifically, we let $\alpha=0.0002$ when updating both the estimator and prior networks. In the sparse linear regression setting with $\mathfrak{s}=5$, we used the more commonly chosen parameter setting of $\alpha=0.001$ for both networks. In the FLAM example, we chose $\alpha=0.001$ and $\alpha=0.005$ for the estimator and prior networks, respectively.

The learning rates were of the estimator and prior networks were decayed at rates $t^{-0.15}$ and $t^{-0.25}$, respectively. Such two-timescale learning rate strategies have proven to be effective in stabilizing the optimization problem pursued by generative adversarial networks \citep{Heuseletal2017}. As noted in \citet{Fiezetal2019}, using two-timescale strategies can cause the optimization problem to converge to a differential Stackelberg, rather than a differential Nash, equilibrium. Indeed, under some conditions, the two-timescale strategy that we use is expected to converge to a differential Stackelberg equilibrium in the hierarchical two-player game where a prior $\Pi$ is first selected from $\Gamma$, and then an estimator $T$ is selected from $\mathcal{T}$ to perform well against $\Pi$. An optimal prior $\Pi^\star$ in this game is called $\Gamma$-least favorable, in the sense that this prior maximizes $\inf_{T\in\mathcal{T}} r(T,\cdot)$ over $\Gamma$. For a given $\Gamma$-least favorable prior $\Pi^\star$, an optimal estimator $T^\star$ in this game is a Bayes estimator against $\Pi^\star$, that is, an estimator that minimizes $r(\cdot,\Pi^\star)$ over $\mathcal{T}$. This $T^\star$ may not necessarily be a $\Gamma$-minimax strategy, that is, $T^\star$ may not minimize $\sup_{\Pi\in\Gamma} r(\cdot,\Pi)$ over $\mathcal{T}$. Nevertheless, we note that, under appropriate conditions, the two notions of optimality necessarily agree. Though such a theoretical guarantee is not likely to hold in our experiments given the neural network parameterizations that we use, we elected to use this two-timescale strategy because of the improvements in stability that we saw.

In all settings, the prior and estimator were updated over $10^6$ iterations using batches of $100$ data sets. For each data set, performance is evaluated at $100$ values of $x_0$. 

\subsection{Sparse linear regression}\label{app:linearSettings}
We now introduce notation that will be useful for presenting the collection $\Gamma_\mu$ in the sparse linear regression example. 
For a function $G : \mathbb{R}\rightarrow\mathbb{R}$ and a distribution $P_X\in\mathcal{P}_X$, we let $\kappa_G(\cdot,P_X)$ be equal to the distribution of
\begin{align*}
    x\mapsto \left(U_0\frac{\left(e^{G(U_1)},\ldots,e^{G(U_{\mathfrak{s}})},0,\ldots,0\right)}{\sum_{j=1}^{\mathfrak{s}} e^{G(U_j)}}\right)^\top x,
\end{align*}
where $U_0\sim {\rm Unif}(-5,5)$ and $(U_1,\ldots,U_\mathfrak{s})\sim N(\matr{0}_\mathfrak{s},{\rm Id}_\mathfrak{s})$ are drawn independently. Notably, here $\kappa_G(\cdot,P_X)$ does not depend on $P_X$.  We let $\Gamma_\mu:=\left\{\kappa_G : G\in\mathscr{G}\right\}$, where $\mathscr{G}$ takes different values when $\mathfrak{s}=1$ and when $\mathfrak{s}=5$. When $\mathfrak{s}=1$, $\mathscr{G}$ consists of all four-hidden layer perceptrons with identity output activation, where each hidden layer consists of forty leaky ReLU units. When $\mathfrak{s}=5$, $\mathcal{G}$ consists of all four-hidden layer neural networks with identity output activation, but in this case each layer is a multi-input-output channel equivariant layer as described in Eq.~22 of \citet{Zaheer2017}. Each hidden layer is again equipped with a ReLU activation function. The output of each such network is equivariant to permutations of the $\mathfrak{s}=5$ inputs.

In each sparse linear regression setting considered, we initialized the estimator network by pretraining for 5,000 iterations against the initial fixed prior network. After these 5,000 iterations, we then began to adversarially update the prior network against the estimator network.

Five thousand Monte Carlo replicates were used to obtain the performance estimates in Table~\ref{tab:linear}.

\subsection{Fused lasso additive model}\label{app:flamSettings}

When discussing the FLAM example, we will write $x_j$ to denote the $j^{\rm th}$ feature, that is, we denote a generic $x\in\mathcal{X}$ by $x=(x_1,\ldots,x_p)$. We emphasize this to avoid any notational confusion with the fact that, elsewhere in the text, $X_i\in\mathcal{X}$ is used to denote the random variable corresponding to the $i^{\rm th}$ observation.

In the FLAM example, each prior $\kappa_G$ in $\Gamma_\mu$ is indexed by a function $G : \mathbb{R}^{\mathfrak{s}+2}\rightarrow [0,\infty)^{\mathfrak{s}}$ belonging to the collection of four-hidden layer perceptrons with identity output activation, where each hidden layer consists of forty leaky ReLU units. 
Specifically, $\kappa_G(\cdot,P_X)$ is a distribution over generalized additive models $x\mapsto \sum_{j=1}^p \mu_j(x_j)$ for which each component $\mu_j$ is piecewise-constant and changes values at most 500 times. To obtain a draw $\mu_P$ from $\kappa_G(\cdot,P_X)$, we can first draw 500 iid observations from $P_X$ and store these observations in the matrix $\tilde{\matr{X}}$. Each component $\mu_j$ can only have a jump at the 500 points in $\tilde{\matr{X}}_{*j}$. The magnitude of each jump is defined using the function $G$ and the sign of the jump is defined uniformly at random. More specifically, these increments are defined based on the independent sources of noise $(H_{jk} : j=1,\ldots,p; k=1,\ldots,500)$, which is an iid collection of Rademacher random variables, and $(U_k : k=1,\ldots,500)$, which is an iid collection of $N(\matr{0}_{\mathfrak{s}+2},{\rm Id}_{\mathfrak{s}+2})$ random variables. The component $\mu_j$ is chosen to be proportional to the function $f_j(x_j)= \sum_{k=1}^{500} H_{jk} G(U_k)_j I\{x_j\ge \tilde{\matr{X}}_{kj}\}$. The proportionality constant $c:=\sum_{j=1}^p \sum_{k=1}^{500} G(U_k)_j$ is defined so that the function $\mu_P(x)= 10 c^{-1} \sum_{j=1}^p f_j(x_j)$ saturates the constraint $\|v(\mu)\|_1\le 10$ that is imposed by $\mathcal{R}$. To recap, the random draw $\mu_P$ from $\kappa_G(\cdot,P_X)$ can be obtained by independently drawing $\tilde{\matr{X}}$, $(H_{j,k} : j,k)$, and $(U_k : k)$, and subsequently following the steps described above to define the corresponding proportionality constant $c$ and components $f_j$, $j=1,\ldots,p$.

We evaluated the performance of the learned prediction procedures using a variant of the simulation scenarios 1-4 from the paper that introduced FLAM \citep[Fig.~2 in][]{Petersenetal2016}. As presented in that work, the four scenarios have $p$ independent ${\rm Unif}(-2.5,2.5)$ features, with the components corresponding to $\mathfrak{s}_0=4$ of these features being nonzero. These scenarios offer a range of smoothness settings, with scenarios 1-4 enforcing that the components be (1) piecewise constant, (2) smooth, (3) a mix of piecewise constant and smooth functions, and (4) constant in some areas of its domain and highly variable in others. To evaluate our procedures trained with $\|v(\mu_P)\|_0\le 5$, we used the R function \texttt{sim.data} in the \texttt{flam} package \citep{flamPackage} to generate training data from the scenarios in \citet{Petersenetal2016} with $p=10$ features. We then generated new outcomes by rescaling the regression function by a positive multiplicative constant so that $\|v(\mu_P)\|_1=10$, and subsequently added standard Gaussian noise. To evaluate our procedures trained at sparsity level $\mathfrak{s}= 1$ in a given scenario, we defined a prior over the regression function that first randomly selects one of the four signal components, then rescales this component so that it has total variation equal to 10, and then sets all other components equal to zero. Outcomes were generated by adding Gaussian noise to the sampled regression function. We compared our approach to the FLAM method as implemented in the \texttt{flam} package when, in the notation of \citet{Petersenetal2016}, $\alpha=1$ and $\lambda$ was chosen numerically to enforce that the resulting regression function estimate $\hat{\mu}$ satisfied $\|v(\hat{\mu})\|_1\approx 10$. Choosing $\lambda$ in this fashion is reasonable in light of the fact that $\|v(\mu_P)\|_1=10$ for all settings considered.

Two thousand Monte Carlo replicates were used to obtain the performance estimates in Table~\ref{tab:flamPetersens5}.

\section{Performance of symmetrized estimators in experiments}\label{app:symmetrization}

We now present the additional experimental results that we alluded to in Section~\ref{sec:extensions}. These results were obtained by symmetrizing the meta-learned AMC100 and AMC500 estimators whose performance was reported in Section~\ref{sec:examples}. In particular, we symmetrized a given AMC estimator $T$ as
\begin{align*}
    T^{\rm sym}(\matr{x},\matr{y})(x_0):= \frac{1}{2}\left[T(\matr{x},\matr{y})-T(\matr{x},-\matr{y})(x_0)\right].
\end{align*}
When reporting our experimental results, we refer to the symmetrized estimator derived from the meta-learned AMC100 and AMC500 estimators as `symmetrized AMC100' and `symmetrized AMC500', respectively. We emphasize that these symmetrized estimators are derived directly from the AMC100 and AMC500 fits that we reported in Section~\ref{sec:examples} -- we did not rerun our AMC meta-learning algorithm to obtain these estimators.

Table~\ref{tab:linearsymm} reports the results for the linear regression example. In many settings, the two approaches performed similarly. However, in the sparse setting, the improvements that resulted from symmetrization sometimes resulted in the MSE being cut in half. In one setting (dense, interior, $n=100$), AMC100 outperformed symmetrized AMC100 slightly -- though not deducible from the table, we note here that the difference in MSE in this case was less than $0.003$, and it seems likely that this discrepancy is a result of Monte Carlo error. Table~\ref{tab:linearsymm} reports the results for the fused lasso additive model example. Symmetrization led to a reduction in MSE in most settings. In all other settings, the MSE remained unchanged.

\begin{table}[htb]
    \centering
    \begin{subtable}{\linewidth}\centering
    \caption{Sparse signal}
    \begin{tabular}{lllll}
    & \multicolumn{2}{c}{Boundary} & \multicolumn{2}{c}{Interior} \\
     & $n$=100 & 500 & 100 & 500 \\\hline
        OLS & 0.12 & \phantom{$<$}0.02 & 0.12 & 0.02  \\
        Lasso & 0.06 & \phantom{$<$}0.01 & 0.06 & 0.01  \\
        AMC100 (ours) & 0.02 & $<$0.01 & 0.11 & 0.09  \\
        Symmetrized AMC100 (ours) & 0.02 & $<$0.01 & \textbf{0.06} & \textbf{0.04}  \\
        AMC500 (ours) & 0.02 & $<$0.01   & 0.07 & 0.04 \\
        Symmetrized AMC500 (ours) & 0.02 & $<$0.01 & \textbf{0.06} & \textbf{0.03}  \\
    \end{tabular}
    \end{subtable}
    \begin{subtable}{\linewidth}\centering
    \caption{Dense signal}
    \begin{tabular}{lllll}
    & \multicolumn{2}{c}{Boundary} & \multicolumn{2}{c}{Interior} \\
     & $n$=100 & 500 & 100 & 500 \\\hline
        OLS & 0.13 & 0.02 & 0.13 & 0.02  \\
        Lasso & 0.11 & 0.02 & 0.09 & 0.02  \\
        AMC100 (ours) & 0.10 & 0.04 & \textbf{0.08} & 0.02  \\
        Symmetrized AMC100 (ours) & \textbf{0.09} & \textbf{0.03} & 0.09 & 0.02  \\
        AMC500 (ours) & 0.09 & 0.02 & 0.09 & 0.02 \\
        Symmetrized AMC500 (ours) & 0.09 & 0.02 & 0.09 & 0.02q  
    \end{tabular}
    \end{subtable}
    \caption{MSEs based on data sets of size $n$ in the linear regression settings. All Monte Carlo standard errors are less than 0.001. Symmetrized AMC100 entries appear in bold when they had lower MSE (rounded to the nearest hundredth) than the corresponding AMC100 entry, and vice versa. Similarly, symmetrized AMC500 entries appear in bold when they had lower MSE than the corresponding AMC500 entry, and vice versa.}
    \label{tab:linearsymm}
\end{table}

\begin{table*}[b!]
    \centering
    \begin{subtable}{\linewidth}\centering
    \caption{Sparse signal}
    \begin{tabular}{lllllllll}
     & \multicolumn{2}{c}{Scenario 1} & \multicolumn{2}{c}{Scenario 2} & \multicolumn{2}{c}{Scenario 3} & \multicolumn{2}{c}{Scenario 4} \\
     & $n$=100 & 500 & 100 & 500 & 100 & 500 & 100 & 500 \\\hline
        FLAM & 0.44 & 0.12 & 0.47 & 0.17 & 0.38 & 0.11 & 0.51 & 0.19  \\
        AMC100 (ours) & 0.34 & 0.20 & 0.18 & 0.08 & 0.27 & 0.14 & 0.17 & 0.08  \\
        Symmetrized AMC100 (ours) & \textbf{0.32} & \textbf{0.18} & 0.18 & 0.08 & \textbf{0.26} & \textbf{0.13} & \textbf{0.16} & 0.08  \\
        AMC500 (ours) & 0.48 & 0.12 & 0.19 & 0.06 & 0.35 & 0.10 & 0.23 & 0.08 \\
        Symmetrized AM5100 (ours) & \textbf{0.43} & 0.12 & \textbf{0.17} & \textbf{0.05} & \textbf{0.32} & \textbf{0.09} & \textbf{0.21} & \textbf{0.07}  
    \end{tabular}
    \end{subtable}
    \begin{subtable}{\linewidth}\centering
    \caption{Dense signal}
    \begin{tabular}{lllllllll}
     & \multicolumn{2}{c}{Scenario 1} & \multicolumn{2}{c}{Scenario 2} & \multicolumn{2}{c}{Scenario 3} & \multicolumn{2}{c}{Scenario 4} \\
     & $n$=100 & 500 & 100 & 500 & 100 & 500 & 100 & 500 \\\hline
        FLAM & 0.59 & 0.17 & 0.65 & 0.24  & 0.53 & 0.16 & 0.76 & 0.36 \\
        AMC100 (ours) & 1.20 & 0.91 & 0.47 & 0.39  & 0.87 & 0.57 & 0.30 & 0.30 \\
        Symmetrized AMC100 (ours) & \textbf{1.16} & \textbf{0.84} & \textbf{0.45} & \textbf{0.37} & \textbf{0.83} & \textbf{0.52} & \textbf{0.29} & 0.30  \\
        AMC500 (ours) & 0.58 & 0.15 & 0.37 & 0.08  & 0.46 & 0.12 & 0.36 & 0.09 \\
        Symmetrized AM5100 (ours) & \textbf{0.55} & 0.15 & \textbf{0.36} & 0.08 & \textbf{0.43} & \textbf{0.11} & \textbf{0.34} & 0.09 
    \end{tabular}
    \end{subtable}
    \caption{MSEs based on data sets of size $n$ in the FLAM settings. The Monte Carlo standard errors for the MSEs of FLAM and (symmetrized) AMC are all less than 0.04 and 0.01, respectively. Symmetrized AMC100 entries appear in bold when they had lower MSE (rounded to the nearest hundredth) than the corresponding AMC100 entry, and vice versa. Similarly, symmetrized AMC500 entries appear in bold when they had lower MSE than the corresponding AMC500 entry, and vice versa.}
    \label{tab:flamPetersens5symm}
\end{table*}

\bibliography{minipred}

\begin{thebibliography}{66}
\providecommand{\natexlab}[1]{#1}
\providecommand{\url}[1]{\texttt{#1}}
\expandafter\ifx\csname urlstyle\endcsname\relax
  \providecommand{\doi}[1]{doi: #1}\else
  \providecommand{\doi}{doi: \begingroup \urlstyle{rm}\Url}\fi

\bibitem[Berger(1985)]{Berger1985}
J.~O. Berger.
\newblock \emph{Statistical Decision Theory and Bayesian Analysis}.
\newblock Springer Science \& Business Media, 1985.

\bibitem[Bertinetto et~al.(2018)Bertinetto, Henriques, Torr, and
  Vedaldi]{Bertinettoetal2018}
L.~Bertinetto, J.~F. Henriques, P.~H. Torr, and A.~Vedaldi.
\newblock Meta-learning with differentiable closed-form solvers.
\newblock \emph{arXiv preprint arXiv:1805.08136}, 2018.

\bibitem[Billingsley(1999)]{Billingsley1999}
P.~Billingsley.
\newblock \emph{Convergence of probability measures}.
\newblock Wiley, 1999.

\bibitem[Bosc(2016)]{Bosc2016}
T.~Bosc.
\newblock Learning to learn neural networks.
\newblock \emph{arXiv preprint arXiv:1610.06072}, 2016.

\bibitem[Breiman(1996)]{breiman1996stacked}
L.~Breiman.
\newblock Stacked regressions.
\newblock \emph{Machine learning}, 24\penalty0 (1):\penalty0 49--64, 1996.

\bibitem[Breiman(2001)]{Breiman2001}
L.~Breiman.
\newblock Random forests.
\newblock \emph{Machine learning}, 45\penalty0 (1):\penalty0 5--32, 2001.

\bibitem[Brooks et~al.(1989)Brooks, Pope, and Marcolini]{brooks1989airfoil}
T.~F. Brooks, D.~S. Pope, and M.~A. Marcolini.
\newblock Airfoil self-noise and prediction.
\newblock 1989.

\bibitem[Cassotti et~al.(2015)Cassotti, Ballabio, Todeschini, and
  Consonni]{cassotti2015similarity}
M.~Cassotti, D.~Ballabio, R.~Todeschini, and V.~Consonni.
\newblock A similarity-based qsar model for predicting acute toxicity towards
  the fathead minnow (pimephales promelas).
\newblock \emph{SAR and QSAR in Environmental Research}, 26\penalty0
  (3):\penalty0 217--243, 2015.

\bibitem[Chamberlain(2000)]{Chamberlain2000}
G.~Chamberlain.
\newblock Econometric applications of maxmin expected utility.
\newblock \emph{Journal of Applied Econometrics}, 15\penalty0 (6):\penalty0
  625--644, 2000.

\bibitem[Chang(2006)]{Chang2006}
K.-C. Chang.
\newblock \emph{Methods in nonlinear analysis}.
\newblock Springer Science \& Business Media, 2006.

\bibitem[Cohn(2013)]{Cohn2013}
D.~L. Cohn.
\newblock \emph{Measure theory}.
\newblock Springer, 2013.

\bibitem[Conway(2010)]{conway2010course}
J.~B. Conway.
\newblock \emph{A course in functional analysis}, volume~96.
\newblock Springer, 2010.

\bibitem[Cybenko(1989)]{Cybenko1989}
G.~Cybenko.
\newblock Approximation by superpositions of a sigmoidal function.
\newblock \emph{Mathematics of control, signals and systems}, 2\penalty0
  (4):\penalty0 303--314, 1989.

\bibitem[Dalvi et~al.(2004)Dalvi, Domingos, Sanghai, and Verma]{Dalvietal2004}
N.~Dalvi, P.~Domingos, S.~Sanghai, and D.~Verma.
\newblock Adversarial classification.
\newblock In \emph{Proceedings of the tenth ACM SIGKDD international conference
  on Knowledge discovery and data mining}, pages 99--108, 2004.

\bibitem[Day(1961)]{Day1961}
M.~M. Day.
\newblock Fixed-point theorems for compact convex sets.
\newblock \emph{Illinois Journal of Mathematics}, 5\penalty0 (4):\penalty0
  585--590, 1961.

\bibitem[Dua and Graff(2017)]{Dua2019}
D.~Dua and C.~Graff.
\newblock {UCI} machine learning repository, 2017.
\newblock URL \url{http://archive.ics.uci.edu/ml}.

\bibitem[Efron and Morris(1972)]{efron1972limiting}
B.~Efron and C.~Morris.
\newblock Limiting the risk of bayes and empirical bayes estimators—part ii:
  The empirical bayes case.
\newblock \emph{Journal of the American Statistical Association}, 67\penalty0
  (337):\penalty0 130--139, 1972.

\bibitem[Fan(1953)]{Fan1953}
K.~Fan.
\newblock Minimax theorems.
\newblock \emph{Proceedings of the National Academy of Sciences of the United
  States of America}, 39\penalty0 (1):\penalty0 42, 1953.

\bibitem[Fiez et~al.(2019)Fiez, Chasnov, and Ratliff]{Fiezetal2019}
T.~Fiez, B.~Chasnov, and L.~J. Ratliff.
\newblock Convergence of learning dynamics in stackelberg games.
\newblock \emph{arXiv preprint arXiv:1906.01217}, 2019.

\bibitem[Finn et~al.(2017)Finn, Abbeel, and Levine]{Finnetal2017}
C.~Finn, P.~Abbeel, and S.~Levine.
\newblock Model-agnostic meta-learning for fast adaptation of deep networks.
\newblock In \emph{Proceedings of the 34th International Conference on Machine
  Learning-Volume 70}, pages 1126--1135. JMLR. org, 2017.

\bibitem[Friedman(2001)]{Friedman2001}
J.~H. Friedman.
\newblock Greedy function approximation: a gradient boosting machine.
\newblock \emph{Annals of statistics}, pages 1189--1232, 2001.

\bibitem[Geng et~al.(2020)Geng, Nassif, Manzanares, Reppen, and
  Sircar]{geng2020deep}
S.~Geng, H.~Nassif, C.~A. Manzanares, A.~M. Reppen, and R.~Sircar.
\newblock Deep pqr: Solving inverse reinforcement learning using anchor
  actions.
\newblock \emph{arXiv e-prints}, pages arXiv--2007, 2020.

\bibitem[Gerritsma et~al.(1981)Gerritsma, Onnink, and
  Versluis]{gerritsma1981geometry}
J.~Gerritsma, R.~Onnink, and A.~Versluis.
\newblock Geometry, resistance and stability of the delft systematic yacht hull
  series.
\newblock \emph{International shipbuilding progress}, 28\penalty0
  (328):\penalty0 276--297, 1981.

\bibitem[Glynn(1987)]{Glynn1987}
P.~W. Glynn.
\newblock Likelihood ratio gradient estimation: an overview.
\newblock In \emph{Proceedings of the 19th conference on Winter simulation},
  pages 366--375. ACM, 1987.

\bibitem[Goldblum et~al.(2019)Goldblum, Fowl, and Goldstein]{Goldblumetal2019}
M.~Goldblum, L.~Fowl, and T.~Goldstein.
\newblock Adversarially robust few-shot learning: A meta-learning approach.
\newblock \emph{arXiv preprint arXiv:1910.00982v2}, 2019.

\bibitem[Goodfellow et~al.(2014)Goodfellow, Shlens, and
  Szegedy]{Goodfellowetal2014}
I.~J. Goodfellow, J.~Shlens, and C.~Szegedy.
\newblock Explaining and harnessing adversarial examples.
\newblock \emph{arXiv preprint arXiv:1412.6572}, 2014.

\bibitem[Guo and Zhao(2019)]{guo2019improvement}
W.~Guo and G.~Zhao.
\newblock An improvement on the relatively compactness criteria.
\newblock \emph{arXiv preprint arXiv:1904.03427}, 2019.

\bibitem[Hartford et~al.(2018)Hartford, Graham, Leyton-Brown, and
  Ravanbakhsh]{Hartford2018}
J.~Hartford, D.~R. Graham, K.~Leyton-Brown, and S.~Ravanbakhsh.
\newblock Deep models of interactions across sets.
\newblock \emph{arXiv preprint arXiv:1803.02879}, 2018.

\bibitem[Heusel et~al.(2017)Heusel, Ramsauer, Unterthiner, Nessler, and
  Hochreiter]{Heuseletal2017}
M.~Heusel, H.~Ramsauer, T.~Unterthiner, B.~Nessler, and S.~Hochreiter.
\newblock Gans trained by a two time-scale update rule converge to a local nash
  equilibrium.
\newblock In \emph{Advances in neural information processing systems}, pages
  6626--6637, 2017.

\bibitem[Hochreiter and Schmidhuber(1997)]{Hochreiter&Schmidhuber1997}
S.~Hochreiter and J.~Schmidhuber.
\newblock Long short-term memory.
\newblock \emph{Neural computation}, 9\penalty0 (8):\penalty0 1735--1780, 1997.

\bibitem[Hochreiter et~al.(2001)Hochreiter, Younger, and
  Conwell]{Hochreiteretal2001}
S.~Hochreiter, A.~S. Younger, and P.~R. Conwell.
\newblock Learning to learn using gradient descent.
\newblock In \emph{International Conference on Artificial Neural Networks},
  pages 87--94. Springer, 2001.

\bibitem[Hornik(1991)]{Hornik1991}
K.~Hornik.
\newblock Approximation capabilities of multilayer feedforward networks.
\newblock \emph{Neural networks}, 4\penalty0 (2):\penalty0 251--257, 1991.

\bibitem[Hunt and Stein(1946)]{Hunt&Stein1946}
G.~Hunt and C.~Stein.
\newblock Most stringent tests of statistical hypotheses.
\newblock \emph{Unpublished manuscript}, 1946.

\bibitem[Kempthorne(1987)]{Kempthorne1987}
P.~J. Kempthorne.
\newblock Numerical specification of discrete least favorable prior
  distributions.
\newblock \emph{SIAM Journal on Scientific and Statistical Computing},
  8\penalty0 (2):\penalty0 171--184, 1987.

\bibitem[Kingma and Ba(2014)]{Kingma&Ba2014}
D.~P. Kingma and J.~Ba.
\newblock Adam: A method for stochastic optimization.
\newblock \emph{arXiv preprint arXiv:1412.6980}, 2014.

\bibitem[Le~Cam(2012)]{LeCam2012}
L.~Le~Cam.
\newblock \emph{Asymptotic methods in statistical decision theory}.
\newblock Springer Science \& Business Media, 2012.

\bibitem[Lee et~al.(2019)Lee, Maji, Ravichandran, and Soatto]{Leeetal2019}
K.~Lee, S.~Maji, A.~Ravichandran, and S.~Soatto.
\newblock Meta-learning with differentiable convex optimization.
\newblock In \emph{Proceedings of the IEEE Conference on Computer Vision and
  Pattern Recognition}, pages 10657--10665, 2019.

\bibitem[Lin et~al.(2019)Lin, Jin, and Jordan]{Linetal2019}
T.~Lin, C.~Jin, and M.~I. Jordan.
\newblock On gradient descent ascent for nonconvex-concave minimax problems.
\newblock \emph{arXiv preprint arXiv:1906.00331v6}, 2019.

\bibitem[Luedtke et~al.(2020)Luedtke, Carone, Simon, and
  Sofrygin]{Luedtkeetal2019}
A.~Luedtke, M.~Carone, N.~R. Simon, and O.~Sofrygin.
\newblock Learning to learn from data: using deep adversarial learning to
  construct optimal statistical procedures.
\newblock \emph{Science Advances \textnormal{(in press; available online late
  Feb or Mar 2020)}}, 2020.

\bibitem[Maron et~al.(2019)Maron, Fetaya, Segol, and Lipman]{Maron2019}
H.~Maron, E.~Fetaya, N.~Segol, and Y.~Lipman.
\newblock On the universality of invariant networks.
\newblock \emph{arXiv preprint arXiv:1901.09342}, 2019.

\bibitem[Munkres(2000)]{Munkres2000}
J.~Munkres.
\newblock \emph{Topology}.
\newblock Featured Titles for Topology Series. Prentice Hall, Incorporated,
  2000.
\newblock ISBN 9780131816299.
\newblock URL \url{https://books.google.com/books?id=XjoZAQAAIAAJ}.

\bibitem[Nabi et~al.(2020)Nabi, Nassif, Hong, Mamani, and
  Imbens]{nabi2020decoupling}
S.~Nabi, H.~Nassif, J.~Hong, H.~Mamani, and G.~Imbens.
\newblock Decoupling learning rates using empirical bayes priors.
\newblock \emph{arXiv preprint arXiv:2002.01129}, 2020.

\bibitem[Nelder and Wedderburn(1972)]{Nelder&Wedderburn1972}
J.~A. Nelder and R.~W. Wedderburn.
\newblock Generalized linear models.
\newblock \emph{Journal of the Royal Statistical Society: Series A (General)},
  135\penalty0 (3):\penalty0 370--384, 1972.

\bibitem[Nelson(1966)]{Nelson1966}
W.~Nelson.
\newblock Minimax solution of statistical decision problems by iteration.
\newblock \emph{The Annals of Mathematical Statistics}, pages 1643--1657, 1966.

\bibitem[Noubiap and Seidel(2001)]{Noubiap&Seidel2001}
R.~F. Noubiap and W.~Seidel.
\newblock An algorithm for calculating $\gamma$-minimax decision rules under
  generalized moment conditions.
\newblock \emph{The Annals of Statistics}, 29\penalty0 (4):\penalty0
  1094--1116, 2001.

\bibitem[Pedregosa et~al.(2011)Pedregosa, Varoquaux, Gramfort, Michel, Thirion,
  Grisel, Blondel, Prettenhofer, Weiss, Dubourg, Vanderplas, Passos,
  Cournapeau, Brucher, Perrot, and Duchesnay]{scikit-learn}
F.~Pedregosa, G.~Varoquaux, A.~Gramfort, V.~Michel, B.~Thirion, O.~Grisel,
  M.~Blondel, P.~Prettenhofer, R.~Weiss, V.~Dubourg, J.~Vanderplas, A.~Passos,
  D.~Cournapeau, M.~Brucher, M.~Perrot, and E.~Duchesnay.
\newblock Scikit-learn: Machine learning in {P}ython.
\newblock \emph{Journal of Machine Learning Research}, 12:\penalty0 2825--2830,
  2011.

\bibitem[Petersen(2018)]{flamPackage}
A.~Petersen.
\newblock \emph{flam: Fits Piecewise Constant Models with Data-Adaptive Knots},
  2018.
\newblock URL \url{https://CRAN.R-project.org/package=flam}.
\newblock R package version 3.2.

\bibitem[Petersen et~al.(2016)Petersen, Witten, and Simon]{Petersenetal2016}
A.~Petersen, D.~Witten, and N.~Simon.
\newblock Fused lasso additive model.
\newblock \emph{Journal of Computational and Graphical Statistics}, 25\penalty0
  (4):\penalty0 1005--1025, 2016.

\bibitem[Pier(1984)]{Pier1984}
J.-P. Pier.
\newblock \emph{Amenable locally compact groups}.
\newblock Wiley-Interscience, 1984.

\bibitem[Ravanbakhsh et~al.(2016)Ravanbakhsh, Schneider, and
  Poczos]{Ravanbakhsh2016}
S.~Ravanbakhsh, J.~Schneider, and B.~Poczos.
\newblock Deep learning with sets and point clouds.
\newblock \emph{arXiv preprint arXiv:1611.04500}, 2016.

\bibitem[Ravanbakhsh et~al.(2017)Ravanbakhsh, Schneider, and
  Poczos]{Ravanbakhsh2017}
S.~Ravanbakhsh, J.~Schneider, and B.~Poczos.
\newblock Equivariance through parameter-sharing.
\newblock In \emph{Proceedings of the 34th International Conference on Machine
  Learning-Volume 70}, pages 2892--2901. JMLR. org, 2017.

\bibitem[Ravi and Larochelle(2017)]{Ravi&Larochelle2016}
S.~Ravi and H.~Larochelle.
\newblock Optimization as a model for few-shot learning.
\newblock In \emph{International Conference on Learning Representations
  (ICLR)}, 2017.

\bibitem[Russell(1998)]{russell1998learning}
S.~Russell.
\newblock Learning agents for uncertain environments.
\newblock In \emph{Proceedings of the eleventh annual conference on
  Computational learning theory}, pages 101--103, 1998.

\bibitem[Santoro et~al.(2016)Santoro, Bartunov, Botvinick, Wierstra, and
  Lillicrap]{Santoroetal2016}
A.~Santoro, S.~Bartunov, M.~Botvinick, D.~Wierstra, and T.~Lillicrap.
\newblock Meta-learning with memory-augmented neural networks.
\newblock In \emph{International conference on machine learning}, pages
  1842--1850, 2016.

\bibitem[Schafer and Stark(2009)]{Schafer2009}
C.~M. Schafer and P.~B. Stark.
\newblock Constructing confidence regions of optimal expected size.
\newblock \emph{Journal of the American Statistical Association}, 104\penalty0
  (487):\penalty0 1080--1089, 2009.

\bibitem[Schmidhuber(1987)]{Schmidhuber1987}
J.~Schmidhuber.
\newblock \emph{Evolutionary principles in self-referential learning, or on
  learning how to learn: the meta-meta-... hook}.
\newblock PhD thesis, Technische Universit{\"a}t M{\"u}nchen, 1987.

\bibitem[Terkelsen(1973)]{Terkelsen1973}
F.~Terkelsen.
\newblock Some minimax theorems.
\newblock \emph{Mathematica Scandinavica}, 31\penalty0 (2):\penalty0 405--413,
  1973.

\bibitem[Thrun and Pratt(1998)]{Thrun&Pratt1998}
S.~Thrun and L.~Pratt.
\newblock Learning to learn: Introduction and overview.
\newblock In \emph{Learning to learn}, pages 3--17. Springer, 1998.

\bibitem[Tibshirani(1996)]{Tibshirani1996}
R.~Tibshirani.
\newblock Regression shrinkage and selection via the lasso.
\newblock \emph{Journal of the Royal Statistical Society: Series B
  (Methodological)}, 58\penalty0 (1):\penalty0 267--288, 1996.

\bibitem[Van~der Laan et~al.(2007)Van~der Laan, Polley, and
  Hubbard]{van2007super}
M.~J. Van~der Laan, E.~C. Polley, and A.~E. Hubbard.
\newblock Super learner.
\newblock \emph{Statistical applications in genetics and molecular biology},
  6\penalty0 (1), 2007.

\bibitem[Van~der Vaart et~al.(2006)Van~der Vaart, Dudoit, and van~der
  Laan]{van2006oracle}
A.~W. Van~der Vaart, S.~Dudoit, and M.~J. van~der Laan.
\newblock Oracle inequalities for multi-fold cross validation.
\newblock \emph{Statistics and Decisions}, 24\penalty0 (3):\penalty0 351--371,
  2006.

\bibitem[van Gaans(2003)]{vanGaans2003}
O.~van Gaans.
\newblock Probability measures on metric spaces.
\newblock Technical report, Technical report, Delft University of Technology,
  2003.

\bibitem[Vilalta and Drissi(2002)]{Vilaltaetal2002}
R.~Vilalta and Y.~Drissi.
\newblock A perspective view and survey of meta-learning.
\newblock \emph{Artificial intelligence review}, 18\penalty0 (2):\penalty0
  77--95, 2002.

\bibitem[Vinyals et~al.(2016)Vinyals, Blundell, Lillicrap, and
  Wierstra]{Vinyalsetal2016}
O.~Vinyals, C.~Blundell, T.~Lillicrap, and D.~Wierstra.
\newblock Matching networks for one shot learning.
\newblock In \emph{Advances in neural information processing systems}, pages
  3630--3638, 2016.

\bibitem[Yin et~al.(2018)Yin, Tang, Xu, and Wang]{Yinetal2018}
C.~Yin, J.~Tang, Z.~Xu, and Y.~Wang.
\newblock Adversarial meta-learning.
\newblock \emph{arXiv preprint arXiv:1806.03316}, 2018.

\bibitem[Zaheer et~al.(2017)Zaheer, Kottur, Ravanbakhsh, Poczos, Salakhutdinov,
  and Smola]{Zaheer2017}
M.~Zaheer, S.~Kottur, S.~Ravanbakhsh, B.~Poczos, R.~R. Salakhutdinov, and A.~J.
  Smola.
\newblock Deep sets.
\newblock In \emph{Advances in neural information processing systems}, pages
  3391--3401, 2017.

\end{thebibliography}

\end{document}